%% file: main.tex
\documentclass[dvipsnames,format=sigconf,anonymous=false,review=false]{acmart}
\AtBeginDocument{%
  }

\copyrightyear{2026}
\acmYear{2026}
\setcopyright{cc}
\setcctype{by}
\acmConference[GECCO '26]{Genetic and Evolutionary Computation Conference}{July 13--17, 2026}{San Jose, Costa Rica}
\acmBooktitle{Genetic and Evolutionary Computation Conference (GECCO '26), July 13--17, 2026, San Jose, Costa Rica}
\acmDOI{10.1145/3795095.3805066}
\acmISBN{979-8-4007-2487-9/2026/07}

\input{_used_packages.tex}
\input{_macros.tex}

\begin{document}

\title{Anytime Analysis on \BinVal: Adaptive Parameters Help}

\author{Timo Kötzing}
\email{Timo.Koetzing@hpi.de}
\orcid{0000-0002-1028-5228}
\affiliation{%
  \institution{Hasso Plattner Institute\\ University of Potsdam}
  \city{Potsdam}
  \country{Germany}
}

\author{Jurek Sander}
\email{Jurek.Sander@hpi.de}
\orcid{0009-0004-6707-6592}
\affiliation{%
  \institution{Hasso Plattner Institute\\ University of Potsdam}
  \city{Potsdam}
  \country{Germany}
}

\renewcommand{\shortauthors}{xxx}

\begin{abstract}
  While most theoretical run time analyses of discrete randomized search heuristics provide bounds on the expected number of evaluations to find the global optimum, we consider the anytime performance of evolutionary and estimation-of-distribution algorithms.
  For this purpose, we analyze the fixed-target run time of various algorithms using \BinVal as fitness function and bound the run time to optimize the most significant $k \in o(n)$ bits of a bit string with length $n$.
  We analyze the run times such that they hold not only for a fixed $k$, but simultaneously for all $k \in o(n)$.
    
  For the standard \EA with fixed mutation rate $1/n$, we show that the fixed-target run time for all $k \in o(n)$ is in $\Theta(n \log k)$.
  Using an EDA instead, we get an expected number of evaluations of $\On(k \log n)$ for the sig-cGA.
  Replacing in the standard \EA the fixed mutation rate with a self-adjusting rate, we show that the fixed-target run time for $k \in o(n)$ and a constant $\varepsilon >0$ arbitrarily close to zero is in~$\On\Brackets{k^{1+\varepsilon}}$ for this algorithm.
  In particular, this run time is independent of $n$, holds simultaneously for all $k \in o(n)$, and is close to the run time of $\Theta(k \log k)$ for the \EA with the best fixed mutation rate if $k$ is known.
\end{abstract}

\begin{CCSXML}
<ccs2012>
   <concept>
       <concept_id>10003752.10010070.10011796</concept_id>
       <concept_desc>Theory of computation~Theory of randomized search heuristics</concept_desc>
       <concept_significance>300</concept_significance>
       </concept>
 </ccs2012>
\end{CCSXML}

\ccsdesc[300]{Theory of computation~Theory of randomized search heuristics}

\keywords{Run Time Analysis, Anytime Analysis, BinVal, Self-Adjusting}

\maketitle

\input{sect_Introduction}
\input{sect_Preliminaries}
\input{sect_EFHT_EA}
\input{sect_EFHT_sig-cGA}
\input{sect_EFHT_EA_AdjustingMR}
\input{sect_EA_Self-Adjusting-MR}
\input{sect_Experiments}

\input{sect_Conclusion}
\clearpage
\bibliographystyle{ACM-Reference-Format}
\bibliography{_ref.bib}

\section*{Acknowledgements}
This research was (partially) funded by the HPI Research School on Foundations of AI (FAI).

\clearpage
\newpage
\appendix
\input{sect_ExperimentsAppendix}

\end{document}

%% file: _used_packages.tex
\usepackage{graphicx}
\usepackage[vlined,linesnumbered, ruled]{algorithm2e}
\usepackage[createShortEnv]{proof-at-the-end}
\usepackage{cleveref}

%% file: _macros.tex
\newcommand{\N}{\mathbb{N}}
\newcommand{\natnum}{\mathbb{N}}
\newcommand{\R}{\mathbb{R}}

\newcommand{\On}{\mathcal{O}}

\newcommand{\ExMath}[1]{\mathrm{E}\left[#1\right]}
\newcommand{\PrMath}[1]{\mathrm{Pr}\left[#1\right]}
\newcommand{\OMath}[1]{\mathcal{O}\left(#1\right)}

\SetKwRepeat{Do}{do}{while}%
\SetKwIF{WithProb}{ElseWithProb}{Else}{with probability}{do}{else with probability}{else}{end}
\newcommand{\assign}{\leftarrow}

\definecolor{mygreen}{RGB}{1, 150, 122}

\providecommand{\ignore}[1]{}

\newcommand{\OneMax}{\textsc{One\-Max}\xspace}
\newcommand{\BinVal}{\textsc{Bin\-Val}\xspace}
\newcommand{\LeadingOnes}
{\textsc{Leading\-Ones}\xspace}
\newcommand{\EA}{(1+1)~EA\xspace}

\renewcommand{\epsilon}{\varepsilon}

\newcommand{\Brackets}[1]{\left ( #1 \right )}
\newcommand{\SqBrackets}[1]{\left [ #1 \right ]}
\newcommand{\CurlyBrackets}[1]{\left \{ #1 \right \}}
\newcommand{\DoubleVert}[1]{\left \lvert \left \lvert #1 \right \rvert \right \rvert}


%% file: sect_Introduction.tex
\section{Introduction}
\label{sect:intro}
Randomized search heuristics, such as evolutionary algorithms, iteratively generate and evaluate solutions to guide the search towards the global optimum while treating the underlying fitness function as a black box.
The iterative search approach leads to the \emph{anytime} property of evolutionary algorithms, as they produce a solution at any time during their optimization process, and the quality of the solution improves over time \cite{buzdalovdoerrfixedtarget2020,rowe_evo_anytime_2025}.
Utilizing this property and stopping the algorithm early is especially beneficial in settings where the most significant progress occurs at the beginning of the optimization, and afterwards the solution is only fine-tuned. In particular, for difficult optimization problems, finding the global optimum is typically neither feasible nor detectable. In this case, \emph{stopping criteria} can be used to determine that the algorithm obtained a sufficient quality. 
Quantifying the \emph{anytime} performance in this setting has thus also practical relevance for understanding stopping criteria, so that the algorithm uses significantly fewer fitness evaluations than it would require to obtain the optimal solution, but for which a sufficiently good solution can be obtained \cite{buzdalovdoerrfixedtarget2020, Zilberstein_1996,RoweTerminationCrit}.

While most theoretical analyses of randomized search heuristics focus on the expected time to find the global optimum, there are two less studied approaches: the \emph{fixed-budget} and \emph{fixed-target} analyses~\cite{buzdalovdoerrfixedtarget2020}.
For the \emph{fixed-budget analysis}, a number of fitness evaluations (``budget'') is fixed, and the expected quality of the best solution found using this budget is analyzed \cite{jansen_zarges_fixed_budget_2012,buzdalovdoerrfixedtarget2020}.
In contrast, for the \emph{fixed-target analysis}, a quality threshold is defined, and the expected number of fitness evaluations required by the algorithms to find a solution that meets this threshold is studied \cite{buzdalovdoerrfixedtarget2020, rowe_evo_anytime_2025}.

In this work, we consider the search space $\{0,1\}^n$. We are interested in the anytime performance of randomized search heuristics: for a family of fitness targets $(x_k)_k$, we aim to bound the expected number of fitness evaluations until the algorithm obtains a fitness better than $x_k$, in dependence on $n$ and $k$. Note that, while the algorithm has access to $n$ (and can make decisions based on $n$, for example using parameters in dependence on $n$), it has no access to~$k$ and the performance is measured for all $k$ ``simultaneously'' in this sense.\footnote{This is in contrast with algorithms that are given an optimization target and can optimize their behavior to this single goal.}

We analyze the performance on the ``binary value'' function \BinVal. As the name suggests, this fitness function assigns each bit string of fixed length $n$ its binary value:
\begin{align}
    \BinVal\colon \{0,1\}^n \rightarrow \mathbb{R}, x \rightarrow \sum_{i=1}^n 2^{n-i}x_i. \label{eq:binval}
\end{align}
Intuitively, this linear function weights the bits of the given bit string exponentially, such that each bit has a larger weight than all less significant bits combined.
Therefore, it models a scenario in which the anytime performance is relevant, as optimizing the most significant bits is sufficient to obtain a solution close to the optimum. Thus, as anytime targets we consider $\left(\sum_{i=1}^k 2^{n-i}\right)_k$, which corresponds exactly to setting the first $k$ bits correctly and analyzing the time to find the $k \in o(n)$ most significant bits.

We start by considering the well-studied \EA (see Section~\ref{sect:prelim}) in this setting. 
Since, with \BinVal as fitness function, less significant bits do not influence the optimization of the most relevant $k$ bits, optimizing the first $k$ bits of a $n$ bit problem is equivalent to the problem of finding the optimum of a $k$ bit problem for the \EA with fixed mutation rate.
Thus, it is possible to use the tight bounds of Witt in~\cite{WITT_2013} for the anytime performance of the algorithm, as displayed in the first three lines of Table~\ref{tab:expRuntimeResults}.
While the work~\cite{buzdalovdoerr_ftrwthresamp_2019} gives, directly in the anytime setting, tight lower bounds for a variant of the \EA, the upper bound gives the weak $O(n \log n)$ for a mutation rate of $1/n$. In Section~\ref{sect:anytime_basic_ea}, we provide proofs for simple and tight bounds for the \EA for the sake of completeness.
As can be seen, the algorithm with a fixed mutation rate of $1/n$ has a performance of $\Theta(n \log k)$ to optimize the first $k$ bits. Especially for very small targets $k$, this is a very long time (linear in $n$), and we are interested in algorithms where the dependence on $n$ is reduced or entirely removed.
Note that the desire is similar to that in the setting of ``unknown solution length'' studied in~\cite{cathabard_lehre_yao_unknown_solution_lenghts_2011,doerr_kötzing_unknown_sol_length_2015}, but different in that we have not just one unknown target.
In~\cite{EINARSSON2019150}, the authors present the ``hidden subset problem'', which is formally equivalent to the anytime performance problem for \BinVal.
Einarsson et al. study different ways of choosing the mutation rate of the \EA for this setting and prove that there is a scheduled sequence of mutation rates which achieves an anytime performance of $\Theta(k \log k)$ on \BinVal.
The same applies to an adaptive variant of the algorithm.

As a related result for the fixed-target analysis, using the fitness levels in \cite{sudholt_lower_bounds_running_time_2013}, Lengler and Spooner provide in \cite{lengler_spooner_fixed_budget_linear_functions_2015} a lower bound for the expected fixed-target results of the fitness function \OneMax.
The corresponding upper bounds for different variants of the \EA are given in \cite{pinto_doerr_more_practice_aware_2018}.

For the fixed-target analysis of linear functions using genetic and estimation-of-distribution algorithms (EDAs), Witt showed in~\cite{witt_domino_2018} that the \emph{domino convergence} effect holds for these algorithms and leads to an earlier optimization of bits of large weight than bits of low weight in the linear function.

In this paper, we study the anytime performance of various randomized search heuristics on \BinVal, with a focus on variants of the \EA.
The results of our analysis are summarized in Table~\ref{tab:expRuntimeResults}, which shows the expected number of fitness evaluations to optimize the $k \in o(n)$ most significant bits simultaneously for all $k$.
Our results significantly extend the existing fixed-target analysis for \BinVal and are structured as follows.

After studying the \EA in Section~\ref{sect:anytime_basic_ea}, in Section~\ref{sect:sig-cGA} we consider EDAs.
Among these algorithms, the sig-cGA \cite{doerr2018significance} appears to be particularly well-suited for this problem.
We extend the theoretical analysis given in \cite{doerr2018significance}, and get a fixed target run time of~$\Theta(k \log n)$ for the sig-cGA on  \BinVal.
While its dependence on $n$ is exponentially smaller than that of the \EA, it still depends on $n$, which has an especially significant impact for large $n$.

In order to overcome the dependency on $n$, we introduce in Section~\ref{sect:ea-adjusting} an idealized \EA with adjusting mutation rate, which iteratively sets its mutation rate such that it is optimal to optimize the next block of not yet optimized most significant bits.
The algorithm definition does not depend on a fixed target $k$, but requires the number of already optimized leading bits of the bit string in every iteration.
We prove that the algorithm achieves the fixed-target run time $\Theta(k\log k)$ for all $k$.

To generalize the \EA with adjusting mutation rate and its theoretical analysis further, we define, similar to \cite{doerr2019self}, the \EA with self-adjusting mutation rate that updates the mutation rate depending on the acceptance or rejection of the generated mutant in every iteration and does not require any knowledge about~$k$ or the current best solution.
For the self-adjusting \EA, we upper bound in Section~\ref{sect:ea-self-adj} the expected run time by $\OMath{k^{1+\varepsilon}}$ to optimize the~$k\in o(n)$ most significant bits with $\varepsilon:=1/\log(1/\gamma)$ with $\gamma$ as a constant and where $\varepsilon> 0$ can be chosen arbitrarily close to zero.

In Section~\ref{sect:experiments}, we compare experimentally the anytime performance of the standard \EA with its two studied variants.

As the main technique for our run time proofs of the fixed-target analyses, we use the drift theory, originally introduced by He and Yao in \cite{HE_yao_drift_2001,he_yao_drift_study_2004}.
Since we use \BinVal as the fitness function of the analyzed randomized search heuristics, we must consider a few properties that make the analysis more difficult and complex than analyses of algorithms optimizing \LeadingOnes or \OneMax.
In comparison to \LeadingOnes, for \BinVal, it is not necessary that the optimization follows strictly from the most significant to the least significant bit, and every not optimized bit can be responsible for the next progress.
But since the bit of the leftmost flip has a larger weight than all the less significant bits combined, the total number of optimized bits can be reduced even when the generated mutant gets accepted.
In comparison to the analysis of \OneMax, this requires a more advanced analysis and specially designed potential functions, especially for larger mutation rates \cite{WITT_2013, droste2002analysis1p1ea,doerr_goldberg_adaptive_drift_analysis_2010}.
In \cite{doerr2019self}, Doerr et al.\ analyze the optimization and adjustment of the mutation rate of the \EA with self-adjusting mutation rate separately.
In contrast, we design for the fixed-target analysis of the \EA with self-adjusting mutation rate on \BinVal, using a similar idea to \cite{doerr_koetzing_static_and_self_adjusting_2018}, a combined potential function that results in handling the optimization of the bit string and adjustment of the mutation rate simultaneously.
This makes the entire analysis more complex, but yields a concise result that shows the fixed-target run time to be independent of $n$ for all $k \in o(n)$.

\begin{figure*}[htp]
    \bgroup
    \def\arraystretch{1.5}
    \setlength{\tabcolsep}{10pt}
    \begin{tabular}{l|llll}
    \begin{tabular}[c]{@{}l@{}}Randomized search heuristic\end{tabular} & Lower Bound & & Upper Bound &  \\ \hline
    \EA with $\chi = 1 / n$ & $\Omega\Brackets{n \log k}$ & \cite[Cor.~6.6]{WITT_2013} & $\On\Brackets{n \log k}$ & \cite[Cor.~4.2]{WITT_2013}\\
    \EA with $\chi = c / n$, $c > 0$, $k \leq n/c$ & $\Omega\Brackets{\frac{n\log k}{c}}$ & \cite[Cor.~6.6]{WITT_2013} & $\On\Brackets{\frac{n\log k}{c}}$ &\cite[Cor.~4.2]{WITT_2013}\\
    \EA with $0 < \chi < 1/2$, $k \leq 1/\chi$ & $\Omega\Brackets{\frac{\log k}{\chi}}$ & \cite[Cor.~6.6]{WITT_2013} & $\On\Brackets{\frac{\log k}{\chi}}$ &\cite[Cor.~4.2]{WITT_2013}\\
    sig-cGA & $\Omega\Brackets{k\log n}$ & [Lem.~\ref{lem:sig-cGA_spec_anytime_analysis:lower-bound}] & $\On(k \log n)$ &[Cor.~\ref{cor:sig-cGA_spec_anytime_analysis}]\\
    sig-cGA with $s(\epsilon, \mu) = \epsilon\max\CurlyBrackets{\sqrt{\mu\ln \tilde{k}},\ln \tilde{k}}$, $k \leq \tilde{k}$ & $\Omega\Brackets{k\log \tilde{k}}$ & [Cor.~\ref{cor:sig-cGA_spec_anytime_analysis-with-tilde-k:lower-bound}] & $\On\Brackets{k \log \tilde{k}}$ &[Cor.~\ref{cor:sig-cGA_spec_anytime_analysis-with-tilde-k}]\\
    \EA with adjusting MR & $\Omega(k \log k)$ & [Cor.~\ref{cor:ea-with-adj-mut-rate-run-time-low-bound}] & $\On(k \log k)$ & [Thm.~\ref{thm:ea-with-adj-mut-rate-run-time}]\\
    \EA with self-adjusting MR, $\varepsilon:=1/\log(1/\gamma)$ & & & $\On\Brackets{k^{1+\varepsilon}}$& [Thm.~\ref{theo_runtime_self_adj_mr}]
    \end{tabular}
    \captionsetup{type=table}
    \caption{Bounds of the fixed-target run time for the most significant $k \in o(n)$ bits simultaneously for different randomized search heuristics on the \BinVal function. All Landau notations depend only on the bit length $n\in \N$ and the number $k$ of the most significant bits that should be optimized. $\gamma$ is a fixed constant and $\varepsilon>0$ can be chosen arbitrarily close to zero.}
    \label{tab:expRuntimeResults}
    \Description[Overview of fixed-target run time results.]{Overview of the fixed-target run time results of all analyzed \EA variants and the sig-cGA.}
    \egroup
\end{figure*}

%% file: sect_Preliminaries.tex
\section{Preliminaries}
\label{sect:prelim}

As preliminaries, we need the definition of the standard \EA with its mutation operator, the fitness function \BinVal, and the drift theorems that we use in our analyses.

We define the mutation operator for the \EA as follows.

\begin{definition}
    For a given mutation rate $0 < \chi < 1$ and bit string $x \in \CurlyBrackets{0,1}^n$ with length $n \in \natnum$, the mutation operator $\textsc{Mutate}(x,\chi)$ flips each of the $n$ bits independently with probability $\chi$ and returns the ``so-generated'' mutant $y \in \CurlyBrackets{0,1}^n$ of~$x$.
    \label{def:prelim:mut-op}
\end{definition}

Using the mutation operator in Definition~\ref{def:prelim:mut-op} and a fitness function $f\colon \{0,1\}^n \rightarrow \mathbb{R}$, the \EA with fixed mutation rate is defined as in Algorithm~\ref{alg_ea}.
The standard \EA uses $\chi = 1/n$ as the mutation rate for bit strings of length $n$.

\begin{algorithm}
    \caption{\EA}\label{alg_ea}
        \textbf{Input:} $0 < \chi < 1$\;
        Sample $x \in \{0,1\}^n$ uniformly at random\;
        \For{$t=0$ \KwTo $\infty$}{
            $y \assign \textsc{Mutate}(x,\chi)$\;
            \If{$f(y) \geq f(x)$}{
                $x \assign y$\;
            }
        }
\end{algorithm}

In our analysis, we use \BinVal, as given in (\ref{eq:binval}), as the fitness function.
Since it is defined such that the bit with the smallest index~$1$ has the largest weight, and the weights decrease for larger indices, we use the term ``the first $k$ bits'' in the following to refer to the most significant $k$ bits of the bit string.
We assume that the strictly increasing indices of the bits in $x$ are ordered from left to right.

For our drift analysis, we first require the multiplicative drift theorem as proven in \cite{doerr2010multiplicative}.

\begin{theorem}
    Let $(X_t)_{t\in \N}$ be an integrable process over $\CurlyBrackets{0,1}\cup S$, where $S \subset \R_{>1}$, and let $T = \inf\CurlyBrackets{t\in \N\mid X_t = 0}$.
    Assume that there is a $\delta \in \R_+$ such that, for all $s \in S \cup \CurlyBrackets{1}$ and all $t < T$, it holds that
    \begin{align*}
        \ExMath{X_t - X_{t+1} \mid X_0,\ldots,X_t} \geq \delta X_t.
    \end{align*}
    Then
    \begin{align*}
        \ExMath{T} \leq \frac{1+\ln \ExMath{X_0}}{\delta}.
    \end{align*}
    \label{thm:prelim:mult}
\end{theorem}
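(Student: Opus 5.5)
The plan is to reduce the statement to the additive drift theorem of He and Yao~\cite{HE_yao_drift_2001,he_yao_drift_study_2004} by passing to a logarithmic potential. Define $g\colon \CurlyBrackets{0,1}\cup S \to \R_{\geq 0}$ by $g(0)=0$ and $g(x) = 1+\ln x$ for $x \geq 1$. Since the process lives on $\CurlyBrackets{0}\cup[1,\infty)$, we have $g(X_t) \geq 1$ whenever $X_t \neq 0$. Moreover, $g(X_t)=0$ exactly when $X_t=0$, so $T$ is also the hitting time of $0$ for $Y_t := g(X_t)$. If $Y_t$ has expected decrease at least $\delta$ in every step before $T$, additive drift gives $\ExMath{T} \leq \ExMath{Y_0}/\delta$.

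The key step is a pointwise inequality. For every $t<T$ (so $X_t \geq 1$), I claim
\begin{align*}
    g(X_t) - g(X_{t+1}) \geq \frac{X_t - X_{t+1}}{X_t}.
\end{align*}
I check the two possible cases for $X_{t+1}$.
\begin{itemize}
    \item If $X_{t+1}=0$, the left side is $1+\ln X_t \geq 1$, and the right side equals $1$.
    \item If $X_{t+1}\geq 1$, the left side is $\ln(X_t/X_{t+1})$. The elementary bound $\ln(a/b) \geq 1 - b/a$ for $a,b>0$ gives $\ln(X_t/X_{t+1}) \geq (X_t - X_{t+1})/X_t$.
\end{itemize}
Taking conditional expectation given $X_0,\ldots,X_t$, and using that $X_t$ is measurable with respect to this history, yields
\begin{align*}
    \ExMath{Y_t - Y_{t+1} \mid X_0,\ldots,X_t} \geq \frac{\ExMath{X_t - X_{t+1}\mid X_0,\ldots,X_t}}{X_t} \geq \delta .
\end{align*}
This is where the gap between $0$ and $1$ in the state space is essential: it guarantees the jump to $0$ is covered by the ``$+1$'' in $g$.

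The additive drift theorem then gives $\ExMath{T} \leq \ExMath{g(X_0)}/\delta$. I will treat the case $X_0 = 0$ separately: then $T=0$ and the bound is trivial. Since $g(x) \leq 1+\ln x$ also when $x\geq 1$, and $g$ is dominated there by the concave function $1+\ln(\cdot)$, Jensen's inequality gives $\ExMath{g(X_0)} \leq 1+\ln \ExMath{X_0}$. This holds at least when $X_0\geq 1$ almost surely, which is the relevant case. Combining the estimates yields $\ExMath{T} \leq (1+\ln\ExMath{X_0})/\delta$.

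The main obstacle is the technical applicability of additive drift. Integrability of $Y_t$ follows from $0 \leq g(x) \leq x$, since $1+\ln x \leq x$ for $x \geq 1$. The conditioning must be on the history of $X$ rather than only of $Y$; this is fine because $g$ is injective on the state space. I would use the standard version of additive drift for nonnegative processes conditioned on the full history. A more delicate point is the Jensen step when $X_0$ can be $0$ with positive probability. I would handle it by conditioning on $X_0$ and noting $\ExMath{T \mid X_0=0}=0$. It then remains to check that the resulting mixture is still bounded by $(1+\ln \ExMath{X_0})/\delta$ under the standing assumption $\ExMath{X_0}\geq 1$. If that check fails, I would simply state the theorem for deterministic $X_0 \geq 1$, which is the setting in which it is used later.
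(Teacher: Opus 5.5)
Your argument is essentially the proof the paper relies on. The paper gives no proof of its own and cites \cite{doerr2010multiplicative}. That proof uses the same potential $g(0)=0$, $g(x)=1+\ln x$ (normalized there by $s_{\min}$). It bounds moves between nonzero states via $\ln y\ge 1-1/y$, uses the additive $1$ to pay for the jump to $0$, and then applies additive drift. Your checks of integrability, nonnegativity and of the conditioning (via injectivity of $g$) are fine. Also, the additive drift theorem as stated in the paper already gives $\ExMath{T}\le \ExMath{g(X_0)}/\delta$ for random $X_0$, so no separate conditioning on $X_0$ is needed.

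The step you left open does go through when $\ExMath{X_0}\ge 1$. Let $\psi(x)=x$ for $0\le x\le 1$ and $\psi(x)=1+\ln x$ for $x\ge 1$, i.e.\ $1+\ln$ continued below $1$ by its tangent at $1$. Then $\psi$ is concave on $[0,\infty)$ and $g\le\psi$ on $\{0\}\cup[1,\infty)$. Jensen therefore yields $\ExMath{g(X_0)}\le\psi(\ExMath{X_0})$, which equals $1+\ln\ExMath{X_0}$ as soon as $\ExMath{X_0}\ge 1$.

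Without that hypothesis the statement as printed is false, so no proof can close it. Take $X_0=1$ with probability $q\in(0,1)$ and $X_0=0$ otherwise. Let the process jump from $1$ to $0$ with probability $\delta$ and stay at $1$ otherwise. All hypotheses hold, yet $\ExMath{T}=q/\delta>(1+\ln q)/\delta$. Moreover, $X_0\equiv 0$ makes the right-hand side $-\infty$, so that case is not ``trivial'' as you claim.

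The correct unconditional conclusion is $\ExMath{T}\le (1+\ln\max\{1,\ExMath{X_0}\})/\delta$, which is all the paper's $\On$-applications need. Your fallback to deterministic $X_0\ge 1$ would not cover those applications. There $X_0=g(x_0)$ for a uniformly random initial string, which is $0$ with positive probability. The $\psi$-argument handles this case directly.
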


Additionally, we use the general version of the additive theorem, originally introduced by \cite{HE_yao_drift_2001, he_yao_drift_study_2004}, as given in \cite{kötzing_krejca_first-hitting_times_drift_2019}.

\begin{theorem}
    Let $(X_t)_{t\in \N}$ be an integrable process over $\R$, and let $T = \inf\CurlyBrackets{t\in \N\mid X_t = 0}$.
    Furthermore, suppose that the following two conditions hold (non-negativity, drift).
    \begin{itemize}
        \item[(NN)] For all $t \leq T$, $X_t \geq 0$.
        \item[(D)] There is a $\delta > 0$ such that, for all $t < T$, it holds that $\ExMath{X_t - X_{t+1} \mid X_0,\ldots,X_t} \geq \delta$.
    \end{itemize}
    Then
    \begin{align*}
        \ExMath{T} \leq \frac{\ExMath{X_0}}{\delta}.
    \end{align*}
    \label{thm:prelim:add}
\end{theorem}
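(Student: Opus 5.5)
The plan is the standard stopped-supermartingale argument. I would define, for all $t \in \N$, the auxiliary process
\begin{align*}
    Y_t := X_{\min\{t,T\}} + \delta \cdot \min\{t,T\},
\end{align*}
and show that $(Y_t)_{t\in\N}$ is a supermartingale with respect to the natural filtration of $(X_t)_{t\in\N}$. The idea is that $Y$ trades the decrease of $X$ for elapsed time: by (D), each step before $T$ removes at least $\delta$ from $X$ in expectation, which the added term $\delta \min\{t,T\}$ compensates.

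First, integrability: $Y_t$ is integrable because $\min\{t,T\}$ is bounded by $t$ and $|X_{\min\{t,T\}}| \leq \sum_{s\leq t} |X_s|$ is integrable by assumption. Next, the supermartingale step, split on the event $\{t < T\}$, which is determined by $X_0,\ldots,X_t$.
\begin{itemize}
    \item On $\{t \geq T\}$ we have $Y_{t+1} = Y_t$.
    \item On $\{t < T\}$ we have $Y_{t+1} - Y_t = X_{t+1} - X_t + \delta$, whose conditional expectation given $X_0,\ldots,X_t$ is at most $0$ by (D).
\end{itemize}
Hence $\ExMath{Y_{t+1} \mid X_0,\ldots,X_t} \leq Y_t$. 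Taking expectations and iterating gives $\ExMath{Y_t} \leq \ExMath{Y_0} = \ExMath{X_0}$ for all $t$.

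Now I use (NN). Since $\min\{t,T\} \leq T$, we have $X_{\min\{t,T\}} \geq 0$, so
\begin{align*}
    \delta\, \ExMath{\min\{t,T\}} \leq \ExMath{Y_t} \leq \ExMath{X_0}.
\end{align*}
Finally, $\min\{t,T\}$ increases pointwise to $T$ as $t \to \infty$, including on the event $\{T = \infty\}$. By monotone convergence, $\ExMath{T} = \lim_{t\to\infty} \ExMath{\min\{t,T\}} \leq \ExMath{X_0}/\delta$. As a by-product, $T$ is almost surely finite whenever $\ExMath{X_0} < \infty$.

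The only delicate point is the conditional-expectation step on $\{t<T\}$. There I must use that this event is measurable with respect to $X_0,\ldots,X_t$, so that the indicator can be pulled out of the conditional expectation and (D) applied there. I must also avoid any optional-stopping theorem that would require $T$ to be bounded or integrable in advance. Working with the truncated times $\min\{t,T\}$ and concluding via monotone convergence sidesteps this, so I expect no real obstacle beyond this bookkeeping.
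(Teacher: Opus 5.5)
Your proof is correct and complete. The identity $Y_{t+1}-Y_t=\mathbf{1}_{\{t<T\}}\,(X_{t+1}-X_t+\delta)$, together with $\{t<T\}\in\sigma(X_0,\ldots,X_t)$, gives the supermartingale property. Condition (NN) then lets you drop $X_{\min\{t,T\}}\geq 0$, and monotone convergence finishes the argument.

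The paper does not prove this theorem itself. It quotes it from K\"otzing and Krejca as the general form of He and Yao's additive drift theorem, so there is no in-paper argument to compare with.

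Your route is the standard stopped-supermartingale proof of exactly this general version. Its advantage over simply citing is that it shows why only (NN) up to $T$ and (D) before $T$ are needed. No finite or bounded state space is required, and no a priori integrability of $T$, because you truncate at $\min\{t,T\}$ and never invoke an optional stopping theorem.

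One small remark: the qualifier ``whenever $\mathrm{E}[X_0]<\infty$'' in your by-product is automatic, since $X_0$ is integrable by assumption. So $T<\infty$ almost surely always follows.
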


%% file: sect_EFHT_EA.tex
\section{Fixed-target analysis of the \EA with fixed mutation rate}

\label{sect:anytime_basic_ea}

The following upper and lower bounds for the \EA with fixed mutation rate can be derived from \cite{WITT_2013}.
For the sake of completeness, we show it in the anytime setting to obtain simplified bounds for the optimization of the first $k \in o(n)$ bits for an arbitrary but fixed mutation rate $\chi$ for all $k \leq \frac{1}{\chi}$.

\subsection{Upper Bounds}

In order to prove an upper bound on the expected run time to optimize the first $k \in o(n)$ bits of the \EA and reuse it later for a modified version of the algorithm, we generalize first \cite[Lemma~5]{doerr2010multiplicative} in \Cref{lem:gen:mult-drift}.

\begin{lemmaE}[][normal]
    Let $n \in \mathbb{N}$ be the length of the bit string $x$ that gets optimized. 
    Consider the \EA with fixed mutation rate $0<~\chi<~1/2$ that optimizes \BinVal and let $k \in o(n)$ with $k \leq \frac{1}{\chi}$.
    For all $i~\in \{1,\ldots,k\}$, we let $w_i = 1 + \frac{k+1-i}{k}$.
    Furthermore, let $g\colon \CurlyBrackets{0,1}^n \rightarrow \R$ be the potential function with
    \begin{align*}
        g(x) &= \sum_{i=1}^{k}w_i\cdot\Brackets{1-x_i}\\
        &= \sum_{i=1}^{k}\Brackets{1 + \frac{k+1-i}{k}}\cdot\Brackets{1-x_i}
    \end{align*}
    for all $x \in \CurlyBrackets{0,1}^n$.
    Denote $x_t$ as the current best solution of iteration~$t\in\N$ and let $T = \inf\CurlyBrackets{t\in \N\mid g(x_t) \leq 0}$.
    Then, for all $t \leq T$, it holds that
    \begin{align}
        \ExMath{g(x_t) - g(x_{t+1})} \geq \frac{\chi g(x_t)}{4e}.
    \end{align}
    \label{lem:gen:mult-drift}
\end{lemmaE}

\begin{proof}[Proof sketch]
    With \BinVal as fitness function, it is possible that even for an accepted mutant with a $0$-to-$1$ flip in the first $k$ bits, the overall number of $1$ bits in this range does not increase.
    
    The intuition behind the design of the potential function $g$ is that it weights the first $k$ bits decreasing from $2$ to $1$ such that it compensates this property of \BinVal.
    We compute the expected difference of $g$ by subdividing the analysis into three cases, depending on the number of $0$-to-$1$ flips in the first $k$ bits of the generated mutant in each iteration, and considering potential $1$-to-$0$ flips that hinder the progress.
    
    Weighting the expected differences of the three cases by their probability and applying the law of total expectation conditioned on them concludes the proof and results in the lower bound given in \Cref{lem:gen:mult-drift}.
\end{proof}

\begin{proofE}
    We extend the proof of \cite[Lemma~5]{doerr2010multiplicative} such that the expected difference holds for a general $0 < \chi < 1/2$ and $k~\leq~\frac{1}{\chi}$ and therefore structure the proof of \Cref{lem:gen:mult-drift} analogously.
    In comparison to the original proof, we can neglect all bit flips besides the first $k$ indices, since \BinVal as fitness function $f$ ensures that they have no impact on the acceptance or rejection of the generated mutant as long as one of the first $k$ bits flips.
    Accordingly, only the first $k$ bits have an impact on the value of the potential function $g$.
    In the following, we prove the expected difference for every $t \leq T$.
    We denote $\Delta(x_t) = g(x_t) - g(x_{t+1})$.
    The definition of the standard \EA ensures that $f(x_{t+1}) \geq f(x_t)$ holds.
    It follows that
    \begin{align}
        \ExMath{\Delta(x_t)} = \ExMath{\Delta(x_t) \mid f(x_{t+1}) \geq f(x_t)} \PrMath{f(x_{t+1}) \geq f(x_t)}.\label{eq:gen:mult-drift:basic}
    \end{align}
    Let $x_{t,i}$ denote the bit at index $i$ of $x_t$ and let 
    \begin{align*}
        I = \CurlyBrackets{i \in \CurlyBrackets{1,\ldots,k}: x_{t,i} = 0}
    \end{align*}
    be the indices of the $0$ bits in the first $k$ bits of $x_t$.
    We use in the following the same case distinction as in the proof of \cite[Lemma~5]{doerr2010multiplicative}.
    In each case, we suppose due to (\ref{eq:gen:mult-drift:basic}) and the definition of the \EA that $f(x_{t+1}) \geq f(x_t)$ holds. With $C_1$, $C_2$, $C_3$, we denote the following events.
    \begin{itemize}
        \item[$C_1$:] There is no index $i \in I$ such that $x_{t+1,i} = 1$ (no $0$-to-$1$ flip in the first $k$ bits).
        \item[$C_2$:] There is exactly one index $i \in I$ such that $x_{t+1,i} = 1$ (one $0$-to-$1$ flip in the first $k$ bits).
        \item[$C_3$:] There are at least two different indices $j,l \in I$ such that $x_{t+1,j} = 1$ and $x_{t+1,l} = 1$ (at least two $0$-to-$1$ flips in the first $k$ bits).
    \end{itemize}
    Note that the events $C_1$, $C_2$, $C_3$ are disjoint and exhaust all possibilities. We start by considering $C_1$.
    Since event $C_1$ implies that no beneficial bit flip occurred in the first $k$ bits, we get
    \begin{align}
        \ExMath{\Delta(x_t)\mid C_1} = 0.\label{eq:gen:mult-drift:c1-final}
    \end{align}
    Next, we assume that event $C_3$ holds and show that the difference~$\Delta(x_t)$ is non-negative in expectation.
    For all $i \in \CurlyBrackets{1,\ldots,k}$, we let $g_i(x_{t,i}) = w_i \cdot (1-x_{t,i})$.
    Using the linearity of expectation, it holds that
    \begin{align}
        \ExMath{\Delta(x_t) \mid C_3} = \sum_{i= 1}^{k} \ExMath{g_i(x_{t,i})-g_i(x_{t+1,i})\mid C_3}.
        \label{eq:gen:mult-drift:c3-first-part}
    \end{align}
    Since event $C_3$ ensures that at least two indices $j,l\in I$ perform a~$0$-to-$1$ bit flip in iteration $t$ and $w_j \geq 1$ and $w_l \geq 1$, we have
    \begin{align}
        \sum_{i\in I} \ExMath{g_i(x_{t,i})-g_i(x_{t+1,i}) \mid C_3} \geq 2.
        \label{eq:gen:mult-drift:c3-second-part}
    \end{align}
    For all $1$ bits at the indices $i \in \CurlyBrackets{1,\ldots, k} \setminus I$, the expected difference in the case of $C_3$ can be bounded by the worst case scenario that they flip with probability $\chi$ to $0$.
    Then, for all $i \in \CurlyBrackets{1,\ldots, k} \setminus I$ it holds that
    \begin{align}
        \ExMath{g_i(x_{t,i})-g_i(x_{t+1,i}) \mid C_3} &\geq - w_i \PrMath{x_{t+1,i} = 0 \mid C_3\notag}\\
        &\geq -\chi w_i.
        \label{eq:gen:mult-drift:c3-third-part}
    \end{align}
    In the following, we insert (\ref{eq:gen:mult-drift:c3-second-part}) and (\ref{eq:gen:mult-drift:c3-third-part}) in (\ref{eq:gen:mult-drift:c3-first-part}).
    Using $w_i \leq 2$ for all~$i \leq k$ and $k \leq \frac{1}{\chi}$, results in
    \begin{align}
        \ExMath{\Delta(x_t) \mid C_3} &= \sum_{i\in I} \ExMath{g_i(x_{t,i})-g_i(x_{t+1,i}) \mid C_3} \notag \\
        &+ \sum_{i \in \CurlyBrackets{1,\ldots, k} \setminus I} \ExMath{g_i(x_{t,i})-g_i(x_{t+1,i}) \mid C_3}\notag\\
        &\geq 2 - \chi \sum_{i \in \CurlyBrackets{1,\ldots, k} \setminus I} w_i\notag\\
        &\geq 2 - \chi \cdot (k-2) \cdot 2\notag\\
        &\geq 0. \label{eq:gen:mult-drift:c3-final}
    \end{align}

    Finally, we consider the event $C_2$. 
    It holds with the law of total expectation and (\ref{eq:gen:mult-drift:basic}), (\ref{eq:gen:mult-drift:c1-final}), and (\ref{eq:gen:mult-drift:c3-final}), that the expected difference can be lower bounded by
    \begin{align}
        \ExMath{\Delta(x_t)} \geq \ExMath{\Delta(x_t) \mid C_2}\PrMath{C_2}.
        \label{eq:gen:mult-drift:basic-c2}
    \end{align}
    In order to bound the expected difference and probability of event~$C_2$, we denote, for every $i\in I$, $A_i$ as the event that $i$ is the only $0$ bit in the first $k$ bits of $x_t$ that flips.
    In the proof of \cite[Lemma~5]{doerr2010multiplicative}, event~$B_i$ denotes the case that $i$ is the only $0$ bit in the first $k$ bits of $x_t$ that flips, and at least a $1$ bit with index $j < i$ flips.
    Since \BinVal ensures that such a generated mutant gets rejected, we can neglect this case in our analysis.
    Therefore, the expected difference in (\ref{eq:gen:mult-drift:basic-c2}) can be bounded by
    \begin{align}
        \ExMath{\Delta(x_t)} \geq \sum_{i\in I}\ExMath{\Delta(x_t)\mid A_i}\PrMath{A_i}.
        \label{eq:gen:mult-drift:basic2-c2}
    \end{align}
    Let $i \in I$ and consider the event $A_i$.
    Then it holds that $x_{t+1,i}=1$ and, for all $j < i$, that $x_{t+1,j}=x_{t,j}$.
    In order to lower bound the expected difference in the case of $A_i$, assume that $x_{t,j} = 1$ for all $j > i$.
    Every bit at index $j > i$ flips with probability $\chi$, and since \BinVal is the fitness function, every bit flip on the right of $i$ gets accepted in the case of $A_i$.
    In the following calculation, we use $k \leq \frac{1}{\chi}$ and the finite linear sum in the second inequality and $k-i+1 \leq k$ in the third.
    After reducing the term by $\frac{i+2}{2k} > 0$ in the fourth inequality, it follows that
    \begin{align}
        \ExMath{\Delta(x_t)\mid A_i} &\geq w_i - \sum_{j=i+1}^{k}\chi w_j\notag\\
        &=1+\frac{k+1-i}{k} - \chi \sum_{j=i+1}^{k}\Brackets{1+\frac{k+1-j}{k}}\notag\\
        &=1+\frac{k+1-i}{k} - \chi (k-i) - \frac{\chi}{k}\sum_{j=1}^{k-i}j\notag\\
        &\geq 1 + \frac{k+1-i}{k}-\frac{k-i}{k} - \frac{(k-i+1)(k-i)}{2k^2}\notag\\
        &=1+\frac{1}{k}- \frac{(k-i+1)(k-i)}{2k^2}\notag\\
        &\geq 1 - \frac{k-i-2}{2k} \notag \\
        &\geq 1-\frac{k-i-2}{2k} - \frac{i+2}{2k}\notag\\
        &= \frac{1}{2}.
        \label{eq:gen:mult-drift:first-c2}
    \end{align}
    Since $\chi \leq 1/2$, the probability of $A_i$ can be lower bounded by the probability that the $i$-th bit is the only bit of the first $k$ bits that flips.
    Using $\frac{1}{k} \geq \chi$ and $\Brackets{1-(1/k)}^{k-1} \geq e^{-1}$, it holds that
    \begin{align}
        \PrMath{A_i} &\geq \chi \Brackets{1-\chi}^{k-1}\notag\\
        &\geq \chi \Brackets{1-\frac{1}{k}}^{k-1}\notag\\
        &\geq \chi e^{-1}.
        \label{eq:gen:mult-drift:second-c2}
    \end{align}
    Inserting (\ref{eq:gen:mult-drift:first-c2}) and (\ref{eq:gen:mult-drift:second-c2}) in (\ref{eq:gen:mult-drift:basic2-c2}), results in
    \begin{align}
        \ExMath{\Delta(x_t)} &\geq \frac{\chi}{e} \sum_{i\in I}\frac{1}{2}\notag\\
        &=\frac{\chi}{e} \frac{1}{4}\sum_{i\in I}2.\label{eq:gen:mult-drift:final-c2}
    \end{align}
    Using $w_i  \leq 2$ for all $i\in I$, it holds in (\ref{eq:gen:mult-drift:final-c2}) that
    \begin{align*}
        \ExMath{\Delta(x_t)} &\geq \frac{\chi}{e}\frac{1}{4}\sum_{i\in I}w_i\\
        &\geq \frac{\chi g(x_t)}{4e},
    \end{align*}
    which concludes the proof of the lemma.
\end{proofE}

Using \Cref{lem:gen:mult-drift} and applying the multiplicative drift theorem leads to the upper bounds of the \EA with fixed mutation rate~$\chi$ in the following theorem.

\begin{lemmaE}[][normal]
    Let $n \in \mathbb{N}$ be the length of the bit string $x$ that gets optimized. 
    Consider the \EA with fixed mutation rate~${0<\chi<1/2}$ that optimizes \BinVal. 
    Let $k\in o(n)$ with $k \leq \frac{1}{\chi}$.
    Then the expected number of iterations until the first $k$ bits are set to $1$ is at most~$\OMath{\frac{\log k}{\chi}}$.
    \label{thm:upper_bounds:gen_mut_rate}
\end{lemmaE}

\begin{proofE}
    In order to apply the multiplicative drift theorem, we first define the potential function $g\colon \CurlyBrackets{0,1}^n \rightarrow \R$ analogously to \Cref{lem:gen:mult-drift}. 
    For all $i \in \{1,\ldots,k\}$, we let $w_i = 1 + \frac{k+1-i}{k}$.
    Let
    \begin{align*}
        g(x) &= \sum_{i=1}^{k}w_i \cdot \Brackets{1-x_i}\\
        &= \sum_{i=1}^{k}\Brackets{1 + \frac{k+1-i}{k}} \cdot \Brackets{1-x_i}
    \end{align*}
    for all $x \in \CurlyBrackets{0,1}^n$.

    Let $T = \min\{t \mid g(x) = 0\}$.
    Using \Cref{lem:gen:mult-drift}, it holds that, for all $t \leq T$, the expected difference is 
    \begin{align*}
        \ExMath{g(x_t) - g(x_{t+1})} \geq \frac{\chi g(x_t)}{4e} = \delta g(x_t),
    \end{align*}
    with $\delta = \frac{\chi}{4e}$.
    
    Since $w_i \leq 2$ for all $i \leq k$, for $x_0$ it holds that $\ExMath{x_0}~\leq~2k$.
    Then, the expected number of iterations $T$ until the first $k$ bits of the bit string are optimized can be upper-bounded using the multiplicative drift theorem, as stated in \Cref{thm:prelim:mult}, by
    \begin{align}
        \ExMath{T} &\leq \frac{1 + \ln \ExMath{g(x_0)}}{\delta}\notag\\
        &\in \OMath{\frac{\log k}{\chi}}.
        \label{eq:upper_bounds:mult_drift}
    \end{align}
\end{proofE}

For $\chi = \frac{1}{n}$ as mutation rate of the standard \EA or respectively $\chi = \frac{c}{n}$ with $c > 0$ as fixed mutation rate of another \EA variant, the upper bounds of the expected number of iterations until the first $k$ bits are optimized follow directly from \Cref{thm:upper_bounds:gen_mut_rate} and are given in the following corollaries.

\begin{corollary}
    Let $n \in \mathbb{N}$ be the length of the bit string $x$ that gets optimized and $k \in o(n)$. 
    Consider the standard \EA optimizing \BinVal with mutation rate $\chi = \frac{1}{n}$. 
    Then, the expected number of iterations until the first $k$ bits are set to $1$ is at most $\OMath{n \log k}$.
    \label{cor:upper_bounds:standard_ea}
\end{corollary}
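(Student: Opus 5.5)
The plan is to obtain \Cref{cor:upper_bounds:standard_ea} by plugging $\chi = \frac{1}{n}$ into \Cref{thm:upper_bounds:gen_mut_rate}. That lemma gives an expected number of iterations of $\OMath{\frac{\log k}{\chi}}$ until the first $k$ bits are set to $1$. Its hypotheses are:
\begin{itemize}
\item[(i)] $0 < \chi < 1/2$;
\item[(ii)] $k \in o(n)$;
\item[(iii)] $k \leq \frac{1}{\chi}$.
\end{itemize}
Substituting $\chi = 1/n$ gives $\frac{\log k}{\chi} = n \log k$, which is exactly the bound claimed.

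It remains to check the hypotheses for $\chi = 1/n$. Condition (i) holds as soon as $n \geq 3$. Smaller $n$ are irrelevant: the statement is asymptotic, and $k \in o(n)$ forces $n$ to be large anyway, or any finitely many cases can be absorbed into the constant. Condition (ii) is assumed in the corollary. For (iii), $k \in o(n)$ implies $k \leq n = 1/\chi$ for all sufficiently large $n$. In fact $k \leq n$ holds trivially, since $k$ counts bits of a string of length $n$.

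There is no real obstacle here. The only point needing a word of care is the degenerate case $k = 1$, where $\log k = 0$. The multiplicative drift bound in \eqref{eq:upper_bounds:mult_drift} actually yields $\frac{1+\ln(2k)}{\delta}$ with $\delta = \frac{1}{4en}$, which is $\OMath{n(1+\log k)}$. So I would read the $\On$-notation as $\OMath{n\log k}$ for $k \geq 2$, where $1 + \ln(2k) = \Theta(\log k)$, and note that the case $k=1$ is handled by interpreting the bound as $\OMath{n}$, consistent with the convention already implicit in \Cref{thm:upper_bounds:gen_mut_rate}.
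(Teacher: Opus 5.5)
Your proposal is correct and follows the paper's own route: the corollary is obtained directly by substituting $\chi = 1/n$ into \Cref{thm:upper_bounds:gen_mut_rate}. Your explicit checks that $\chi < 1/2$ once $n \geq 3$ and that $k \leq n = 1/\chi$, along with your remark on the degenerate case $k=1$, are reasonable added care that the paper leaves implicit.
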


\begin{corollary}
    Let $n \in \mathbb{N}$ be the length of the bit string $x$ that gets optimized, $c > 0$, and $k \in o(n)$ with $k \leq \frac{n}{c}$.
    Consider the~\EA with mutation probability $\chi = \frac{c}{n}$ optimizing \BinVal. 
    Then the expected number of iterations until the first $k$ bits are set to $1$ is at most~$\OMath{\frac{n\log k}{c}}$.
    \label{cor:upper_bounds:variant_ea}
\end{corollary}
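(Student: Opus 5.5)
This is a direct instantiation of \Cref{thm:upper_bounds:gen_mut_rate} with $\chi = c/n$. The plan is to check that the hypotheses of that lemma hold for this choice of $\chi$ and the admissible range of $k$, and then simplify the resulting bound.

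First, since $k \leq n/c$, we have $k \leq 1/\chi$, which is exactly the side condition the lemma needs. This condition is used in \Cref{lem:gen:mult-drift} to control the $1$-to-$0$ flips among the first $k$ bits.

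Second, the lemma requires $0<\chi<1/2$. Positivity is immediate from $c>0$. For the upper bound, I would distinguish two regimes.
\begin{itemize}
\item If $c < n/2$, then $\chi = c/n < 1/2$ and the lemma applies directly. It yields an expected time of $\OMath{\frac{\log k}{\chi}} = \OMath{\frac{n \log k}{c}}$.
\item If $c \geq n/2$, the condition $k \leq n/c$ forces $k \leq 2$. In that case $\log k$ is $\OMath{1}$, and the bound degenerates to a constant-size target.
\end{itemize}

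I expect the only real obstacle to be this edge case. The cleanest route is to read the corollary, as the table does, under the implicit assumption $\chi<1/2$, that is $c<n/2$; this is automatic for constant $c$ and sufficiently large $n$. Alternatively, for $k\le 2$ one can note that the expected time is finite and bounded by a constant once $\chi$ is bounded away from $1$.

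With the hypotheses verified, the conclusion $\OMath{\frac{n\log k}{c}}$ follows by substituting $1/\chi = n/c$ into the bound of \Cref{thm:upper_bounds:gen_mut_rate}. The constant hidden in the $\mathcal{O}$ is the factor $4e$ from \Cref{lem:gen:mult-drift} combined with the $1+\ln(2k)$ coming from the multiplicative drift theorem (\Cref{thm:prelim:mult}), so it does not depend on $c$ or $n$.
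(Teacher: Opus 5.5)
Your proposal is correct and matches the paper. The paper obtains this corollary by substituting $\chi = c/n$ into \Cref{thm:upper_bounds:gen_mut_rate}, so that $k \le n/c$ becomes $k \le 1/\chi$, implicitly under that lemma's assumption $\chi<1/2$; your extra edge-case remark is harmless, though $c>n/2$ in fact forces $k=1$, where the leading bit is fixed after an expected $1/\chi<2$ iterations for any $\chi$, so your caveat that $\chi$ be bounded away from $1$ is unnecessary.
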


\subsection{Lower Bounds}

In order to show that our upper bounds of the expected number of iterations to optimize the first $k$ bits with the standard \EA with fixed mutation rate are tight, we bound it in the following from below.

\begin{lemmaE}[][normal]
    Let $n \in \mathbb{N}$ be the length of the bit string $x$ that gets optimized. 
    Consider the \EA with fixed mutation rate $0 < \chi < 1$ that optimizes \BinVal. 
    Let $k \in o(n)$ with $k \leq \frac{1}{\chi}$.
    Then, the expected number of iterations until the first $k$ bits are set to $1$ is at least~$\Omega\Brackets{\frac{\log k}{\chi}}$.
    \label{lem:lower_bounds:gen_mut_rate}
\end{lemmaE}

\begin{proof}[Proof sketch]
    Following the initialization uniformly at random, with probability $1/2$, at least half of the first $k$ bits are then $0$ bits, and to optimize the first $k$ bits, they must flip at least once.
    Therefore, calculating the average number of iterations until these bits have tried to flip to $1$ at least once is sufficient to get a lower bound on the expected number of iterations and concludes the proof of \Cref{lem:lower_bounds:gen_mut_rate}.
\end{proof}

\begin{proofE}
    We prove the lower bound analogously to \cite[Lemma~10]{droste2002analysis1p1ea} and generalize it for $0 < \chi < 1$ and $k \in o(n)$ with~$k \leq \frac{1}{\chi}$.
    Following the initialization uniformly at random, with probability $1/2$, at least half of the first $k$ bits are then $0$ bits.
    In order to optimize the first $k$ bits of the given bit string, every bit of those that are initialized as a $0$ bit must try to flip at least once.
    
    Let $T$ be a random variable that describes the minimum number of iterations until every $0$ bit has tried to flip at least once while generating the mutant.
    As $T$ takes only positive integers, it holds that
    \begin{align}
        \ExMath{T} &= \sum_{t=1}^{\infty} t \PrMath{T = t}\notag\\
        &= \sum_{t=1}^{\infty} \PrMath{T \geq t}.\label{eq:lower-bounds:sum}
    \end{align}

    Since it holds for all $t' \leq t$ that $\PrMath{T \geq t} \leq \PrMath{T \geq t'}$, (\ref{eq:lower-bounds:sum}) can be lower bounded for a fixed $t_0 \in \N$ by
    \begin{align}
        \ExMath{T} &= \sum_{t=1}^{\infty} \PrMath{T \geq t}\notag\\
        &\geq (t_0-1) \cdot \PrMath{T \geq t_0}.\label{eq:lower-bounds:sum:easier}
    \end{align}

    Let $t_0 = \Brackets{\frac{1}{\chi}-1}\ln (k)+1$.
    In the following, we will conclude the proof of \Cref{lem:lower_bounds:gen_mut_rate} by calculating $\PrMath{T \geq t_0}$ and inserting the result into (\ref{eq:lower-bounds:sum:easier}).
    
    Suppose that $k$ is even.
    With probability $\Brackets{1 - \chi}^{t_0-1}$ one bit does not flip in $t_0-1$ iterations.
    
    Therefore, the complementary probability $1 - \Brackets{1 - \chi}^{t_0-1}$ describes that it flips at least once in $t_0-1$ iterations.
    Extending it to~$k/2$ bits that are $0$ following the initialization uniformly at random, with probability $\Brackets{1 - \Brackets{1 - \chi}^{t_0-1}}^{k/2}$ it is the case for all of them.
    Taking again the complement $1- \Brackets{1 - \Brackets{1 - \chi}^{t_0-1}}^{k/2}$ results in the probability that at least one of the bits does not flip in the first $t_0-1$ iterations, which is equal to $\PrMath{T \geq t_0}$.
    
    Using $(1 - \chi)^{1/\chi-1} \geq e^{-1}$ in the first inequality and in the second~$(1-1/k)^{k/2} \leq e^{-1/2}$, the probability $\PrMath{T \geq t_0}$ can be lower bounded by
    \begin{align}
        \PrMath{T \geq t_0} &= 1- \Brackets{1 - \Brackets{1 - \chi}^{t_0-1}}^{k/2}\notag\\
        &= 1- \Brackets{1 - \Brackets{1 - \chi}^{(1/\chi-1) \ln k}}^{k/2}\notag\\
        &\geq 1- \Brackets{1 - e^{-\ln k}}^{k/2}\notag\\
        &=1- \Brackets{1 - \frac{1}{k}}^{k/2}\notag\\
        &\geq 1- e^{-1/2}\label{eq:lower-bounds-general-case-pr-t0}\\
        &\in \Theta\Brackets{1}.\notag
    \end{align}

    In the calculation of (\ref{eq:lower-bounds-general-case-pr-t0}), we used the fact that at least half of the $k$ bits are initialized as $0$ bits.
    Since the bit string is initialized uniformly at random, this occurs with probability of at least $\frac{1}{2}$, and we incorporate this factor in (\ref{eq:lower-bounds:sum:easier}).
    Then, setting ${t_0 = \Brackets{\frac{1}{\chi}-1}\ln(k)+1}$ and inserting~(\ref{eq:lower-bounds-general-case-pr-t0}) into (\ref{eq:lower-bounds:sum:easier}) leads to
    \begin{align*}
        \ExMath{T} &\geq \frac{1}{2} (t_0-1) \cdot \PrMath{T \geq t_0}\\
        &\geq \frac{1}{2} \Brackets{\frac{1}{\chi}-1}\ln (k) \Brackets{1- e^{-1/2}}\\
        &\in \Omega\Brackets{\frac{\log k}{\chi}}.
    \end{align*}
\end{proofE}

Given the upper bound in \Cref{thm:upper_bounds:gen_mut_rate}, \Cref{lem:lower_bounds:gen_mut_rate} matches it asymptotically, which indicates the dependency on the mutation rate $\chi$.
In the following, we introduce two corollaries of \Cref{lem:lower_bounds:gen_mut_rate} to show that the upper bounds of Corollaries~\ref{cor:upper_bounds:standard_ea} and~\ref{cor:upper_bounds:variant_ea} are also asymptotically tight.

\begin{corollary}
    Let $n \in \mathbb{N}$ be the length of the bit string $x$ that gets optimized and $k \in o(n)$. 
    Consider the standard \EA optimizing \BinVal with mutation rate $\chi = \frac{1}{n}$. 
    Then, the expected number of iterations until the first $k$ bits are set to $1$ is at least $\Omega\Brackets{n \log k}$.
    \label{cor:lower_bounds:standard_ea}
\end{corollary}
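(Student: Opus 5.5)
This follows by instantiating \Cref{lem:lower_bounds:gen_mut_rate} with the standard mutation rate $\chi = \frac{1}{n}$. That lemma holds for every fixed rate $0<\chi<1$ and every $k\in o(n)$ with $k \leq \frac{1}{\chi}$. So the plan has two steps: check that these hypotheses hold for $\chi = 1/n$, and then read off the resulting bound $\Omega\Brackets{\frac{\log k}{\chi}}$.

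First, the rate condition. For $n\geq 2$ we have $0<\frac{1}{n}<1$. The size condition becomes $k\leq n$, which holds because $k\in o(n)$; indeed $k\leq n$ for all $n$ since at most $n$ bits exist. The case $n=1$ is degenerate, and is irrelevant for an asymptotic statement.

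Substituting $\chi = 1/n$ into the lemma then gives $\Omega(n\log k)$ directly.

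For transparency, the constant can be traced through the proof of \Cref{lem:lower_bounds:gen_mut_rate}. It gives $\ExMath{T}\geq \frac{1}{2}(n-1)\ln(k)\,(1-e^{-1/2})$, which is $\Omega(n\log k)$ for $n\geq 2$.

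There is no real obstacle here. The only point needing care is that the bound is meaningful only for $k\geq 2$, because $\log 1=0$; the statement is asymptotic in $k$, so this case is vacuous.

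Combining this with \Cref{cor:upper_bounds:standard_ea} shows that the bound is tight, $\Theta(n\log k)$.
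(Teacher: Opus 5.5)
Your proposal is correct and follows the paper's approach: the paper obtains this corollary by instantiating \Cref{lem:lower_bounds:gen_mut_rate} with $\chi = 1/n$, so that the hypothesis $k \le 1/\chi = n$ holds. Your traced constant $\frac{1}{2}(n-1)\ln(k)\,(1-e^{-1/2})$ is exactly what the lemma's proof gives for this choice of $\chi$.
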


\begin{corollary}
    Let $n \in \mathbb{N}$ be the length of the bit string $x$ that gets optimized, $c > 0$, and $k \in o(n)$ with $k \leq \frac{n}{c}$.
    Consider the~\EA with mutation probability $\chi = \frac{c}{n}$ optimizing \BinVal. 
    Then, the expected number of iterations until the first $k$ bits are set to $1$ is at least~$\Omega\Brackets{\frac{n\log k}{c}}$.
    \label{cor:lower_bounds:variant_ea}
\end{corollary}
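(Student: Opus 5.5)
The plan is to derive the statement as an immediate specialization of \Cref{lem:lower_bounds:gen_mut_rate}, plugging in $\chi = \frac{c}{n}$.

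First, I would check that the hypotheses of \Cref{lem:lower_bounds:gen_mut_rate} hold. It requires $0<\chi<1$ and $k \in o(n)$ with $k \leq \frac{1}{\chi}$. With $\chi = \frac{c}{n}$, the assumption $k \leq \frac{n}{c}$ is exactly $k \leq \frac{1}{\chi}$. The condition $\chi<1$ holds whenever $c<n$, which we may assume since $c$ is fixed and $n$ grows. In the degenerate case $c \geq n$, the mutation rate is not admissible, and $k \leq n/c \leq 1$ makes the statement trivial.

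Applying the lemma then gives the expected number of iterations as $\Omega\Brackets{\frac{\log k}{\chi}} = \Omega\Brackets{\frac{n\log k}{c}}$.

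The only subtle point is whether the hidden constant in the lemma is independent of $\chi$. Looking at its proof, the explicit bound is $\frac{1}{2}\Brackets{\frac{1}{\chi}-1}\ln(k)\Brackets{1-e^{-1/2}}$. Since $\frac{1}{\chi}-1 \geq \frac{1}{2\chi}$ whenever $\chi \leq 1/2$, this gives $\Omega\Brackets{\frac{n\log k}{c}}$ uniformly in $c$ as long as $c \leq n/2$.

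For $n/2 < c < n$, the constraint $k \leq n/c$ forces $k=1$. Then $\log k = 0$, so the bound is trivial, although at least one iteration is needed with constant probability anyway.

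I expect no real obstacle here. The only care needed is tracking this uniformity in $c$ and noting the trivial edge cases.
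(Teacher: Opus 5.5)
Your proposal is correct and matches the paper's approach: the paper obtains this corollary directly by substituting $\chi = c/n$ into Lemma~\ref{lem:lower_bounds:gen_mut_rate}. Your extra checks go beyond what the paper makes explicit, namely that the hidden constant stays uniform in $c$ (via $\frac{1}{\chi}-1 \geq \frac{1}{2\chi}$ for $\chi \leq 1/2$) and that the edge cases with $c > n/2$ force $k=1$ and are trivial.
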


%% file: sect_EFHT_sig-cGA.tex
\section{Fixed-target analysis of the sig-cGA}
\label{sect:sig-cGA}
After analyzing the anytime performance of the standard \EA with fixed mutation rate, we investigate, in this section, the sig-cGA as introduced in \cite{doerr2018significance}.
In contrast to the (1+1) EAs, the sig-cGA is an \emph{estimation of distribution algorithm} (EDA), which stores, for each bit index $i \leq n$, a \emph{frequency} $\tau_i$.
In every iteration, two \emph{offspring} are sampled with respect to the frequency vector and compared to determine, according to the fitness function $f$, a \emph{winner} and a \emph{loser}.
In addition to the frequency vector, the algorithm stores, for each index $i \leq n$, the history vector $H_i$ of bit values of the winning offspring.
The frequencies~$\tau_i$ are set to one of three values $\CurlyBrackets{\frac{1}{n}, \frac{1}{2}, 1-\frac{1}{n}}$.
We initialize it with~$1/2$ and update the frequency~$\tau_i$ as soon as a significant number of~$1$s or~$0$s is detected in the related history~$H_i$.
For $\epsilon, \mu \in \R^+$, we define the \emph{significance} by
\begin{align}
    s(\epsilon, \mu) = \epsilon \max \CurlyBrackets{\sqrt{\mu \ln n}, \ln n}.\label{sig-cga:significance}
\end{align}
Then, the following significance function decides, for a given frequency $\tau$ and history $H$, how the frequency should be updated;
\begin{align}
    \textnormal{sig}_\epsilon(\tau_i, H_i) = \begin{cases}
        \text{up} & \text{if } \tau_i \in \CurlyBrackets{\frac{1}{n}, \frac{1}{2}} \wedge \exists m \in \N: \\
        &\DoubleVert{H_i[2^m]}_1 \geq 2^m \tau_i + s(\epsilon, 2^m\tau_i),\\
        \text{down} & \text{if } \tau_i \in \CurlyBrackets{\frac{1}{2}, 1-\frac{1}{n}} \wedge \exists m \in \N: \\
        &\DoubleVert{H_i[2^m]}_0 \geq 2^m \tau_i + s(\epsilon, 2^m\tau_i),\\
        \text{stay} & \text{else}.
    \end{cases}
    \label{sig-cga:sig-update-function}
\end{align}

The resulting algorithm is shown in \Cref{alg_sig-cga}.

\begin{algorithm}
    \caption{sig-cGA with Parameter $\epsilon$ and Significance Function $sig_\epsilon$ (\ref{sig-cga:sig-update-function}) optimizing $f$}\label{alg_sig-cga}
        \For{$i \in \CurlyBrackets{1,\ldots,n}$}{$\tau_i^{(0)} \assign \frac{1}{2}$ and $H_i \assign \emptyset$\;}
        \For{$t=1$ \KwTo $\infty$}{
            $y,z \assign$ offspring sampled with respect to $\tau^{(t)}$\;
            $x \assign$ winner of $y$ and $z$ with respect to $f$\;
            \For{$i\in \CurlyBrackets{1,\ldots,n}$}{
                $H_i \assign H_i \cup \CurlyBrackets{x_i}$\;
                \uIf{$\textnormal{sig}_\epsilon\Brackets{\tau_i^{(t)}, H_i} = \textnormal{up}$}{
                    $\tau_i^{(t+1)} \assign 1- \frac{1}{n}$\;
                }
                \uElseIf{$\textnormal{sig}_\epsilon\Brackets{\tau_i^{(t)}, H_i} = \textnormal{down}$}{
                    $\tau_i^{(t+1)} \assign \frac{1}{n}$\;
                }
                \Else{
                    $\tau_i^{(t+1)} \assign \tau_i^{(t)}$\;
                }
                \If{$\tau_i^{(t+1)} \neq \tau_i^{(t)}$}{$H_i \assign \emptyset$\;}
            }
        }
\end{algorithm}

In \cite[Theorem~2]{doerr2018significance}, Doerr and Krejca analyzed the performance of the sig-cGA on LeadingOnes. 
In order to perform the anytime analysis for the sig-cGA on \BinVal, we first introduce \Cref{lem:sig-cGA-prob-a-overline} to overcome the difference of replacing \LeadingOnes by \BinVal as the fitness function $f$.

\begin{lemmaE}[][normal]
    Let $n \in \mathbb{N}$ be the length of the bit string $x$ that gets optimized. 
    Consider the sig-cGA optimizing \BinVal with $\epsilon > 12$ being a constant.
    Let $i \in \CurlyBrackets{1,\ldots,n}$ be a fixed index and consider any of the first $\mathcal{O}(n \log n)$ iterations such that $\tau_i \in [1/n, 1/2]$. 
    Suppose that~$\tau_j = 1 - 1/n$ for all $j < i$.
    Denote $\overline{A}$ as the event that the bit at index $i$ determines the winner of the two sampled bit strings.
    Then, the probability of $\overline{A}$ can be lower bounded by $\PrMath{\overline{A}} \geq e^{-2}$.
    \label{lem:sig-cGA-prob-a-overline}
\end{lemmaE}

\begin{proof}[Proof sketch]
    With \BinVal as the fitness function of the sig-cGA in this setting, event $\overline{A}$ occurs if the two sampled offspring have the same value at each position $j < i$ and a different one at index~$i$.
    Using $\tau_j = 1 - 1/n$ for all $j < i$ to calculate the probability that both offspring have the same value at index $j$, and multiplying it for all $j < i$ results in the lower bound of $\PrMath{\overline{A}}$.
\end{proof}

\begin{proofE}
    Since \BinVal is the fitness function of the sig-cGA in this setting, event $\overline{A}$ occurs if the two sampled offspring have the same value at each position $j < i$.
    Using $\tau_j = 1 - 1/n$ for all $j < i$, both offspring have the same value at index $j$ with probability  
    \begin{align}
        \tau_j^2 + (1-\tau_j)^2 &= \Brackets{1 - \frac{1}{n}}^2 +\Brackets{\frac{1}{n}}^2\notag\\
        &= 1 - \frac{2}{n} + \frac{2}{n^2}.
        \label{eq:sig-cGA:sing_prob}
    \end{align}
    Using (\ref{eq:sig-cGA:sing_prob}) over all $j < i$, $i \leq n$ and $\Brackets{1-\frac{2}{n}}^{n-1} \geq e^{-2}$, the probability of $\overline{A}$ can be bounded by
    \begin{align*}
        \PrMath{\overline{A}} &= (\tau_j^2 + (1-\tau_j)^2)^{i-1}\\
        &= \Brackets{1 - \frac{2}{n} + \frac{2}{n^2}}^{i-1}\\
        &\geq \Brackets{1 - \frac{2}{n}}^{i-1}\\
        &\geq \Brackets{1 - \frac{2}{n}}^{n-1}\\
        &\geq e^{-2}.
    \end{align*}
\end{proofE}

By applying \Cref{lem:sig-cGA-prob-a-overline} in the proof of \cite[Theorem~2]{doerr2018significance}, \Cref{cor:sig-cGA_spec_anytime_analysis} follows.

\begin{corollaryE}[][normal]
    Let $n \in \mathbb{N}$ be the length of the bit string $x$ that gets optimized and $k \in o(n)$. 
    Consider the sig-cGA optimizing \BinVal with $\epsilon > 12$ being a constant. 
    Then, the number of iterations until the first $k$ bits are set to $1$ is in $\mathcal{O}(k\log n)$ with high probability and in expectation.
    \label{cor:sig-cGA_spec_anytime_analysis}
\end{corollaryE}

\begin{proof}[Proof sketch]
    In \cite[Theorem~2]{doerr2018significance}, Doerr and Krejca prove the global optimization time for the benchmark function \LeadingOnes, and state that the number of iterations to optimize the first~$k$ bits is in $\mathcal{O}(k\log n)$, as explained after their proof.
    To reuse their analysis, the proof must be adapted only at one step, as the other assumptions also hold for \BinVal.
    Applying \Cref{lem:sig-cGA-prob-a-overline} closes this gap and concludes the proof of \Cref{cor:sig-cGA_spec_anytime_analysis}.
\end{proof}

\begin{proofE}
    The optimization time until the sig-cGA defined in \Cref{alg_sig-cga} sets the first $k$ bits to $1$, follows directly from \cite[Theorem~2]{doerr2018significance}, as explained after their proof.
    
    In \cite[Theorem~2]{doerr2018significance}, Doerr and Krejca prove the optimization time for the benchmark function \LeadingOnes.
    In order to reuse their analysis, the proof must be adapted only at one step, as the other assumptions also hold for \BinVal.
    Consider an index $i \in \CurlyBrackets{1,\ldots,n}$ and any of the first $\mathcal{O}(n \log n)$ iterations such that $\tau_i \in [1/n, 1/2]$. 
    Suppose that $\tau_j = 1 - 1/n$ for $j < i$.
    Analogously to \cite[Theorem~2]{doerr2018significance}, let $A$ be the event that the bit at position $i$ of the winning offspring is irrelevant for the selection.
    $\overline{A}$ is the complementary event that the bit at position $i$ determines the winner of the two sampled bit strings.
    For \LeadingOnes as fitness function, event $\overline{A}$ occurs if all bits of both offspring at all indices $\CurlyBrackets{1,\ldots,i-1}$ are $1$.
    In contrast, for the \BinVal function, this is already the case if the two offspring have the same value at each position $j < i$.
    In \cite[Theorem~2]{doerr2018significance}, the lower bound $\PrMath{\overline{A}} \geq e^{-2}$ is used.
    Applying \Cref{lem:sig-cGA-prob-a-overline}, we can use the same lower bound for \BinVal.

    Since this is the only step in the proof of \cite[Theorem~2]{doerr2018significance} that differs for \LeadingOnes and \BinVal, it follows that $\tau_i$ is set to~$1~-~1/n$ in $\On((1/\tau_i)\log n)$ iterations with a probability of at least 
    \begin{align*}
        1 - n^{1 - \epsilon^2/3} \geq 1 - n^{-47}
    \end{align*}
    after all frequencies at indices $\CurlyBrackets{1,\ldots,i-1}$ are at $1-1/n$.
    The number of iterations $\On(k \log n)$ for setting the first $k$ bits to $1$ with high probability and in expectation follows.
\end{proofE}

Since the threshold $s$ as defined in (\ref{sig-cga:significance}) and the lower $(1/n)$ and upper $(1-1/n)$ bounds of the frequencies incorporate the factor $\log n$ in the upper bound of \Cref{cor:sig-cGA_spec_anytime_analysis}, fixing a $\tilde{k} < n$ and defining the threshold and frequencies depending on it, results in removing the dependency on $n$ in \Cref{cor:sig-cGA_spec_anytime_analysis-with-tilde-k} for the anytime analysis of all $k \leq \tilde{k}$.

\begin{corollaryE}[][normal]
    Let $n \in \mathbb{N}$ be the length of the bit string $x$ that gets optimized, $k, \tilde{k} \in \N$ with $k \leq \tilde{k} < n$.
    Consider the sig-cGA optimizing \BinVal with $\epsilon > 12$ being a constant.
    Replace in the sig-cGA the lower and upper bound frequencies by $1/\tilde{k}$ and $1 - 1/\tilde{k}$, and with $\mu \in \R^+$ the significance threshold $s$ by
    \begin{align*}
        s(\epsilon, \mu) = \epsilon \max \CurlyBrackets{ \sqrt{\mu \ln \tilde{k}}, \ln \tilde{k}}.
    \end{align*}
    Then, the number of iterations until the first $k$ bits are set to $1$ is in~$\mathcal{O}(k\log \tilde{k})$ with high probability and in expectation.
    \label{cor:sig-cGA_spec_anytime_analysis-with-tilde-k}
\end{corollaryE}

\begin{proofE}
    \Cref{cor:sig-cGA_spec_anytime_analysis-with-tilde-k} follows directly from the proof of \cite[Theorem~2]{doerr2018significance} as all dependencies of $n$ in the sig-cGA are replaced by $\tilde{k}$ and bit flips at indices larger than $ \tilde{k}$ do not affect whether a bit at index smaller than $\tilde{k}$ is decisive for a better or worse fitness function value due to the definition of \BinVal.
    Incorporating the adapted frequencies depending on $\tilde{k}$ into \Cref{lem:sig-cGA-prob-a-overline} and the resulting probability of $\overline{A}$ analogously to the proof of \Cref{cor:sig-cGA_spec_anytime_analysis}, ensures that the Corollary holds for \BinVal as fitness function.
\end{proofE}

In order to prove its dependency on $n$, respectively $\tilde{k}$, we show in the following the lower bound of the sig-cGA to optimize the first $k$ bits.

\begin{lemmaE}[][normal]
    Let $n \in \mathbb{N}$ be the length of the bit string $x$ that gets optimized and $k \in o(n)$. 
    Consider the sig-cGA optimizing \BinVal with $\epsilon > 12$ being a constant. 
    Then, the number of iterations until the first $k$ bits are set to $1$ is at least $\Omega\Brackets{k\log n}$ in expectation.
    \label{lem:sig-cGA_spec_anytime_analysis:lower-bound}
\end{lemmaE}

\begin{proof}[Proof sketch]
    In order to obtain the lower bound of the sig-cGA, we show that updating at least $i\in \CurlyBrackets{1,\ldots,k}$ of the first $k$ bits requires $\Omega(i \ln k)$ iterations, and sampling then the optimum has a run time of $\Omega(2^{k-i})$.
    This results in the lower bound $\Omega\Brackets{\min_{i \in \CurlyBrackets{1,\ldots,k}}\Brackets{i \log n + 2^{k-i}}}$.
    By analyzing the trade-off between the term linearly increasing in $i$ and the exponentially decreasing one, we can lower bound it by $\Omega(k \log n)$.
\end{proof}

\begin{proofE}
    Let $i \in \CurlyBrackets{1,\ldots,k}$ and denote by $T_L$ the random variable that describes the minimum number of iterations until the first $k$ bits are set to $1$ in one of the two sampled offspring.

    In the following, we consider a bit with index $j \leq k$ and frequency $\tau_j = 1/2$ 
    According to (\ref{sig-cga:sig-update-function}), for a $m \in \N$, at least $2^{m-1} + \ln n$ of the last $2^m$ entries in the history of $j$ are required to be $1$ to update $\tau_j$ to $1 - 1/n$.
    As long as the bit at index $j$ is not relevant for selecting the better offspring, $1$s and $0$s are stored with the same probability $\tau_j = 1/2$ in the history.
    Therefore, in expectation, there are at least $\Omega(\ln n)$ iterations with $j$ as the relevant bit required to update $\tau_j$.
    Thus, to update the frequency of $i$ of the first $k$ bits to $1 - 1/n$, the algorithm requires at least $\Omega(i \ln n)$ iterations.

    As soon as the frequency of $i$ bits is updated to $1 - 1/n$, the probability $p_i$ of sampling the optimum bit string where the first $k$ bits are set to $1$ is
    \begin{align}
        p_i = \Brackets{1-\frac{1}{n}}^i \Brackets{\frac{1}{2}}^{k-i}.\label{eq:sig-cga:lower-bound:p}
    \end{align}
    Since two offspring are sampled in every iteration, the random variable $T_L$ is geometrically distributed with parameter $2p$.
    Using~(\ref{eq:sig-cga:lower-bound:p}), $\ExMath{T_{L,i}}$ when $i$ bits are updated can be lower bounded by
    \begin{align*}
        \ExMath{T_{L,i}} &= \frac{1}{2p_i}\\
        &= \frac{1}{2\Brackets{1- \frac{1}{n}}^{i}\Brackets{\frac{1}{2}}^{k-i}}\\
        &= \Omega\Brackets{2^{k-i}}.
    \end{align*}

    Since the optimum can be sampled for each $i \in \CurlyBrackets{1,\ldots,k}$, the run time can be lower-bounded by
    \begin{align}
        \ExMath{T_L} = \Omega\Brackets{\min_{i \in \CurlyBrackets{1,\ldots,k}}\Brackets{i \log n + 2^{k-i}}}.\label{eq:sig-cga:lower-bound:lower-bound-min}
    \end{align}

    In order to bound the term in Equation~\eqref{eq:sig-cga:lower-bound:lower-bound-min}, we observe a trade-off between a term increasing linearly in $i$ and one decreasing exponentially.
    The minimum is attained when the two terms are balanced, specifically at $i \approx k - \log \log n$.
    Substituting this value, the expression simplifies to $\Omega(k\log n- \log \log n)$.
    Given that $k<n$, this is asymptotically dominated by $\Omega(k\log n)$, which concludes the proof of the lemma. 
\end{proofE}

Replacing again $n$ by $\tilde{k}$ in the frequencies and the significance threshold, the lower bound for this scenario follows directly from\Cref{lem:sig-cGA_spec_anytime_analysis:lower-bound}.

\begin{corollary}
    Let $n \in \mathbb{N}$ be the length of the bit string $x$ that gets optimized, $k, \tilde{k} \in \N$ with $k \leq \tilde{k} < n$.
    Consider the sig-cGA optimizing \BinVal with $\epsilon > 12$ being a constant.
    Replace in the sig-cGA the lower and upper bound frequencies by $1/\tilde{k}$ and $1 - 1/\tilde{k}$, and with $\mu \in \R^+$ the significance threshold $s$ by
    \begin{align*}
        s(\epsilon, \mu) = \epsilon \max \CurlyBrackets{ \sqrt{\mu \ln \tilde{k}}, \ln \tilde{k}}.
    \end{align*}
    Then, the number of iterations until the first $k$ bits are set to $1$ is at least $\Omega\Brackets{k\log \tilde{k}}$ in expectation.
    \label{cor:sig-cGA_spec_anytime_analysis-with-tilde-k:lower-bound}
\end{corollary}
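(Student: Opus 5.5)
This corollary should follow from \Cref{lem:sig-cGA_spec_anytime_analysis:lower-bound} by substituting $\tilde{k}$ for $n$ wherever $n$ enters the algorithm's parameters. The key observation is that under \BinVal, a bit at an index larger than $k$ never affects which of two offspring wins, unless both offspring agree on all of the first $k$ bits. It also never affects whether the first $k$ bits of a sampled offspring are all $1$. So the process restricted to the first $k$ frequencies and histories behaves exactly as in the lemma, except that the frequency bounds are now $1/\tilde{k}$ and $1-1/\tilde{k}$ and the threshold is $s(\epsilon,\mu)=\epsilon\max\{\sqrt{\mu\ln\tilde{k}},\ln\tilde{k}\}$.

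The argument has two steps, mirroring the lemma.

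\textbf{Step 1 (cost of raising frequencies).} Fix $j\le k$ with $\tau_j=1/2$. By the modified significance function, some window of length $2^m$ of $H_j$ must contain at least $2^{m-1}+\epsilon\ln\tilde{k}$ ones before $\tau_j$ can move up to $1-1/\tilde{k}$. In iterations where bit $j$ does not decide the winner, the recorded bit is a fair coin. Hence the excess of ones over $2^{m-1}$ must essentially come from iterations where $j$ is decisive. This gives $\Omega(\ln\tilde{k})$ such iterations in expectation per frequency. Since at most one index is decisive per iteration, raising $i$ of the first $k$ frequencies costs $\Omega(i\ln\tilde{k})$ iterations.

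\textbf{Step 2 (cost of sampling the target directly).} Once only $i$ frequencies have been raised, a single offspring has all of the first $k$ bits equal to $1$ with probability at most $(1-1/\tilde{k})^{i}2^{-(k-i)}\le 2^{-(k-i)}$. Two offspring are drawn per iteration, so the waiting time is $\Omega(2^{k-i})$ in expectation.

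Combining the two steps gives the bound $\Omega\bigl(\min_{1\le i\le k}(i\log\tilde{k}+2^{k-i})\bigr)$. For $i\le k-\log_2(k\log\tilde{k})$, the exponential term is at least $k\log\tilde{k}$. For larger $i$, the linear term is $\Omega(k\log\tilde{k})$ provided $\log(k\log\tilde{k})\le k/2$, which holds since $k\le\tilde{k}$. I would also separately handle the degenerate regime where $k$ is tiny relative to $\log\log\tilde{k}$; I expect this to be absorbed by an additive constant, or to need a small additional assumption, as implicitly done in the lemma.

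The main obstacle is Step 1. One must justify that the history bits recorded while $j$ is irrelevant are unbiased fair coins, and that the required excess of $\epsilon\ln\tilde{k}$ ones cannot be produced by chance from these fair bits. Over a polynomial window this is plausible by a Chernoff argument, but the significance test scans all window lengths $2^m$, so a union bound over $m$ is needed. That union bound is where the choice $\epsilon>12$ and the $\ln\tilde{k}$ scaling would be used. Everything else is a verbatim transfer of the proof of \Cref{lem:sig-cGA_spec_anytime_analysis:lower-bound} with $n$ replaced by $\tilde{k}$.
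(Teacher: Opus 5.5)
Your route is the paper's. The paper obtains this corollary in one line, by rerunning the proof of \Cref{lem:sig-cGA_spec_anytime_analysis:lower-bound} with $n$ replaced by $\tilde{k}$: $\Omega(i\log\tilde{k})$ iterations to raise $i$ of the first $k$ frequencies, against an $\Omega(2^{k-i})$ wait to sample the target, balanced over $i$.

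\textbf{The balancing step fails.} The inequality $\log_2(k\log\tilde{k})\le k/2$ does not follow from $k\le\tilde{k}$, which bounds $k$ from the wrong side. You need $k\ge C\log\log\tilde{k}$ for a suitable constant $C$.

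The leftover regime also cannot be absorbed by an additive constant. There the bound is false for the quantity you are bounding, namely the first iteration in which an offspring has its first $k$ bits equal to $1$. Take $k=1$:
\begin{itemize}
\item The first iteration succeeds with probability $3/4$.
\item Before success every winner has first bit $0$, so $\tau_1$ can only move down to $1/\tilde{k}$.
\item That move needs at least $2\epsilon^2\ln\tilde{k}$ consecutive failures, an event of probability $\tilde{k}^{-\Omega(\epsilon^2)}$.
\item After it, the expected remaining wait is $O(\tilde{k})$.
\end{itemize}
Hence the expected time is $O(1)$, not $\Omega(\log\tilde{k})$.

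The paper's proof has the same hole. Balancing at $i\approx k-\log\log\tilde{k}$ gives about $(k-\log\log\tilde{k}+1)\log\tilde{k}$. This is $\Omega(k\log\tilde{k})$ only once $k$ exceeds a constant multiple of $\log\log\tilde{k}$.

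\textbf{How to repair it.} You need one of two changes:
\begin{itemize}
\item Add the hypothesis $k\ge C\log\log\tilde{k}$.
\item Measure the time until $\tau_1,\dots,\tau_k$ all reach $1-1/\tilde{k}$, which is the notion under which the matching upper bound is proved. Under this reading your Step~1 alone gives $\Omega(k\log\tilde{k})$ for every $k$, and Step~2 is unnecessary.
\end{itemize}

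Separately, a minimum of two expected waiting times is not by itself a lower bound on an expectation. To make the combination rigorous:
\begin{itemize}
\item Use Step~1 in constant-probability form: during the first $c\,k\log\tilde{k}$ iterations, at most $k-\log_2(k\log\tilde{k})-s$ frequencies ever reach the upper border.
\item Then take a union bound over those iterations. It bounds the probability of sampling the target in that window by $2c\cdot 2^{-s}$, a small constant.
\end{itemize}
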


%% file: sect_EFHT_EA_AdjustingMR.tex
\section{Fixed-target analysis of the \EA with adjusting mutation rate}
\label{sect:ea-adjusting}
We consider now an idealized \EA with an adjusting mutation rate during its optimization, in order to show later that the run time of a realistic self-adjusting variant is not far off this idealized one.

The intuition behind the algorithm is that we divide the bit string $x$ that needs to be optimized into bit blocks of a specific size, containing the first bit indices.
For every $m \in \N$ with $m \leq \log_2 n$, we define the block that contains the first $2^m$ bits of $x$.
According to~\cite[Theorem~3.1]{WITT_2013}, a fixed mutation rate of $1/n$ of the \EA is optimal to optimize a bit string with length $n$.
Since we use \BinVal as the fitness function, only the bit flips of the first $2^m$ bits have an impact on their optimization, and a mutation rate of $1/2^m$ is optimal for this purpose.
Therefore, we define the \EA with adjusting mutation rate as shown in Algorithm~\ref{alg_ea_adjusted_mr} such that it chooses iteratively the greedy optimal mutation rate for the smallest possible block in which the first bit, that is not optimized yet, is located.
For $i \in \CurlyBrackets{1,\ldots,n}$ as the smallest bit index with~$x_i = 0$, it adjusts the mutation rate~$\chi$ to~$1/2^m$ with~$m=\lceil \log_2 i \rceil$.
Besides that, the algorithm optimizes the bit string in the same way as the standard \EA with fixed mutation rate.

\begin{algorithm}
    \caption{\EA with adjusting MR}\label{alg_ea_adjusted_mr}
        Sample $x \in \{0,1\}^n$ uniformly at random\;
        \For{$t=0$ \KwTo $\infty$}{
            $\chi \assign \textsc{OptimalMutationRate}(x)$\;
            $y \assign \textsc{Mutate}(x,\chi)$\;
            \If{$f(y) \geq f(x)$}{
                $x \assign y$\;
            }
        }
\end{algorithm}

As shown in \Cref{alg_ea_adjusted_mr}, we assume that we have an oracle function $\textsc{OptimalMutationRate}$ that gives us the greedy optimal mutation rate for the bit string $x$ in every iteration before applying the mutation operator, as given in \Cref{def:prelim:mut-op}.
For our scenario of optimizing the \BinVal fitness function, we define the following oracle function as described in \Cref{alg_opt_mr_function}.

\begin{algorithm}
    \caption{The Greedy Optimal MR function for \BinVal}\label{alg_opt_mr_function}
        Let $x \in \{0,1\}^n$\;
        \For{$i \in \CurlyBrackets{1,\ldots,n}$}{
            \If{$x[i] = 0$}{
                $m \assign \lceil \log_2 i \rceil$\;
                \Return $\chi = 1/2^m$\;
            }
        }
        \Return $\chi = 1/n$\;
\end{algorithm}

The \EA does not have to set the bits strictly from the first to the last one while optimizing \BinVal.
In order to use this characteristic for the run time analysis of the \EA with adjusting mutation rate and to align it with the definition of the bit index blocks, we analyze the optimization of \BinVal in phases.
Phase~$r$ with $1 \leq r \leq \log_2 n$ starts as soon as the first $2^{r-1}$ bits are set to $1$.
According to \cite[Theorem~3.1]{WITT_2013} and \Cref{alg_opt_mr_function}, the greedy optimal mutation rate remains at $1/2^{r}$ during Phase $r$.
Using the definition of \BinVal, as soon as the algorithm progresses to Phase~$r$, it is not possible to go back to a previous phase since the already optimized first~$2^{r-1}$ bits do not flip back to $0$.

\begin{lemmaE}[][normal]
    Let $n \in \N$ be the length of the bit string that gets optimized and $r \in \N$ with $2^{r} \leq n$.
    Consider the \EA with adjusting mutation rate and the corresponding oracle function for the greedy optimal mutation rate that optimizes \BinVal.
    We define the phases of the algorithm as follows.
    In Phase~$r$, the first $2^{r-1}$ bits of the bit string are already set to $1$, and we transition to the next Phase~$r+1$ as soon as the first $2^{r}$ bits are set.
    Suppose we are in Phase~$r$ of the algorithm.
    Then the expected number of iterations until the first $2^{r}$ bits are set to $1$ is in $\On(2^r \log 2^r)$.
\label{lem_runtime_per_phase}
\end{lemmaE}

\begin{proofE}
    Let $r \in \N$ with $2^{r} \leq n$ be the current phase.
    To apply the multiplicative drift theorem, we first define the potential function $g\colon \CurlyBrackets{0,1}^{2^r} \rightarrow \R$ analogously to \Cref{lem:gen:mult-drift}. 
    For all $i \in \{1,\ldots,k\}$, we denote $w_i = 1 + \frac{2^r+1-i}{2^r}$.
    Let
    \begin{align*}
        g(x) &= \sum_{i=1}^{2^r}w_i\cdot\Brackets{1-x_i}\\
        &= \sum_{i=1}^{2^r}\Brackets{1 + \frac{2^r+1-i}{2^r}}\cdot\Brackets{1-x_i}
    \end{align*}
    for all $x \in \CurlyBrackets{0,1}^{2^r}$.

    Let $T = \min\{t \mid g(x) = 0\}$.
    Using \Cref{lem:gen:mult-drift}, it holds that for all $t \leq T$ and $\chi = 1/2^r$, the expected difference is 
    \begin{align*}
        \ExMath{g(x_t) - g(x_{t+1})} &\geq \frac{g(x_t)}{4e2^r} = \delta g(x_t),
    \end{align*}
    with $\delta = \frac{1}{4e2^r}$.
    Since $w_i \leq 2$ for all $i \leq 2^r$ and the phase definition implies that at least the first $2^{r-1}$ bits are already set to $1$ at the beginning of Phase~$r$, for $x_0$ it holds that $\ExMath{g(x_0)} \leq 2^r$.
    
    The expected number of iterations $T$ until the first $2^r$ bits of the bit string are optimized can be upper-bounded using the multiplicative drift theorem, as stated in \Cref{thm:prelim:mult}, by
    \begin{align*}
        \ExMath{T} &\leq \frac{1 + \ln \ExMath{g(x_0)}}{\delta}\\
        &= 4e2^r \Brackets{1+\ln 2^r}\\
        &\in \On\Brackets{2^r\log 2^r}.
    \end{align*}
\end{proofE}

In the following theorem, we use \Cref{lem_runtime_per_phase} and our phase definition to bound the expected number of iterations until the first~$k$ bits are optimized.

\begin{theoremE}[][normal]
    Let $n \in \mathbb{N}$ be the length of the bit string $x$ that gets optimized and $k \in o(n)$. 
    Consider the \EA with adjusting mutation rate $\chi$ and the oracle function as described in \Cref{alg_opt_mr_function} with \BinVal as fitness function. 
    Then the expected number of iterations until the first $k$ bits are set to $1$ is $\On(k\log k)$ in expectation.
    \label{thm:ea-with-adj-mut-rate-run-time}
\end{theoremE}

\begin{proof}[Proof sketch]
    The intuition behind the proof of \Cref{thm:ea-with-adj-mut-rate-run-time} is that, since the number of optimized leading bits is doubled in every iteration, it takes $\lceil \log k \rceil$ phases until the first $k$ bits are optimized.
    Summing the iterations of all phases until then and using the dominance of the largest sum concludes the proof.
\end{proof}

\begin{proofE}
    By applying \Cref{lem_runtime_per_phase}, it holds that the \EA with adjusting mutation rate takes $\On(2^r \log 2^r)$ iterations in expectation to double the number of leading $1$ bits from~$2^{r-1}$ to $2^{r}$.
    Since the \BinVal function ensures that the optimized leading bits cannot be set to $0$ again, the algorithm does not go back to Phase $r-1$ when the first $2^{r-1}$ bits are set to $1$.
    As we are doubling the number of leading bits in each phase, it takes $\lceil \log k \rceil$ phases until the first $k$ bits are optimized.
    Summing up the iterations until the first $k$ bits are set to $1$ and using the dominance of the largest sum term, results in an expected number of iterations of at most
    \begin{align*}
        \sum_{r = 0}^{\log k} 2^r \log 2^r \in \On(k \log k).
    \end{align*}
\end{proofE}

In order to show the optimality of the \EA with adjusting mutation rate to optimize the first $k$ bits for all $k$ simultaneously, we derive the following corollary from \Cref{lem:lower_bounds:gen_mut_rate} by using the stochastic dominance of the \EA with fixed mutation rate for a known $k$.

\begin{corollaryE}[][restate,end,category=eaAdjMr,text link=]
    Let $n \in \mathbb{N}$ be the length of the bit string $x$ that gets optimized and $k \in o(n)$. 
    Consider the \EA with adjusting mutation rate $\chi$ and the oracle function as described in \Cref{alg_opt_mr_function} with \BinVal as fitness function. 
    Then the expected number of iterations until the first $k$ bits are set to $1$ is at least $\Omega(k\log k)$ in expectation.
    \label{cor:ea-with-adj-mut-rate-run-time-low-bound}
\end{corollaryE}

\begin{proofE}
    Assume we know the number of the first $k \in o(n)$ bits that should be optimized.
    Then, $\chi=1/k$ is the optimal fixed mutation rate for the standard \EA to optimize the first $k$ bits of a bit string with \BinVal as fitness function according to \cite[Theorem~3.1]{WITT_2013}.
    Until the first $k$ bits are optimized, the \EA with adjusting mutation rate has a mutation rate of $\chi \geq 2^m$ with $m = \lceil \log_2 k \rceil$.
    Thus, for a known $k$, its optimization of the first~$k$ bits gets stochastically dominated by the \EA with a fixed mutation rate of $\chi=1/k$.
    Therefore, the lower bound of \Cref{lem:lower_bounds:gen_mut_rate} can be applied and $\Omega(k\log k)$ follows.
\end{proofE}

It follows that the upper bound in \Cref{thm:ea-with-adj-mut-rate-run-time} is asymptotically tight.
Since it is, up to leading constants, equal to the expected upper bound of the fixed-target analysis for a known $k \in o(n)$ and the~\EA with the best fixed optimal mutation rate $\chi = 1/k$, this shows the strength of the \EA with adjusting mutation rate as an anytime algorithm.

%% file: sect_EA_Self-Adjusting-MR.tex
\section{Fixed-target analysis of the \EA with self-adjusting mutation rate}
\label{sect:ea-self-adj}
Usually, we do not have an approximate oracle function, as in Algorithm~\ref{alg_ea_adjusted_mr}, that returns the greedy optimal mutation rate in every iteration to increase the chances of making progress in the next bit block.
Therefore, we introduce, analogously to~\cite{doerr2019self}, a variant of the \EA that self-adjusts the mutation rate $\chi_t$ multiplicatively during its optimization of the bit string $x$.
The self-adjustment is based on the acceptance or rejection of the generated mutant.
In the following subsection, we introduce our variant of the \EA with self-adjusting mutation rate and analyze its anytime performance theoretically.

\subsection{Algorithm description}

As shown in \Cref{alg_ea_self_adjust_mr}, the algorithm does not differ from the standard \EA except for the self-adjustment of the mutation rate instead of keeping a fixed.
The mutation operator, as given in \Cref{def:prelim:mut-op}, is applied with this self-adjusting rate.
Let~$\chi_{\max}~=~1/2$ and~$\chi_{\min}~=n^{-m}$ for a constant~$m>1$.
Starting with an initial mutation rate $\chi_0 \in [\chi_{\min}, \chi_{\max}]$, we use two parameters $1 < a < 2$ and $0< b < 1$ to increase or decrease the mutation rate $\chi_t$ in the current iteration~$t$.
If the created mutant~$x_t'$ has at least the same fitness value as the former best known bit string~$x$, we increase the mutation rate~$\chi_t$ by multiplying it by~$a$.
Otherwise, we decrease it by multiplying it by~$b$.
The mutation rate~$\chi_t$ is bounded by the upper~$\chi_{\max}$ and lower limit~$\chi_{\min}$. 

\begin{algorithm}[htp]
    \caption{\EA with self-adjusting MR}\label{alg_ea_self_adjust_mr}
    \textbf{Parameter:} $\chi_0 \in [\chi_{\min}, \chi_{\max}]$\;
    Sample $x_0 \in \{0,1\}^n$ uniformly at random\;
    \For{$t=0$ \KwTo $\infty$}{
        $x_t' \assign \textsc{Mutate}(x_t,\chi_t)$\;
        \uIf{$f(x_t') \geq f(x_t)$}{
            $x_{t+1} \assign x_t'$\;
            $\chi_{t+1} \assign \min\{a\chi_t, \chi_{\max}\}$\;
        }
        \Else
        {
            $x_{t+1} \assign x_t$\;
            $\chi_{t+1} \assign \max\{b\chi_t, \chi_{\min}\}$\;
        }
    }
\end{algorithm}

The intuition behind the algorithm is that it should approach the optimal mutation rate to make progress in the next bit block that is not yet optimized by self-adjusting its mutation rate.
If a large amount of the first $2^r$ bits with $2^r < n/2$ are already set to~$1$, a mutation rate of $\chi = 1/2^{r+1}$ is optimal to optimize the next~$2^r$ bits according to \cite[Theorem~3.1]{WITT_2013}.
The self-adaptation of the mutation rate works as follows.
For a mutation rate larger than the optimal, the probability of flipping the already set bits of the first $2^r$ bits as the leftmost bit increases, resulting in the rejection of the generated mutant and, consequently, a decrease in the mutation rate.
A mutation rate that is significantly smaller than the optimal range reduces the probability of flipping the already optimized bits in the first $2^r$ bits but also the chances of making progress in the next~$2^r$ bits.
A smaller mutation rate also leads to optimizations of lower-order bits, which increases the mutation rate and brings it closer to the optimal range for the right choice of the parameters~$a$~and~$b$.

\subsection{Fixed-target analysis}

In order to analyze the fixed-target run time of the presented \EA with self-adjusting mutation rate and incorporate the characteristics of the \BinVal function, we divide its optimization progress into phases.
We define the phases and their transitions as follows.
Let~$r \in \N$ with $2^r \leq n/2$.
In Phase~$r$, we analyze the optimization of the first $2^r$ bits and therefore count the number of $0$ bits in this area.
We fix a constant $0<\gamma < 1-\ln(2)$ and perform the transition to the next Phase~$r+1$ as soon as the next decrease of the number of~$0$ bits in the first $2^r$ bits would result in fewer than $\gamma 2^r$ bits of value $0$ in this range.
The intuition behind this phase definition is that we can use it to determine upper and lower bounds for the distribution of the $0$ and $1$ bits at the indices larger than $2^r$, which is required for our drift analysis. 

In the following, all $\On$-Notations depend on the bit length $n \in \N$ and the number $k \in o(n)$ of bits that should be optimized.
The parameter $\gamma$ is a fixed coefficient.

Before we perform the anytime analysis of the \EA with self-adjusting mutation rate on the \BinVal fitness function, we will first analyze how many iterations the algorithm requires to transition to the next phase.

\begin{lemmaE}[][normal]
    \label{lem_self_adj_mut_phase_analysis}
    Let $n \in \N$ be the length of the bit string that gets optimized, $r \in \N$ with $2^r \leq n/2$, and $\gamma$ a constant with~$0~<~\gamma~<~1~-~\ln(2)$.
    Consider the \EA with self-adjusting mutation rate that optimizes \BinVal.
    Let $1 < a < 2$ and $0 < b < 1$ be the parameters of the algorithm such that $\log_2 a/(-\log_2 b)~>~(1-\gamma)/\gamma$ holds (one can choose, for example, $\gamma~=~0.15$, $a~=~1.85$ and $b~=~0.898$).
    Assume the algorithm is in Phase~$r$ and tries to optimize the first $2^r$~bits.
    It transitions to Phase~$r+1$ as soon as the next decrease of the number of $0$ bits in the first $2^r$ bits would result in fewer than $\gamma 2^r$ bits of value $0$ in this range.
    The expected number of iterations until the algorithm reaches Phase~$r+1$ is in~$\On(2^r \log 2^r)$.
\end{lemmaE}

\begin{proof}[Proof sketch]
    In order to prove \Cref{lem_self_adj_mut_phase_analysis}, we introduce a new combined potential function that handles the optimization of the first $2^r$ bits and the adjustment of the mutation rate towards its optimal value $\chi_{\text{opt}} = 1/2^r$ in Phase~$r$ simultaneously.
    Let $\alpha > 2$ and~$d>1$ be constants, $c \in \Theta(1/2^r)$, and $t\in \N$.
    For the optimization of the first $2^r$ bits, we define analogously to \cite[Theorem~4.1]{WITT_2013} the function 
    \begin{align*}
        g(x) &= \sum_{i=1}^{2^r} \Brackets{1+\frac{\alpha d \chi_{\text{opt}}}{\Brackets{1-d\chi_{\text{opt}}}^{2^r-1}}}^{2^r-i} (1-x_{i}).
    \end{align*}
    Following the idea of \cite[Theorem~3]{doerr2010multiplicative}, we scale $g$ logarithmically with $g_{\min} \geq 1$ and combine it additively with a second term that weights the adjustment of the mutation rate towards its optimal value.
    The resulting combined potential function is
    \begin{align*}
        h(x,\chi) &= \ln \Brackets{1+ \frac{g(x)}{g_{\min}}} + c \left \lvert \log_2 \frac{\chi}{\chi_{\text{opt}}}\right\rvert.
    \end{align*}
    Let $T$ denote the first iteration in which the algorithm leaves Phase~$r$.
    Using $\ExMath{h(x_0,\chi_0)}\in \OMath{\log 2^r}$ for the initial values $x_0$ and $\chi_0$, we conclude the proof of \Cref{lem_self_adj_mut_phase_analysis} by showing
    \begin{align}
        \ExMath{h(x_t, \chi_t) - h(x_{t+1}, \chi_{t+1}) \mid x_t, \chi_t} \geq \delta \in \Theta(1/2^r)
        \label{eq_drift_condition_cases_sketch}
    \end{align}
    for all $t \leq T$ and applying the additive drift theorem, as defined in \Cref{thm:prelim:add}.
    
    We subdivide the drift analysis into three cases according to different ranges of the mutation rate $\chi_t$ and show for all of them that (\ref{eq_drift_condition_cases_sketch}) holds.
    The ranges of $\chi_t$ corresponding to the three cases are $[\chi_{\min}, \chi_{\text{opt}}/d]$ (Case~I), $[\chi_{\text{opt}}/d, d\chi_{\text{opt}}]$ (Case~II) and $[d\chi_{\text{opt}}, \chi_{\max}]$ (Case~III).
    
    For $\chi_t \in [\chi_{\min}, \chi_{\text{opt}}/d]$, the intuition of Case~I is that the mutation rate is too small to make significant progress in the first $2^r$ bits.
    Using our phase definition, it is possible to analyze the distribution of the less signi\-ficant bits and show that the algorithm accepts enough mutations introduced by flipping these bits, allowing it to self-adjust the mutation rate towards its optimal value.
    The required drift stems then from the second term of the potential function $h$.

    For Case~II, we obtain the required drift due to the optimizations in the first $2^r$ bits, since the mutation rate $\chi_t\in [\chi_{\text{opt}}/d, d\chi_{\text{opt}}]$ is in the optimal range around $\chi_{\text{opt}}$.
    We show this by analyzing the potential function $g$ and proving the equivalence between the multiplicative and additive drift for our definition of the first term of $h$.
    The parameter $c$ ensures that potential adjustments of the mutation rate do not eliminate the expected drift.

    We consider the mutation rates $\chi_t \in [d\chi_{\text{opt}}, \chi_{\max}]$ of Case~III as too large to make significant progress in the first $2^r$ bits, as it becomes too likely that already optimized bits flip as the leftmost bit, resulting in a rejection of the generated mutant.
    We show that the implied self-adjustments of the mutation rate towards its optimal range are sufficient to obtain the required drift in the second term of $h$.

    As \BinVal is our fitness function, it is possible that even for a generated mutant with a leftmost $0$-to-$1$ bit flip in the first $2^r$ bits, the total number of $1$ bits in this block does not increase or even decrease.
    The exponentially increasing weights of the potential function $g$ ensure that we can bound the expected difference for the first term of $h$ such that it does not eliminate the required drift of the second term in the Cases~I and~III.
\end{proof}

\begin{proofE}
    Let $d > 1$ be a constant such that 
    \begin{align}
        \frac{1-\gamma}{\gamma} + \frac{1}{(d-1)\gamma} < \frac{\log_2 a}{- \log_2 b} < 2d\left(\frac{3}{4} - \frac{\gamma}{2}\right)+1 \label{eq_constraint_d}
    \end{align}
    and $d \in \On(1)$  holds.
    For fixed parameters $a$, $b$, and $\gamma$, the terms on the left of the first inequality of (\ref{eq_constraint_d}) decrease monotonically for larger $d$ and converge to $\frac{1-\gamma}{\gamma}$.
    The terms on the right of the second inequality of (\ref{eq_constraint_d}) increase monotonically unbounded for larger $d$.
    Therefore, for any parameter setting of $a$, $b$ and $\gamma$ as defined, there exists a
    \begin{align*}
        d > \max \left \{\frac{-\log_2 b}{\gamma \log_2 a + (1-\gamma) \log_2 b} + 1, \frac{1}{3-2\gamma} \frac{\log_2 a}{- \log_2 b} - 1\right\}
    \end{align*}
    such that (\ref{eq_constraint_d}) is fulfilled.
    Using that $d$ is only restricted by its lower bound, we assume $d > a$ and $d > 1/b$.
    Let
    \begin{align*}
        c < \min\left\{\frac{2d-1}{d^22^r}, \frac{d}{e^d2^r}\right\} \cdot \min\CurlyBrackets{\frac{1}{\log_2 a}, \frac{1}{-\log_2 b}} \cdot \frac{1}{4} \in \Theta(1/2^r).
    \end{align*}
    For our example parameter setting with $\gamma=0.15$, $a=1.85$ and $b=0.898$, the constraints are fulfilled for any $d \geq 13$ and 
    \begin{align*}
        c<\frac{13}{e^{13}2^r}\cdot \frac{1}{-\log_2 0.898}\cdot \frac{1}{4}.
    \end{align*}
    
    Due to the phase definition and characteristics of the \BinVal fitness function, we need to make more progress in the lower ordered~$2^{r-1}$ of the first $2^r$ bits to transition to the next phase.
    To achieve this, the greedy optimal mutation rate is $\chi_{\text{opt}} = 1/2^r$ according to \cite[Theorem~3.1]{WITT_2013} and the properties of \BinVal.

    We denote the bit at index $i \leq n$ of $x_t$ by $x_{t,i}$.
    In order to define a combined potential function, we use for the optimization of the first $2^r$ bits a similar approach to Witt in \cite[Theorem~4.1]{WITT_2013}.
    For an~$\alpha \in \Theta(1)$ with~$\alpha > 2$ and 
    \begin{align*}
        g_i=\Brackets{1+\frac{\alpha d \chi_{\text{opt}}}{\Brackets{1-d\chi_{\text{opt}}}^{2^r-1}}}^{2^r-i},
    \end{align*}
    we define
    \begin{align*}
        g(x_t) &= \sum_{i=1}^{2^r} g_i \cdot (1-x_{t,i})\\
        &= \sum_{i=1}^{2^r} \Brackets{1+\frac{\alpha d \chi_{\text{opt}}}{\Brackets{1-d\chi_{\text{opt}}}^{2^r-1}}}^{2^r-i} (1-x_{t,i}).
    \end{align*}
    Following the idea of \cite[Theorem~3]{doerr2010multiplicative}, we scale the function $g(x_t)$ logarithmically to combine it additively with the other part of the combined potential function.
    Let $g_{\min} \geq 1$ be the minimum function value of $g(x_t)$ for every current best solution that contains at least the minimum number of $\gamma 2^r$ bits of value $0$ in the first $2^r$ bits.
    Then, 
    \begin{align}
        h_1(x_t) &= \ln \Brackets{1+ \frac{g(x_t)}{g_{\min}}}. \label{eq:big-drift:def-h1}
    \end{align}
    For the mutation rate $\chi_t$, we define a potential function that handles its adjustment towards the optimal range to increase the probability of making progress.
    For this purpose, let
    \begin{align*}
        h_2(\chi_t) = c \left \lvert \log_2 \frac{\chi_t}{\chi_{\text{opt}}}\right\rvert.
    \end{align*}
    As we want to show additive drift for this process, we define the combined potential function as the sum of both parts, which results in
    \begin{align*}
        h(x_t,\chi_t) &= h_1(x_t) + h_2(\chi_t)\\
        &= \ln \Brackets{1+ \frac{g(x_t)}{g_{\min}}} + c \left \lvert \log_2 \frac{\chi_t}{\chi_{\text{opt}}}\right\rvert.
    \end{align*}

    The expected initial value of the potential function depends on the initial bit string $x_0$ and the mutation rate $\chi_0\leq\chi_{\max}=1/2$ at the beginning of Phase~$r$.

    Using the finite linear sum in the second inequality and the $e$-function in the third, the initial value of $g(x_0)$ can be upper-bounded by
    \begin{align}
        g(x_0) &\leq \sum_{i=1}^{2^r} g_i\notag\\
        &\leq \frac{\Brackets{1+ \frac{\alpha d\chi_{\text{opt}}}{\Brackets{1-d\chi_{\text{opt}}}^{2^r-1}}}^{2^r} - 1}{\alpha d\chi_{\text{opt}}\Brackets{1-d\chi_{\text{opt}}}^{1-2^r}}\notag\\
        &\leq \frac{e^{2^r\alpha d \chi_{\text{opt}}\Brackets{1-d\chi_{\text{opt}}}^{1-2^r}}}{\alpha d\chi_{\text{opt}}\Brackets{1-d\chi_{\text{opt}}}^{1-2^r}}.\label{eq:big-drift:g-x0}
    \end{align}

    Bounding $g_{\min} \geq 1$, and inserting (\ref{eq:big-drift:g-x0}) in (\ref{eq:big-drift:def-h1}) by taking the logarithm leads to
    \begin{align*}
        h_1(x_0) &\leq 2^r \alpha d \chi_{\text{opt}} \Brackets{1-d\chi_{\text{opt}}}^{1-2^r} \\
        &+ \ln \Brackets{1/(d \chi_{\text{opt}})} + \ln \Brackets{\Brackets{1-d\chi_{\text{opt}}}^{2^r-1}}.
    \end{align*}
    Applying $\alpha,d \in \Theta(1)$, $\chi_{\text{opt}} = 1/2^r$, $\Brackets{1-d\chi_{\text{opt}}}^{1-2^r} \in \Theta(1)$ and  $\Brackets{1-d\chi_{\text{opt}}}^{2^r-1} \in \Theta(1)$, results in
    \begin{align}
        h_1(x_0) \in \On\Brackets{\log 2^r}.
        \label{eq:drift:upper-bound-initial-h1}
    \end{align}
    
    Using (\ref{eq:drift:upper-bound-initial-h1}), $\chi_{0} \leq 1/2$ and $c \in \Theta(1/2^r)$, the combined expected initial value can be upper-bounded by
    \begin{align*}
        \ExMath{h(x_0,\chi_{0})} &= h_1(x_0) + c \left\lvert\log_2 \frac{\chi_{0}}{\chi_{\text{opt}}}\right\rvert\\
        &\leq h_1(x_0) + c \left\lvert\log_2  \frac{2^r}{2}\right\rvert\\
        &\in \On(\log 2^r).
    \end{align*}

    Let $\DoubleVert{x_t}_0$ denote the number of zero bits in the first $2^r$ bits of $x_t$ in iteration $t \in \N$.
    Let $T = \inf\CurlyBrackets{t\in\N\mid \DoubleVert{x_t}_0\leq\gamma2^r}$ and $T' = \inf\{t\in\N\mid h(x_t,\chi_t)\leq \ln2\}$.
    By definition, it holds that $T~<~T'$.
    To conclude the proof of \Cref{lem_self_adj_mut_phase_analysis}, we have to show that the drift condition 
    \begin{align}
        \ExMath{h(x_t, \chi_t) - h(x_{t+1}, \chi_{t+1}) \mid x_t, \chi_t} \geq \delta 
        \label{eq_drift_condition_cases}
    \end{align}
    with $\delta \in \Theta(1/2^r)$ holds for all $t < T$.
    Then, according to the additive drift theorem in \Cref{thm:prelim:add}, it follows that
    \begin{align*}
        \ExMath{T} &\leq \frac{\ExMath{h(x_0, \chi_0)}}{\delta} \in \OMath{2^r\log 2^r}.
    \end{align*}

    In each iteration of the algorithm, we distinguish the possible effects on the potential function by two different events of the leftmost bit flip of the generated mutant.
    Let event $A$ denote that at least one bit flip occurs and the leftmost bit flip is in the first~$2^r$ bits, and let $B$ denote that the leftmost bit flip is a $0$-to-$1$ flip or that there are no bit flips.
    The negation $\overline{A}$ denotes accordingly that the leftmost bit flip is not in the first $2^r$ bits or no bit flips, and $\overline{B}$ that a~$1$-to-$0$ flip occurs as the leftmost flip.
    Note that the event $B$ controls if the mutation rate $\chi_t$ changes by multiplying it by $a$ in the case of $B$ or otherwise by $b$ in this iteration, since we use \BinVal as the fitness function of the algorithm.
    We include the possibility that no bit flips in event $B$ since it has the same impact on $\chi_t$.
    Using the intersection of $A$ and $B$, we can define four different events in each iteration.
    
    We are interested in all four combinations.
    \begin{enumerate}
        \item[] $A \cap B$: Leftmost bit flip is a $0$-to-$1$ flip in the first $2^r$ bits
        \item[] $A \cap \overline{B}$: Leftmost bit flip is a $1$-to-$0$ flip in the first $2^r$ bits
        \item[] $\overline{A} \cap B$: Leftmost bit flip is a $0$-to-$1$ flip after the first $2^r$ bits or no bit flips
        \item[] $\overline{A} \cap \overline{B}$: Leftmost bit flip is a $1$-to-$0$ flip after the first $2^r$ bits
    \end{enumerate}
    The number of $0$ bits in the first $2^r$ bits, denoted by $\DoubleVert{x_t}_0$, changes only in the intersection of $A$ and $B$.
    Then, the function value of $h_1$ changes by the weighted difference of the flipped $0$ and $1$ bits in the first $2^r$ bits.

    In the following, we denote the expected difference by
    \begin{align*}
        \ExMath{\Delta_t h\mid x_t,\chi_t} = \ExMath{h(x_t, \chi_t) - h(x_{t+1}, \chi_{t+1}) \mid x_t, \chi_t},
    \end{align*}
    and subdivide the drift analysis into three cases according to different ranges of the mutation rate $\chi_t$.
    The ranges of $\chi_t$ corresponding to the three cases are $[\chi_{\min}, \chi_{\text{opt}}/d]$~(Case~I), $[\chi_{\text{opt}}/d, d\chi_{\text{opt}}]$ (Case~II) and $[d\chi_{\text{opt}}, \chi_{\max}]$~(Case~III).
    
    The intuition behind the case distinction is that the algorithm makes enough progress with a mutation rate in the range of Case~II to obtain the required drift.
    Since in the other cases, Cases~I and~III, the mutation rate differs too much from its optimal value, the probability of making progress is too small to get the required drift. 
    We show instead that it stems from the adaptation of the mutation rate.
    In order to do this, we use Claim~\ref{claim:big-drift:lower-bound-drift}, which states that, in expectation, the drift of $h_1(x_t)$ is positive in the Cases~I~and~II and the absolute value of the possible negative difference in Case~III is in the order of $o\Brackets{1/2^r}$.
    We introduce Claim~\ref{claim:big-drift:change-g2} to bound the change of $h_2(\chi_t)$ depending on the acceptance or rejection of the generated mutant.

    \begin{claim}
        Let $\chi_t$ be the mutation rate in every iteration $t\in \N$ and $\Delta_t h_2 = h_2(\chi_t) - h_2(\chi_{t+1})$ the difference of $h_2$ in iteration $t$.
        By definition, it holds that $\log_2 a > 0$ and $\log_2 b < 0$.
        For~$\chi_{t+1} = a \chi_t$ and event $B$, the difference $\Delta_t h_2$ is
        \begin{itemize}
            \item[-] $\Delta_t h_2 = c \log_2 a$; for $\chi_t \in [\chi_{min},\chi_{\text{opt}}/a]$,
            \item[-] $\Delta_t h_2 = -c \log_2 a$; for $\chi_t \in [\chi_{\text{opt}},\chi_{\max}]$,
            \item[-] $\Delta_t h_2\geq-c \log_2 a$; for $\chi_t \in [\chi_{\text{opt}}/a,\chi_{\text{opt}}]$.
        \end{itemize}
        Analogously, for $\chi_{t+1} = b \chi_t$ and event $\overline{B}$, the difference $\Delta_t h_2$ is
        \begin{itemize}
            \item[-] $\Delta_t h_2 = c \log_2 b$; for $\chi_t \in [\chi_{min},\chi_{\text{opt}}]$,
            \item[-] $\Delta_t h_2 = -c \log_2 b$; for $\chi_t \in [\chi_{\text{opt}}/b,\chi_{\max}]$,
            \item[-] $\Delta_t h_2 \geq c \log_2 b$; for $\chi_t \in [\chi_{\text{opt}},\chi_{\text{opt}}/b]$.
        \end{itemize}
        \label{claim:big-drift:change-g2}
    \end{claim}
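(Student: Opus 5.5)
This claim is a direct computation from the definition $h_2(\chi)=c\lvert\log_2(\chi/\chi_{\text{opt}})\rvert$. I would set $u_t=\log_2(\chi_t/\chi_{\text{opt}})$. Then $\Delta_t h_2=c(\lvert u_t\rvert-\lvert u_{t+1}\rvert)$. Under event $B$ the update is $\chi_{t+1}=a\chi_t$, so $u_{t+1}=u_t+\log_2 a$. Under $\overline{B}$ the update is $\chi_{t+1}=b\chi_t$, so $u_{t+1}=u_t+\log_2 b$. The proof reduces to a sign analysis of $u_t$ and $u_{t+1}$ on each of the three stated intervals, together with the triangle inequality $\lvert u_{t+1}\rvert\le\lvert u_t\rvert+\lvert u_{t+1}-u_t\rvert$.

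For the multiplication by $a$ I take the intervals in turn.
\begin{itemize}
\item If $\chi_t\le\chi_{\text{opt}}/a$, then $u_t\le-\log_2 a<0$ and $u_{t+1}\le 0$. Hence $\lvert u_t\rvert-\lvert u_{t+1}\rvert=-u_t+u_{t+1}=\log_2 a$.
\item If $\chi_t\ge\chi_{\text{opt}}$, then both $u_t$ and $u_{t+1}$ are nonnegative. Hence the difference is $u_t-u_{t+1}=-\log_2 a$.
\item If $\chi_t\in[\chi_{\text{opt}}/a,\chi_{\text{opt}}]$, the sign may change, and the triangle inequality gives $\lvert u_t\rvert-\lvert u_{t+1}\rvert\ge-\log_2 a$.
\end{itemize}
The case of multiplication by $b$ is symmetric, with $\log_2 b<0$.
\begin{itemize}
\item For $\chi_t\le\chi_{\text{opt}}$, both $u_t$ and $u_{t+1}$ are nonpositive. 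Hence the difference is $-u_t+u_{t+1}=\log_2 b$.
\item For $\chi_t\ge\chi_{\text{opt}}/b$, we have $u_t\ge-\log_2 b$ and $u_{t+1}\ge0$. Hence the difference is $u_t-u_{t+1}=-\log_2 b$.
\item In between, the triangle inequality gives $\ge-\lvert\log_2 b\rvert=\log_2 b$.
\end{itemize}
Multiplying each case by $c$ gives the claim.

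The only real obstacle is the caps $\chi_{\max}$ and $\chi_{\min}$. With the caps, the actual update is $\min\{a\chi_t,\chi_{\max}\}$ or $\max\{b\chi_t,\chi_{\min}\}$, so the equalities in the second bullet of each group can fail at the boundary.

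I would handle this by reading the claim, as it is stated, under the hypothesis $\chi_{t+1}=a\chi_t$ (respectively $\chi_{t+1}=b\chi_t$), that is, in the uncapped situation. I would then note separately what happens when a cap binds.
\begin{itemize}
\item If the upper cap binds, then $\chi_t\ge\chi_{\max}/a>\chi_{\text{opt}}$ and $\lvert u_{t+1}-u_t\rvert\le\log_2 a$. So $\Delta_t h_2\in[-c\log_2 a,0]$, which is consistent with, and only better than, the stated value $-c\log_2 a$.
\item If the lower cap binds, then $\chi_t<\chi_{\min}/b$, which lies below $\chi_{\text{opt}}$ because $\chi_{\min}=n^{-m}\ll\chi_{\text{opt}}=2^{-r}$. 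In that range $\Delta_t h_2\in[c\log_2 b,0]$, again no worse than claimed.
\end{itemize}
Thus every lower bound used later remains valid.
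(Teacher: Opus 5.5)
Your proof is correct and follows the paper's route: a direct sign analysis of $\log_2(\chi_t/\chi_{\text{opt}})$ before and after the multiplicative update. Where the paper argues that the difference on the middle interval is minimized at $\chi_t=\chi_{\text{opt}}$, you use the triangle inequality instead. Your extra treatment of the caps $\chi_{\max},\chi_{\min}$, which the paper ignores, is a useful addition, though the step $\chi_{\max}/a>\chi_{\text{opt}}$ needs $r\ge 2$; for smaller $r$ the bound $\Delta_t h_2\ge -c\log_2 a$ still follows from the same triangle inequality.
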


    \begin{proof}[Proof of Claim~\ref{claim:big-drift:change-g2}]
        Suppose $\chi_t < \chi_{\text{opt}}/a$ and event $B$ in iteration $t \in \N$
        Following the event definition, the algorithm adjusts the mutation rate $\chi_t$ by multiplying it by $a$.
        Then, the difference of $h_2$ can be calculated by
        \begin{align*}
            &\Delta_t h_2\\
            &=h_2(\chi_t) - h_2(\chi_{t+1})\\
            &= h_2(\chi_t) - h_2(a\chi_t) \\
            &= c\left\lvert\log_2 \frac{\chi_t}{\chi_{\text{opt}}}\right\rvert - c \left\lvert\log_2 \frac{a\chi_t}{\chi_{\text{opt}}}\right\rvert\\
            &= c\Brackets{\log_2 \chi_{\text{opt}} - \log_2 \chi_t + \log_2 a + \log_2 \chi_t - \log_2 \chi_{\text{opt}}}\\
            &= c \log_2 a.
        \end{align*}
        The difference for $\chi_t \geq \chi_{\text{opt}}$ can be computed accordingly.
        Taking the absolute value of the logarithmic function in $h_2$ ensures that the difference $\Delta_t h_2$ for event $B$ and $\chi_t \in [\chi_{\text{opt}}/a,\chi_{\text{opt}}]$ is minimal for $\chi_t = \chi_{\text{opt}}$ which results in $\Delta_t h_2 \geq - c \log_2 a$.
        
        Using the same calculation and argumentation, the differences for event $\overline{B}$ follow.
    \end{proof}

    In order to bound the expected difference of $h_1$, we first show in the following claim, analogously to \cite[Theorem~3]{doerr2010multiplicative}, how to transform the multiplicative drift given for $g$ into an equivalent additive drift of $h_1$.

    \begin{claim}
        Let $\zeta \in \R$. 
        Suppose we obtain for $g$ a multiplicative drift of 
        \begin{align*}
            \ExMath{g(x_{t+1}) - g(x_t) \mid x_t} \geq \zeta \cdot g(x_t).
        \end{align*}
        Then, it is equivalent to an additive drift for $h_1$ of
        \begin{align*}
            \ExMath{h_1(x_{t+1}) - h_1(x_t) \mid h_1(x_t)} \geq \frac{\zeta}{2}.
        \end{align*}
        \label{claim:big-drift:equivalent-drift}
    \end{claim}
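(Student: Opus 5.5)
The plan is to compare the increment of $h_1$ with the relative increment of $g$, and to pay the factor $1/2$ for the shift by $1$ inside the logarithm. Throughout, write $y = g(x_{t+1})/g_{\min}$ and $z = g(x_t)/g_{\min}$, and restrict to iterations before the phase ends.

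The first step is the two-sided bound on $z$. As long as the phase is running, $x_t$ contains at least $\gamma 2^r$ zero bits among the first $2^r$ bits. By the definition of $g_{\min}$ this gives $g(x_t) \geq g_{\min} \geq 1$, hence $z \geq 1$ and $1+z \leq 2z$. This is the only source of the factor $1/2$: any bound of the form $(y-z)/(1+z)$ turns, after taking conditional expectations, into
\begin{align*}
    \frac{\ExMath{g(x_{t+1}) - g(x_t) \mid x_t}}{g_{\min}(1+z)} \geq \frac{\zeta g(x_t)}{g_{\min}\cdot 2z} = \frac{\zeta}{2},
\end{align*}
provided $\zeta \geq 0$. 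I would treat $\zeta<0$ separately or exclude it. Conditioning on $h_1(x_t)$ instead of $x_t$ is harmless, because $h_1(x_t)$ is a deterministic function of $x_t$, so the tower property passes the bound through.

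The second step, and the main obstacle, is the pointwise comparison of $\ln(1+y) - \ln(1+z)$ with $(y-z)/(1+z)$. Concavity of $\ln$ gives exactly the wrong direction:
\begin{align*}
    \ln(1+y) - \ln(1+z) \leq \frac{y-z}{1+z}.
\end{align*}
So the stated inequality cannot follow from Jensen alone and needs control of the step size. I would use the second-order expansion
\begin{align*}
    \ln(1+y)-\ln(1+z) \geq \frac{y-z}{1+z} - \frac{(y-z)^2}{2(1+\min\{y,z\})^2}
\end{align*}
and bound the quadratic error by the structure of the process under \BinVal. A mutant is accepted only if its leftmost flip is $0$-to-$1$. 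Because of the exponential weights $g_i$ (ratio $1+\Theta(1/2^r)$ per position, and at most a constant factor $e^{\Theta(\alpha d)}$ over the whole block), any accepted step changes $g$ by at most a constant factor. Changes by more than an additive $O(g/2^r)$ require several flips, which at $\chi_t = O(1/2^r)$ occur with small probability.

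With these bounds the error term contributes only $o(\zeta)$ in the regimes where the claim is later applied. If this fails for large $\chi_t$ (Case~III), I would fall back to the weaker statement that is actually used there: an additive drift of $h_1$ whose negative part has absolute value $o(1/2^r)$, obtained from the same expansion. Finally I would note that the argument is symmetric in the sign convention, so the version with $g(x_t)-g(x_{t+1})$, which is the form used in Claim~\ref{claim:big-drift:lower-bound-drift}, follows in the same way.
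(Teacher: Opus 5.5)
Your diagnosis in the second step is correct for the sign convention literally printed in the statement. In fact it shows more than you use: the increase version is false in general. If $g(x_{t+1}) = M g(x_t)$ with probability $\zeta/(M-1)$ and $g(x_{t+1}) = g(x_t)$ otherwise, then $\ExMath{g(x_{t+1})-g(x_t)\mid x_t} = \zeta g(x_t)$. Yet the expected increase of $h_1$ is at most $\zeta\ln M/(M-1)\to 0$. The printed signs are a slip in the paper. Its proof derives the pointwise bound for $h_1(x_t)-h_1(x_{t+1})$, and every use of the claim concerns the \emph{decrease} of $g$ and $h_1$: the next claim bounds $\Delta_t g = g(x_t)-g(x_{t+1})$, and Cases~I--III use $\Delta_t h = h(x_t)-h(x_{t+1})$. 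In that direction, the concavity inequality you wrote down is exactly what is needed. Rearranged, it reads $\ln(1+z)-\ln(1+y)\geq (z-y)/(1+z)$. Since $1+z$ is determined by $x_t$, taking conditional expectations and then using your first step gives
\begin{align*}
    \ExMath{h_1(x_t)-h_1(x_{t+1}) \mid x_t} \geq \frac{\ExMath{g(x_t)-g(x_{t+1}) \mid x_t}}{g_{\min}+g(x_t)} \geq \frac{\zeta g(x_t)}{2g(x_t)} = \frac{\zeta}{2}
\end{align*}
for $\zeta\geq 0$. This is the paper's whole proof; it obtains the same tangent-line bound from $1+u\leq e^u$, and no control of step sizes is needed.

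The gap is therefore in your closing sentence: the two sign conventions are not symmetric. For decreases, the first-order bound is a valid lower bound and finishes the proof in one line. For increases, it is only an upper bound, and the statement can be rescued only by controlling large jumps. Your second-order argument for that case is asserted rather than proved, in particular the $o(\zeta)$ bound on the quadratic error and the Case~III fallback. It is also tied to this particular process, and as you set it up it yields only $\zeta/2-o(\zeta)$. As written, the proposal proves neither version. The increase version rests on unproved estimates, and the decrease version, the one actually used, is reached only through an invalid appeal to ``symmetry''.

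Your caveats are good ones, though. The restriction $\zeta\geq 0$ is needed. The conditional expectation must be taken \emph{before} replacing $g_{\min}+g(x_t)$ by $2g(x_t)$; the paper does this replacement pointwise, which fails on steps where $g$ increases. Finally, ``equivalent'' should read ``implies''.
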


    \begin{proof}[Proof of Claim~\ref{claim:big-drift:equivalent-drift}]
        Using that $1 + \alpha \leq e^\alpha$ holds for all $\alpha \in \R$, it holds for all $z,z' \in \R_0^+$ that
        \begin{align*}
            \frac{1+z'}{1+z} = 1 + \frac{z'-z}{1+z} \leq \exp\Brackets{\frac{z'-z}{1+z}}.
        \end{align*}
        Taking the logarithm on both sides of the inequality and multiplying them by $-1$ leads to
        \begin{align*}
            \ln(1+z) - \ln(1+z') \geq \frac{z-z'}{1+z}.
        \end{align*}
        For all $t\in \N$, set $z = g(x_t)/g_{\min}$ and $z' = g(x_{t+1})/g_{\min}$ with~$g_{min}\geq~1$.
        Using $g_{min} \leq g(x_t)$ in the last inequality, results in
        \begin{align*}
            h_1(x_t) - h_1(x_{t+1}) &\geq \frac{g(x_t)/g_{\min} - g(x_{t+1})/g_{\min}}{1 + g(x_t)/g_{\min}}\\
            &= \frac{g(x_t) - g(x_{t+1})}{g_{\min} + g(x_t)}\\
            &\geq \frac{g(x_t) - g(x_{t+1})}{2 g(x_t)}.
        \end{align*}
        Suppose $t < T$ and $\ExMath{g(x_{t+1}) - g(x_t) \mid x_t} \geq \zeta \cdot g(x_t)$ as defined.
        Then,
        \begin{align*}
            \ExMath{h_1(x_{t+1}) - h_1(x_t) \mid h_1(x_t)} &\geq \frac{\ExMath{g(x_{t+1}) - g(x_t) \mid x_t}}{2 g(x_t)}\\
            &\geq \frac{\zeta \cdot g(x_t)}{2g(x_t)}\\
            &= \frac{\zeta}{2},
        \end{align*}
        which concludes the proof of Claim~\ref{claim:big-drift:equivalent-drift}.
    \end{proof}

    Now it is possible in the following claim to bound the expected drift of $h_1$ for different ranges of $\chi_t$ by bounding the expected drift of $g$ and then applying Claim~\ref{claim:big-drift:equivalent-drift}.

    \begin{claim}
        The expected difference $\ExMath{h_1(x_{t+1}) - h_1(x_t) \mid x_t}$ can be bounded for different ranges of the mutation rate $\chi_t$ as follows.
        
        \begin{enumerate}
            \item[(1)] Let $\chi_t \in \SqBrackets{\chi_{\min}, d \chi_{\text{opt}}}$.
                Then
                \begin{align}
                    \ExMath{h_1(x_{t+1}) - h_1(x_t) \mid x_t} \geq \frac{1}{4}\chi_t (1-\chi_t)^{2^r-1}. \label{eq:claim:lower-bound-drift-1}
                \end{align}
            \item[(2)] For $\chi_t \in \On(\chi_{\text{opt}})$, it holds that
                \begin{align}
                    \ExMath{h_1(x_{t+1}) - h_1(x_t) \mid x_t} \geq 0.\label{eq:claim:lower-bound-drift-2}
                \end{align}
            \item[(3)] Let $\chi_t \in \omega(\chi_{\text{opt}})$, $\chi_t \leq \frac{1}{2} = \chi_{\max}$ and $\delta_1 \in o(\chi_{\text{opt}})$.
                Then, the expected drift is
                \begin{align}
                    \ExMath{h_1(x_{t+1}) - h_1(x_t) \mid x_t} \geq - \delta_1.\label{eq:claim:lower-bound-drift-3}
                \end{align}
        \end{enumerate}
        \label{claim:big-drift:lower-bound-drift}
    \end{claim}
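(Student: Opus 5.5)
We work with the difference exactly in the orientation of the statement, $\ExMath{h_1(x_{t+1}) - h_1(x_t) \mid x_t}$. The plan is to reduce all three parts to a bound on $g$ of the form $\ExMath{g(x_{t+1}) - g(x_t) \mid x_t} \geq \zeta\, g(x_t)$, stated in the same orientation. Claim~\ref{claim:big-drift:equivalent-drift}, taken as stated, then turns such a bound into $\ExMath{h_1(x_{t+1}) - h_1(x_t) \mid x_t} \geq \zeta/2$.

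With this reduction, part~(1) needs $\zeta = \tfrac12 \chi_t(1-\chi_t)^{2^r-1}$. Part~(2) needs $\zeta = 0$. For part~(3) we cannot go through a multiplicative bound, since only $-\delta_1$ is claimed. There we bound the expected change of $h_1$ directly and use that $h_1$ changes only on the event $A\cap B$.

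\textbf{Decomposing $g$.} First we write the change of $g$ as a sum over the four events $A\cap B$, $A\cap\overline{B}$, $\overline{A}\cap B$ and $\overline{A}\cap\overline{B}$. On the last three events $g(x_{t+1}) = g(x_t)$. On $A\cap\overline{B}$ the mutant is rejected. On $\overline{A}$ no bit among the first $2^r$ flips, so these bits are unchanged even if the mutant is accepted.

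Hence only $A\cap B$ matters. We refine it into the events $A_i$, for each $i$ with $x_{t,i}=0$, that $i$ is the leftmost flipping bit. On $A_i$:
\begin{itemize}
\item the term $g_i$ changes;
\item every bit $j$ with $i<j\le 2^r$ flips independently with probability $\chi_t$;
\item the resulting change in the $g$-contributions of these bits is at most $\chi_t\sum_{j>i} g_j$ in absolute value.
\end{itemize}

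\textbf{The weight estimate.} The key computation, following \cite[Theorem~4.1]{WITT_2013}, is to show that $g_i$ dominates $\chi_t\sum_{j>i} g_j$ up to a factor $1/2$ whenever $\chi_t \le d\chi_{\text{opt}}$. Write $\beta = \alpha d\chi_{\text{opt}}(1-d\chi_{\text{opt}})^{1-2^r}$, so that $g_i = (1+\beta)^{2^r-i}$. The geometric sum gives $\sum_{j>i} g_j \le g_i/\beta$. Then $\chi_t/\beta \le 1/\alpha < 1/2$ because $\alpha > 2$ and $\chi_t \le d\chi_{\text{opt}}$.

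Since $\Pr[A_i] \ge \chi_t(1-\chi_t)^{i-1} \ge \chi_t(1-\chi_t)^{2^r-1}$, summing over the zero positions yields a bound proportional to $\chi_t(1-\chi_t)^{2^r-1}\sum_i g_i(1-x_{t,i})$, that is, proportional to $g(x_t)$. This gives~(1) after Claim~\ref{claim:big-drift:equivalent-drift}, with the constant tracked as $\tfrac12\cdot\tfrac12 = \tfrac14$.

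\textbf{Parts (2) and (3).} Part~(2) follows by the same estimate: for $\chi_t\in\On(\chi_{\text{opt}})$ we have $\chi_t\le d\chi_{\text{opt}}$ after enlarging $d$, or we bound crudely, and then the per-$A_i$ contribution already has the required sign.

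For part~(3), with $\chi_t\in\omega(\chi_{\text{opt}})$, the domination $\chi_t\sum_{j>i}g_j \le g_i/2$ can fail. However, $A\cap B$ requires all bits before $i$ among the first $2^r$ to stay unflipped. Because at least $(1-\gamma)2^r$ of those bits are ones, this probability is at most $(1-\chi_t)^{\Omega(2^r)} = e^{-\omega(1)}$. The change of $h_1$ is at most $\ln(1+g(x_{t})/g_{\min}) \in \On(\log 2^r)$ in magnitude. Together with the $\chi_t$ factor for the leftmost flip, the total contribution is $e^{-\omega(1)}\cdot \On(\log 2^r)\cdot\chi_t$. We aim to show this is $o(\chi_{\text{opt}})$, which gives $\delta_1$.

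\textbf{Main obstacle.} I expect part~(3) to be the hardest step. Getting a genuinely $o(1/2^r)$ bound uniformly up to $\chi_t = 1/2$ needs care with the positions of the zero bits: a zero near the left of the block is only preceded by few ones. My first attempt is to split according to the position of the leftmost zero in the first $2^r$ bits. In the split I would use that $\Pr[A_i]\le \chi_t(1-\chi_t)^{\#\{\text{ones before } i\}}$, and then sum over $i$ weighted by $g_i$ against $g(x_t)$.
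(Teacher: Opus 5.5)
Your part~(1) is the paper's argument: the same events $A_i$, $\PrMath{A_i}\ge\chi_t(1-\chi_t)^{2^r-1}$, the geometric-sum bound $\chi_t\sum_{j>i}g_j\le\chi_t g_i/\beta\le g_i/\alpha$, then Claim~\ref{claim:big-drift:equivalent-drift}. But what this computation proves is $\ExMath{g(x_t)-g(x_{t+1})\mid x_t}\ge\frac{1}{2}\chi_t(1-\chi_t)^{2^r-1}g(x_t)$: on $A_i$ the potential loses $g_i$ and regains at most $g_i/2$ in expectation.

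That decreasing orientation is the one the paper actually uses. Its proof works with $\Delta_t g=g(x_t)-g(x_{t+1})$, the proof of Claim~\ref{claim:big-drift:equivalent-drift} bounds $h_1(x_t)-h_1(x_{t+1})$, and the additive drift theorem needs a decrease. The displayed $h_1(x_{t+1})-h_1(x_t)$ is therefore a sign slip. By insisting on ``exactly the orientation of the statement'', you make your target $\ExMath{g(x_{t+1})-g(x_t)\mid x_t}\ge\zeta g(x_t)$ with $\zeta>0$. That is false whenever the block contains a zero, and it contradicts your own estimate. Flip the orientation throughout.

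In part~(2), enlarging $d$ changes $g$ and the case boundaries. It is also unnecessary up to a constant multiple. Keep the factor $(1-d\chi_{\text{opt}})^{2^r-1}\approx e^{-d}$ that your bound $\chi_t/\beta\le 1/\alpha$ throws away; then every term $\PrMath{A_i}\bigl(g_i-\chi_t\sum_{j>i}g_j\bigr)$ is nonnegative for $\chi_t\le\beta\approx\alpha d e^d\chi_{\text{opt}}$, which is essentially the paper's route. Beyond that multiple, neither your enlarging of $d$ nor the paper's enlarging of $\alpha$ is available once the potential is fixed. Constant multiples above $\beta/\chi_{\text{opt}}$ then fall between (2) and (3).

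The genuine gap is part~(3). Its premise, that at least $(1-\gamma)2^r$ ones precede a zero, is false. The phase definition gives at least $\gamma 2^r$ zeros in the block, hence at most $(1-\gamma)2^r$ ones, and says nothing about where the zeros sit. A zero at a small index $i$ is the leftmost flip with probability $\chi_t(1-\chi_t)^{i-1}$, which is of order $\chi_t$. Splitting by the leftmost zero cannot help without a lower bound on its index, and in fact the bound fails pointwise.

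Take $\chi_t=1/2$ and a block with a zero at position~$1$, zeros on its last $\lceil\gamma 2^r\rceil$ positions, and ones in between. This state has more than $\gamma 2^r$ zeros in the block and a single zero among the first $2^{r-1}$ bits, so it is admissible before $T$. Here $g(x_t)=g_1+g_{\min}$ with $g_1\le(1+\beta)^{2^r}=\On(1)$ and $g_{\min}\ge\gamma 2^r$, so $h_1(x_t)=\ln 2+o(1)$.

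With probability $1/2$ bit~$1$ flips and the mutant is accepted. Every other block bit is then $0$ with probability $1/2$. So $g(x_{t+1})$ concentrates around $\frac{1}{2}\sum_{j\ge 2}g_j\ge\frac{1-o(1)}{2\gamma}g_{\min}$, and $h_1$ rises to at least $\ln\bigl(1+\frac{1}{2\gamma}\bigr)-o(1)>\ln 2$. Otherwise the mutant is rejected, except with probability $2^{-\Omega(2^r)}$. Hence $h_1$ increases by $\Theta(1)$ in expectation, not by at most $o(\chi_{\text{opt}})$.

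The paper's proof does not close this either. Its appeal to \cite{jagerskupper2008blend}, that zeros are less likely at more significant indices, describes the distribution of $x_t$. It cannot give a bound conditional on an arbitrary $x_t$. Part~(3) needs an extra invariant, for example that the leftmost zero of the block has index $\Omega(2^r)$. Under such an invariant your plan goes through, provided you bound the conditional increase on $A_i$ via concavity by $\chi_t\sum_{j>i}g_j/g_{\min}=\On(\chi_t)$ rather than by $\On(\log 2^r)$. Summing then gives $\On(\chi_t)(1-\chi_t)^{\Omega(2^r)}=o(\chi_{\text{opt}})$ for $\chi_t\in\omega(\chi_{\text{opt}})$.
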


    \begin{proof}[Proof of Claim~\ref{claim:big-drift:lower-bound-drift}]
        We prove Claim~\ref{claim:big-drift:lower-bound-drift} analogously to \cite[Theorem~4.1]{WITT_2013} but handle the current mutation rate $\chi_t$ of the \EA independent of the mutation rate $d\chi_{\text{opt}}$ on which the definition of the potential function~$g$ is based.
        For the first part of the proof, we assume an arbitrary mutation rate $\chi_t \in \SqBrackets{\chi_{\min}, \chi_{\max}}$.
        
        Let $\Delta_t g := g(x_t) - g(x_{t+1})$ and fix an arbitrary current search point $x_t$.
        We denote $I := \CurlyBrackets{i \mid x_{t,i} = 1} \cap \CurlyBrackets{1,\ldots,2^r}$ as the index set of the $1$ bits in the first $2^r$ bits of $x_t$, $Z := \CurlyBrackets{1,\ldots,2^r}\setminus I$ the indices of the $0$ bits and $x_t'$ as the generated mutant in iteration~$t$.
        We define event $A$ such that $x_{t+1} = x_t' \neq x_t$ holds and that least one bit of the first $2^r$ bits differs in $x_{t+1}$ and $x_t$.
        If event $A$ does not occur, it holds that $\Delta_t g = 0$.
        Additionally to $I$ and $Z$, let $I^* := \CurlyBrackets{i \in I \mid x_{t,i}' = 0}$ and $Z^* := \CurlyBrackets{i \in Z \mid x_{t,i}' = 1}$ be the sets of $1$ and $0$ bits that flipped in the first $2^r$ bits of $x_t'$.
        In order that event $A$ occurs, $Z^* \neq \emptyset$ must hold.
        We divide the $1$ bits in $I$ into $L(i):= \CurlyBrackets{1,\ldots,i} \cap I$ and $R(i):= \CurlyBrackets{i+1,\ldots,2^r} \cap I$ as the bits on the left or right of the bit at index $i$.

        Let $i \in Z$ and define $A_i$ as the event such that
        \begin{enumerate}
            \item[(1)] $i$ is the leftmost flipping $0$ bit in the first $2^r$ bits (i.e. $i\in Z^*$ and $\CurlyBrackets{1,\ldots,i-1} \cap Z^* = \emptyset$), and
            \item[(2)] no $1$ bit on the left of $i$ flips (i.e. $I^* \cap L(i) = \emptyset$).
        \end{enumerate}
        Using this definition, the $A_i$ are mutually disjoint, and if none of the $A_i$ occurs, then $\Delta_t g = 0$ holds.

        For any $i \in Z$, $\Delta_t g$ is the sum of the two terms
        \begin{align}
            \Delta_L(i) := \sum_{j\in Z^*}g_j \geq g_i, \label{eq:claim:big-drift:lower-bounds-li}
        \end{align}
        and 
        \begin{align*}
            \Delta_R(i) := -\sum_{j\in I^*}g_j = -\sum_{j\in I^* \cap R(i)}g_j.
        \end{align*}
        Using the law of total probability and the linearity of expectation, the expected difference is
        \begin{align}
            \ExMath{\Delta_t g} &= \sum_{i \in Z} \ExMath{\Delta_t g_i}\notag\\
            &= \sum_{i \in Z} \ExMath{\Delta_L(i) \mid A_i} \cdot \PrMath{A_i} + \ExMath{\Delta_R(i) \mid A_i} \cdot \PrMath{A_i}. \label{eq:claim:big-drift:lower-bounds-delta}
        \end{align}
        In the following, we pessimistically assume that all bits in $R(i)$ flip from $1$ to $0$ independently with probability $\chi_t$, which results in
        \begin{align}
            \ExMath{\Delta_R(i) \mid A_i} &\geq -\chi_t \sum_{j \in R(i)} g_j\notag\\
            &\geq -\chi_t \sum_{j=i+1}^{2^r} g_j. \label{eq:claim:big-drift:lower-bounds-ri}
        \end{align}
        Since $A_i$ only occurs if $i$ is the leftmost bit flip in iteration $t$, it is equivalent to the probability that it is the only bit in the first $i$ bits that flips, which is given by
        \begin{align}
            \PrMath{A_i} = \chi_t(1-\chi_t)^{i-1}.\label{eq:claim:big-drift:lower-bounds-pi}
        \end{align}
        Using $i \leq 2^r$, it can be bounded by 
        \begin{align}
            \PrMath{A_i} \geq \chi_t(1-\chi_t)^{2^r-1}, \label{eq:claim:big-drift:lower-bounds-pr_ai_1}
        \end{align}
        and 
        \begin{align}
            \PrMath{A_i} \leq \chi_t. \label{eq:claim:big-drift:lower-bounds-pr_ai_2}
        \end{align}
        
        Using (\ref{eq:claim:big-drift:lower-bounds-li}) and (\ref{eq:claim:big-drift:lower-bounds-ri}) in (\ref{eq:claim:big-drift:lower-bounds-delta}), it follows that
        \begin{align}
            \ExMath{\Delta_t g} &\geq \sum_{i \in Z} \Brackets{\PrMath{A_i} g_i - \PrMath{A_i}\chi_t \sum_{j=i+1}^{2^r} g_j}. \label{eq:claim:big-drift:lower-bounds-g_1}
        \end{align}
    
        Using the finite linear sum, for all $i \in Z$, the term in the sum can be bounded by
        \begin{align}
            &\PrMath{A_i} g_i - \PrMath{A_i}\chi_t \sum_{j=i+1}^{2^r} g_j\notag \\
            &\geq \PrMath{A_i} g_i - \PrMath{A_i} \chi_t \frac{g_i-1}{\Brackets{\frac{\alpha d \chi_{\text{opt}}}{(1-d\chi_{\text{opt}})^{2^r-1}}}}\notag\\
            &\geq \PrMath{A_i} g_i - \PrMath{A_i} \chi_t \frac{g_i(1-d\chi_{\text{opt}})^{2^r-1}}{\alpha d \chi_{\text{opt}}}. \label{eq:claim:big-drift:lower-bounds-i_1}
        \end{align}

        We distinct now between the different ranges of $\chi_t$ as defined in Claim~\ref{claim:big-drift:lower-bound-drift}.

        First, let $\chi_t \in \SqBrackets{\chi_{\min}, d\chi_{\text{opt}}}$.
        Using the lower bound (\ref{eq:claim:big-drift:lower-bounds-pr_ai_1}) in the first term of (\ref{eq:claim:big-drift:lower-bounds-i_1}) and the bounds $\chi_t \leq d\chi_{\text{opt}}$ and (\ref{eq:claim:big-drift:lower-bounds-pr_ai_2}) in the second term of (\ref{eq:claim:big-drift:lower-bounds-i_1}), results in
        \begin{align}
            \ExMath{\Delta_t g_i} &\geq \chi_t (1-\chi_t)^{2^r-1} g_i - \chi_t (1-\chi_t)^{2^r-1} \frac{g_i}{\alpha}\notag\\
            &\geq \chi_t (1-\chi_t)^{2^r-1} \Brackets{1-\frac{1}{\alpha}} g_i.\label{eq:claim:big-drift:lower-bounds-i_2}
        \end{align}
        Inserting (\ref{eq:claim:big-drift:lower-bounds-i_2}) into (\ref{eq:claim:big-drift:lower-bounds-g_1}), leads to
        \begin{align}
            \ExMath{\Delta_t g} &\geq \sum_{i \in Z} \chi_t (1-\chi_t)^{2^r-1} \Brackets{1-\frac{1}{\alpha}} g_i\notag\\
            &= \chi_t (1-\chi_t)^{2^r-1} \Brackets{1-\frac{1}{\alpha}} g(x_t).\label{eq:claim:big-drift:lower-bounds-g_2}
        \end{align}
        Applying Claim~\ref{claim:big-drift:equivalent-drift} to (\ref{eq:claim:big-drift:lower-bounds-g_2}) and using $\alpha > 2$, results in
        \begin{align*}
            \ExMath{h_1(x_{t+1}) - h_1(x_t) \mid x_t} \geq \frac{1}{4}\chi_t (1-\chi_t)^{2^r-1},
        \end{align*}
        which concludes the proof of the first part of Claim~\ref{claim:big-drift:lower-bound-drift}.

        For the remaining cases of $\chi_{t}$, we insert the sharper bound (\ref{eq:claim:big-drift:lower-bounds-pi}) in (\ref{eq:claim:big-drift:lower-bounds-i_1}).
        Using $(1-d\chi_{\text{opt}})^{2^r-1} \leq 2/e^{d}$ in the second inequality, results in
        \begin{align}
             \ExMath{\Delta_t g_i} &\geq \chi_t(1-\chi_t)^{i-1} g_i -  \frac{\chi_t^2(1-\chi_t)^{i-1}g_i (1-d\chi_{\text{opt}})^{2^r-1}}{\alpha d \chi_{\text{opt}}}\notag\\
             &\geq \chi_t(1-\chi_t)^{i-1} g_i \Brackets{1- \frac{2\chi_t}{\alpha e^d d \chi_{\text{opt}}}}. \label{eq:claim:big-drift:lower-bounds-i_3}
        \end{align}

        For $\chi_t \in \On\Brackets{\chi_{\text{opt}}}$, let $d' > d$ with $\chi_t = d'\chi_{\text{opt}}$.
        Since we can choose $\alpha \in \Theta(1)$ arbitrary large, there exists an $\alpha$ such that $d'~<~d\alpha e^d$.
        Then, it holds for (\ref{eq:claim:big-drift:lower-bounds-i_3}) that
        \begin{align}
            \ExMath{\Delta_t g_i} &\geq 0. \label{eq:claim:big-drift:lower-bounds-i_4}
        \end{align}
        Inserting (\ref{eq:claim:big-drift:lower-bounds-i_4}) into (\ref{eq:claim:big-drift:lower-bounds-g_1}) results in $\ExMath{\Delta_t g} \geq 0$ and applying Claim~\ref{claim:big-drift:equivalent-drift} preserves this.
        Therefore, the second statement of Claim~\ref{claim:big-drift:lower-bound-drift} follows.

        In the third case, for $\chi_t \in \omega(\chi_{\text{opt}})$ it holds that the last factor in~(\ref{eq:claim:big-drift:lower-bounds-i_3}) can become negative but is bounded for $\chi_t \leq \frac{1}{2}$ by
        \begin{align*}
            \Brackets{1- \frac{2\chi_t}{\alpha e^d d \chi_{\text{opt}}}} \leq \Brackets{1- \frac{1}{\alpha e^d d x_{\text{opt}}}}.
        \end{align*}
        However, for larger $\chi_t \in \omega(\chi_{\text{opt}})$, the probability $\chi_t(1-\chi_t)^{i-1}$ of event $A_i$ decreases exponentially and dominates the potential negative factor, such that it holds for the absolute difference that
        \begin{align}
            \ExMath{\lvert \Delta_t g_i \rvert} \in o\Brackets{\chi_{\text{opt}}} \label{eq:claim:big-drift:lower-bounds-i_5}.
        \end{align}
        Using our phase definition and \cite[Theorem~1]{jagerskupper2008blend}, at most $\gamma 2^{r-1}$ bits of value $0$ are in the first $2^{r-1}$ bits and are less likely on more significant indices.
        Therefore, the bound in (\ref{eq:claim:big-drift:lower-bounds-i_5}) remains after taking the sum over all possible $i \in Z$ and after applying Claim~\ref{claim:big-drift:equivalent-drift}, which concludes the proof of Claim~\ref{claim:big-drift:lower-bound-drift}. 
    \end{proof}

    For the following case distinction, we need upper and lower bounds for the distribution of the $0$ and $1$ bits at the indices larger than $2^r$.
    To obtain this, the next observation follows directly from~\cite[Theorem~1]{jagerskupper2008blend}.

    \begin{observation}
        Let $X_{t,i}$ be a random variable that denotes the bit value of the bit at index $i~\in~\{1, \ldots, n\}$ in iteration $t~\in~\N$.
        Assume for~$i = 2^r$ that $\PrMath{X_{t,i} = 1} \leq (1-\gamma)$ holds.
        Let $j \in \{2^r+1,\ldots,n\}$.
        Then, $\PrMath{X_{t,j} = 1} \leq (1-\gamma)$ and $\PrMath{X_{t,j} = 0} \geq \gamma$  holds for any bit with index $j$.
        \label{cor_jägersküpper}
    \end{observation}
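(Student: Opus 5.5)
The plan is to derive the observation from the monotonicity result of Jägersküpper \cite[Theorem~1]{jagerskupper2008blend}. That result says that for the \EA (and, by the same coupling, for any mutation rate schedule that treats all positions symmetrically) optimizing a linear function with weights non-increasing in the index, the bit values are ordered stochastically. More precisely, for indices $i<j$ and every iteration $t$, it holds that $\PrMath{X_{t,i}=1}\geq\PrMath{X_{t,j}=1}$. Since \BinVal has strictly decreasing weights $2^{n-i}$, the ordering applies directly. The observation then follows in one line: for $j>2^r$ we get $\PrMath{X_{t,j}=1}\leq\PrMath{X_{t,2^r}=1}\leq 1-\gamma$, and the bound $\PrMath{X_{t,j}=0}\geq\gamma$ is its complement.

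The main step is therefore to check that Jägersküpper's argument still applies when the mutation rate $\chi_t$ is self-adjusted, since it is a random variable depending on the history.

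I would do this with the swap-coupling. Fix $i<j$ and the map $\sigma$ that swaps positions $i$ and $j$ of a bit string. Mutation with rate $\chi_t$ flips each bit independently with the same probability, so it commutes in distribution with $\sigma$. The initial distribution is uniform and hence invariant under $\sigma$. Jägersküpper's core lemma states that if $x$ has $x_i\geq x_j$, then the acceptance decision for mutant $y$ from $x$ and for $\sigma(y)$ from $\sigma(x)$ can be coupled so that the process started from $x$ stays "better at $i$" in distribution. I would show that this coupling preserves the acceptance event exactly in the relevant cases. For \BinVal, acceptance depends only on the leftmost bit flip, and flip positions are exchangeable under $\sigma$ as long as the rate is common to all bits. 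Consequently, the update of $\chi_t$ (multiply by $a$ or $b$) is identical in the coupled runs whenever acceptance agrees. Where acceptance differs, the run with the more significant one-bit is the one that accepts improvements at $i$. I would argue by induction over $t$, conditioning on the whole rate trajectory $(\chi_0,\dots,\chi_t)$, that the joint law of $(X_{t,i},X_{t,j})$ satisfies $\PrMath{X_{t,i}=1,X_{t,j}=0}\geq\PrMath{X_{t,i}=0,X_{t,j}=1}$. This inequality is equivalent to the claimed marginal ordering.

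The hard step is this conditioning. The rate trajectory is itself influenced by acceptances, so conditioning on it might break the symmetry used in the induction. My first attempt would be to avoid conditioning on rates entirely. I would couple the two processes started from $x$ and $\sigma(x)$ using the same random flip sets, composed with $\sigma$, and show that they produce identical acceptance sequences whenever positions $i$ and $j$ are not the leftmost flip. This yields identical rate sequences on those steps. When $i$ or $j$ is the leftmost flip, I would check the four bit configurations by hand and verify that the coupling moves the pair toward $(1,0)$ at least as often as toward $(0,1)$. If the rates diverge after such a step, I would fall back on taking the statement pointwise per fixed rate sequence and citing \cite[Theorem~1]{jagerskupper2008blend} for time-inhomogeneous but position-uniform mutation. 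The assumption at $i=2^r$ then transfers to every $j\in\{2^r+1,\ldots,n\}$ as described above.
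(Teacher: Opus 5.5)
Your reduction is the paper's whole argument. The paper only says the observation ``follows directly'' from J\"agersk\"upper's Theorem~1, i.e.\ from $\PrMath{X_{t,i}=1}\ge\PrMath{X_{t,j}=1}$ for $i<j$, and your one-line derivation from that ordering is fine. You are right that the cited theorem concerns a fixed mutation probability. However, your sketch for the self-adjusting rate does not close, for three reasons.

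(i) An induction on the pair law $\PrMath{X_{t,i}=1,X_{t,j}=0}\ge\PrMath{X_{t,i}=0,X_{t,j}=1}$ is not self-sustaining. Whether a step turns a $(1,0)$-pair into a $(0,1)$-pair depends on the other bits: flipping exactly $\{k,i,j\}$ with $k<i$ is accepted iff $x_k=0$. So the hypothesis must be about whole states, and here about (state, rate) pairs.

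(ii) Since $\min\{a\chi,\chi_{\max}\}\neq\max\{b\chi,\chi_{\min}\}$, the rate trajectory determines the full accept/reject history. That history is not swap-invariant: for $x_i=1$, $x_j=0$ the rejection probability from $x$ exceeds that from $\sigma(x)$ by $\chi(1-\chi)^{i-1}-\chi(1-\chi)^{j-1}>0$. Conditioning on the realized rates therefore gives a law that is not that of an EA with an oblivious schedule. Your fallback of applying J\"agersk\"upper's theorem per fixed rate sequence is therefore invalid.

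(iii) The coupling of the runs from $x$ and $\sigma(x)$ only gives cancellation while the runs are $\sigma$-images. Note also that ``$i,j$ not leftmost'' must hold in both runs: if $j\in F$, $i\notin F$ and the leftmost flip of $F$ lies between $i$ and $j$, then $\sigma(F)$ has its leftmost flip at $i$. After the first disagreement the runs sit at unrelated states with rates $a\chi$ versus $b\chi$, and comparing their futures is as hard as the original claim.

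A repair that works is an invariant on states and rates together. Let $p_t$ be the joint law of $(x_t,\chi_t)$ and prove by induction that $p_t(x,\chi)\ge p_t(\sigma(x),\chi)$ for every $\chi$ and every $x$ with $x_i=1$, $x_j=0$; this holds with equality at $t=0$. For a fixed source rate $\chi$, split the kernel into two branches:
the reject branch stays at $x$ with probability $R_\chi(x)=\sum_{m\colon x_m=1}\chi(1-\chi)^{m-1}$ and moves to rate $\max\{b\chi,\chi_{\min}\}$;
the accept branch moves to $y$ with $f(y)\ge f(x)$ with probability $\chi^{|x\oplus y|}(1-\chi)^{n-|x\oplus y|}$ and moves to rate $\min\{a\chi,\chi_{\max}\}$.
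It suffices to show that each branch contributes nonnegatively to $p_{t+1}(y,\cdot)-p_{t+1}(\sigma(y),\cdot)$ for $y_i=1$, $y_j=0$.

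The reject branch needs only $R_\chi(x)\ge R_\chi(\sigma(x))$, shown in (ii).

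In the accept branch, a source with $x_i=x_j$ reaches $y$ and $\sigma(y)$ via equally likely flip sets, and $f(\sigma(y))\ge f(x)$ implies $f(y)\ge f(x)$. For a source pair $x,\sigma(x)$ with $x_i=1$, $x_j=0$, the flip sets $x\oplus\sigma(y)$ and $\sigma(x)\oplus y$ are $x\oplus y$ and $\sigma(x)\oplus\sigma(y)$ enlarged by $\{i,j\}$. They are therefore less likely by the factor $(\chi/(1-\chi))^2\le1$, which is where $\chi\le\chi_{\max}=1/2$ enters. Moreover, $f(y)\ge f(x)\Leftrightarrow f(\sigma(y))\ge f(\sigma(x))$, and $f(\sigma(y))\ge f(x)$ implies both $f(y)\ge f(x)$ and $f(y)\ge f(\sigma(x))$.

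Combining these facts with $p_t(x,\chi)\ge p_t(\sigma(x),\chi)$ gives the inductive step. Summing over states and rates yields the marginal ordering, and the observation follows as in your first paragraph.
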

    
    We can apply Observation~\ref{cor_jägersküpper} to our phase definition since it ensures that during Phase~$r$, at least $\gamma 2^r$ bits of the first $2^r$ bits are set to~$0$.
    The probability $\PrMath{X_{t,i} = 1} \leq (1-\gamma)$ for $i = 2^r$ follows according to~\cite{jagerskupper2008blend}.
    
    In the following, we prove the lower bound (\ref{eq_drift_condition_cases}) of the expected combined drift of the potential function $h$ for the defined three cases.\\

   \textbf{Case I: $\chi_t \in [\chi_{\min}, \chi_{\text{opt}}/d]$}

    The intuition behind the proof for $\chi_t \in [\chi_{\min}, \chi_{\text{opt}}/d]$ is that we consider the mutation rate as too small to make progress in the first $2^r$ bits that are decisive for the current Phase~$r$, but the bit flips of less significant bits ensure that the mutation rate increases.

    Since the potential function is additively defined, we divide the analysis of the expected drift into the two parts $h_1$ and $h_2$.
    The second part $h_2$ can be further split according to the intersection of the events $A$ and $B$ and their negations.
    Using Claim~\ref{claim:big-drift:lower-bound-drift}, we lower bound the expected drift of $h_1$ by $0$ in the first inequality.
    As the event $A\cap B$ (leftmost bit flip is a $0$-to-$1$ flip in the first $2^r$ bits) occurs with the same probability as a change of $h_1$ and increases the value of $h_2$, we also lower bound it by $0$ in the first inequality.
    For the remaining events, we use the calculated differences in Claim~\ref{claim:big-drift:change-g2} and the definition of conditional probabilities in the second equality.
    Then, the expected drift in Case~I can be lower bounded by
    \begin{align}
        &\ExMath{\Delta_t h\mid x_t, \chi_t} \notag\\
        &= \ExMath{\Delta_t h_1\mid x_t} + \ExMath{\Delta_t h_2\mid \chi_t}\notag\\
        &= \ExMath{\Delta_t h_1 \mid x_t} + c \cdot \PrMath{A\cap B} (\log_2 a)\notag\\
        &+ c \cdot \PrMath{A\cap \overline{B}} (\log_2 b)\notag\\
        &+ c \cdot \PrMath{\overline{A}}\left(\PrMath{B\mid \overline{A}} (\log_2 a) + \PrMath{\overline{B} \mid \overline{A}} (\log_2 b) \right)\notag\\
        &\geq c \cdot \PrMath{A\cap \overline{B}} (\log_2 b) \label{eq_drift_case1_line_1} \\
        &+ c \cdot \PrMath{\overline{A}}\left(\PrMath{B\mid \overline{A}} (\log_2 a) + \PrMath{\overline{B} \mid \overline{A}} (\log_2 b) \right) \label{eq_drift_case1_line_2}
    \end{align}
    with $\log_2 a > 0$ and $\log_2 b < 0$.
    Since event $A$ decides in which index range the leftmost bit flip occurs, it is possible to divide the expected difference into (\ref{eq_drift_case1_line_1}) and (\ref{eq_drift_case1_line_2}), covering the remaining combinations of the events $A$ and $B$.
    
    The expected difference contributed by event $A \cap \overline{B}$ in (\ref{eq_drift_case1_line_1}) is negative.
    Its probability $\PrMath{A \cap \overline{B}}$ can be upper bounded by the probability of event $A$, which is equal to the complementary probability that none of the first $2^r$ bits flips, and we compute its upper bound by using the maximum value of $\chi_t = \chi_{\text{opt}}/d$ in Case~I.
    In the second inequality of the calculation, we apply Bernoulli's inequality.
    We derive an upper bound for the probability of a bit flip in the first~$2^r$ bits by
    \begin{align}
        \PrMath{A} &= 1 - (1-\chi_t)^{2^r} \label{eq_ub_a_first_line}\\
        &\leq 1 - \left(1 - \frac{\chi_{\text{opt}}}{d}\right)^{2^r} \label{eq_ub_a_second_line}\\
        &\leq 1 - \left(1 - 2^r\frac{\chi_{\text{opt}}}{d}\right)\label{eq_ub_a_third_line}\\
        &= 1 - \left(1 - \frac{1}{d}\right)\notag\\
        &= \frac{1}{d}\notag.
    \end{align}
    Then, the expected negative drift in (\ref{eq_drift_case1_line_1}) can be lower bounded by
    \begin{align*}
        c \cdot \PrMath{A\cap \overline{B}} (\log_2 b) &\geq c \cdot \PrMath{A} (\log_2 b)\\
        &\geq c \frac{1}{d}(\log_2 b).
    \end{align*}

    To fulfill Equation (\ref{eq_drift_condition_cases}), (\ref{eq_drift_case1_line_2}) needs to be positive.
    We derive its lower bound by taking the lower bound of 
    \begin{align*}
        \PrMath{\overline{A}} = 1 - \PrMath{A}\geq \frac{d-1}{d},
    \end{align*}
    and calculating $\PrMath{B\mid \overline{A}}$ and $\PrMath{\overline{B} \mid \overline{A}}$.
    Following the definition of the events $A$ and $B$, the conditional probabilities are the probabilities of whether the bit of the leftmost bit flip after index $2^r$ is a $0$ or~$1$ bit.
    Using Observation~\ref{cor_jägersküpper}, we can bound them by $\PrMath{B\mid \overline{A}} \geq \gamma$ and $\PrMath{\overline{B} \mid \overline{A}} \leq (1-\gamma)$.
    Combining the results of the lower bounds of (\ref{eq_drift_case1_line_1}) and (\ref{eq_drift_case1_line_2}) leads to
    \begin{align}
        \ExMath{\Delta_t h\mid x_t, \chi_t} &\geq c \left ( \frac{1}{d}\log_2 b + \frac{d-1}{d} (\gamma \log_2 a + (1-\gamma) \log_2 b ) \right ). \label{eq_case_1_final_inequality}
    \end{align}
    For an overall positive expected drift in Case I, it must hold that
    \begin{align*}
        \frac{1}{d}\log_2 b + \frac{d-1}{d} (\gamma \log_2 a + (1-\gamma) \log_2 b ) &> 0\\
        \Leftrightarrow \frac{\log_2 a}{-\log_2 b} &> \frac{1-\gamma}{\gamma} + \frac{1}{(d-1)\gamma},
    \end{align*}
    which is fulfilled by our assumption on $a$, $b$, $d$, and $\gamma$.
    Since it depends only on these constants, the difference of the terms in the Inequality~(\ref{eq_case_1_final_inequality}) is also a positive constant.
    Using $c\in \Theta(1/2^r)$ results in $\ExMath{\Delta_t h\mid x_t, \chi_t} \geq \delta$ with $\delta \in \Theta(1/2^r)$.\\

    \textbf{Case II: $\chi_t \in [\chi_{\text{opt}}/d, d\chi_{\text{opt}}]$}

    In order to derive the lower bound for the expected drift in the case of a mutation rate $\chi_t \in [\chi_{\text{opt}}/d, d\chi_{\text{opt}}]$, we show that the algorithm is able to make progress in the first $2^r$ bits.
    Using the additive property of the potential function and the lower bounds of Claim~\ref{claim:big-drift:change-g2} in the first inequality, we get an expected additive drift of 
    \begin{align}
        &\ExMath{\Delta_t h \mid x_t,\chi_t} \notag \\
        &= \ExMath{\Delta_t h_1\mid x_t} + \ExMath{\Delta_t h_2\mid \chi_t} \notag\\
        &\geq \ExMath{\Delta_t h_1\mid x_t} + c \cdot \Brackets{\PrMath{B} \cdot (- \log_2 a) + \PrMath{\overline{B}} \cdot \log_2 b} \label{eq:drift:case2:init-lower-bound}
    \end{align}
    with $\log_2 a > 0$ and $\log_2 b < 0$.

    Using Claim~\ref{claim:big-drift:lower-bound-drift}, the expected drift of $h_1$ for $\chi_t \leq d\chi_{\text{opt}}$ is 
    \begin{align*}
        \ExMath{\Delta_t h_1\mid x_t} \geq \frac{1}{4}\chi_t (1-\chi_t)^{2^r-1}.
    \end{align*}
    Since $\chi_t~=~\chi_{\text{opt}}$ is the only local and global maximum of this function, we calculate the expected additive drift for the borders of the range $[\chi_{\text{opt}}/d, d\chi_{\text{opt}}]$ and show that Equation~(\ref{eq_drift_condition_cases}) holds for both of them.
    
    Let $\chi_t = \chi_{\text{opt}}/d$.
    Using Bernoulli's inequality in the third inequality, the expected difference can be lower-bounded by 
    \begin{align}
        \ExMath{\Delta_t h_1\mid x_t} &\geq \frac{1}{4}\chi_t (1-\chi_t)^{2^r-1}\notag \\
        &\geq \frac{1}{4}\frac{\chi_{\text{opt}}}{d}\left(1-\frac{\chi_{\text{opt}}}{d}\right)^{2^{r-1}}\notag\\
        &\geq \frac{1}{4}\frac{1}{d2^r}\left(1 - \frac{1}{2d}\right) \label{eq_ub_2_lower_border_line3}\\
        &= \frac{1}{4}\frac{1}{2^r}\frac{2d-1}{2d^2}\notag\\
        &=\zeta_1.\notag
    \end{align}
    
    For $\chi_t = d \chi_{\text{opt}}$, we can calculate the lower bound analogously.
    Using $(1-d\chi_{\text{opt}})^{2^{r-1}}~\geq~e^{-d}$ in the third inequality, the expected drift is lower bounded by
    \begin{align}
        \ExMath{\Delta_t h_1\mid x_t} &\geq \frac{1}{4}\chi_t(1-\chi_t)^{2^{r-1}} \notag\\
        &\geq \frac{1}{4}d\chi_{\text{opt}}(1-d\chi_{\text{opt}})^{2^{r-1}}\notag\\
        &\geq \frac{1}{4}\frac{1}{2^r}\frac{d}{e^d} \label{eq_ub_2_upper_border_line3}\\
        &=\zeta_2.\notag
    \end{align}
    
    Let $\zeta = \min\{\zeta_1, \zeta_2\} = \frac{1}{4} \min\left\{\frac{2d-1}{2^rd^2}, \frac{d}{2^re^d}\right\} \in \Theta(1/2^r)$.
    Then, the expected drift of $h_1$ is lower bounded by
    \begin{align*}
        \ExMath{\Delta_t h_1\mid x_t} \geq \zeta \in \Theta(1/2^r).
    \end{align*}
    By our assumption on $c$, it holds that $\zeta > c \cdot \max\CurlyBrackets{\log_2 a, -\log_2 b}$, which ensures that the potential negative drift of $h_2$ in (\ref{eq:drift:case2:init-lower-bound}) does not eliminate the positive drift of $h_1$.
    Then, $\ExMath{\Delta_t h\mid Z_t, \chi_t} \geq \delta$ with $\delta \in \Theta(1/2^r)$ follows.\\

    \textbf{Case III: $\chi_t \in [d\chi, \chi_{\max}]$}
    
    The intuition behind the analysis of Case~III is that it becomes too unlikely to make progress in the first $2^r$ bits with a mutation rate in the range of $\chi_t \in [d \chi_{\text{opt}}, \chi_{\max}]$, since it becomes too likely that the algorithm flips one of the $1$ bits in the first $2^r$ bits as the leftmost bit.
    However, these rejections ensure that the algorithm decreases the mutation rate towards its optimal range.
    
    Since the potential function is additively defined, we divide the analysis of the expected drift into the two parts $h_1$ and $h_2$, analogously to the analysis of Case~I.
    Again, the second part $h_2$ can be further split according to our defined events $A$ and $B$ and their negations.
    Using Claim~\ref{claim:big-drift:lower-bound-drift}, it is possible to lower bound the expected drift of $h_1$ by a small negative value in the order of $o(1/2^r)$.
    Since event $A \cap B$ occurs with the same exponentially decreasing probability and contributes a constant negative difference of $\log_2 a$, we can bound both expected differences by $-\delta_1$ with $\delta_1 \in o(1/2^r)$ in the first inequality.
    For the remaining events, we use the calculated differences in Claim~\ref{claim:big-drift:change-g2} and the definition of conditional probabilities in the first equality.
    Then, the expected drift in Case~III can be lower bounded by
    \begin{align}
        &\ExMath{\Delta_t h\mid x_t, \chi_t} \notag\\
        &= \ExMath{\Delta_t h_1\mid x_t} + \ExMath{\Delta_t h_2\mid \chi_t}\notag\\
        &= \ExMath{\Delta_t h_1\mid x_t} + c \cdot \PrMath{A\cap B} (-\log_2 a)\notag\\
        &+ c \cdot \PrMath{A\cap \overline{B}} (-\log_2 b)\notag\\
        &+ c \cdot \PrMath{\overline{A}}\left(\PrMath{B\mid \overline{A}} (-\log_2 a) + \PrMath{\overline{B} \mid \overline{A}} (-\log_2 b) \right)\notag\\
        &\geq -\delta_1\notag \\
        &+ c \cdot \PrMath{A\cap \overline{B}} (-\log_2 b) \label{eq_drift_case3_line_1}\\
        &+ c \cdot \PrMath{\overline{A}}\left(\PrMath{B\mid \overline{A}} (-\log_2 a) + \PrMath{\overline{B} \mid \overline{A}} (-\log_2 b) \right) \label{eq_drift_case3_line_2}
    \end{align}
    with $\log_2 a > 0$, $\log_2 b < 0$ and $\delta_1 \in o(1/2^r)$.
    Since event $A$ decides in which index range the leftmost bit flip occurs, it is possible to divide the expected drift into (\ref{eq_drift_case3_line_1}) and (\ref{eq_drift_case3_line_2}), covering the remaining combinations of the events $A$ and $B$.
    
    We get the probability of event $A \cap \overline{B}$ in (\ref{eq_drift_case3_line_1}) by
    \begin{align*}
        \PrMath{A\cap \overline{B}} = \PrMath{A}\PrMath{\overline{B} \mid A}
    \end{align*}
    To calculate the probability of event $\overline{B}$ under the constraint $A$, we first analyze the distribution of $0$ and $1$ bits in the first $2^r$ bits in Phase~$r$.
    According to the phase definition, at least $(1-\gamma)$ of the first $2^{r-1}$ bits are $1$ bits.
    The initialization uniformly at random and that $0$-to-$1$ flips are at least as likely as $1$-to-$0$ flips according to~\cite{jagerskupper2008blend} implies that at least half of the next $2^{r-1}$ bits are $1$ bits.
    Therefore, the probability of event $\overline{B}$ in the first $2^r$ bits can be lower bounded in Phase~$r$ by $\PrMath{\overline{B} \mid A} \geq \left(\frac{1}{2}(1-\gamma) + \frac{1}{2}\frac{1}{2}\right )= \left(\frac{3}{4} - \frac{\gamma}{2} \right)$.

    The probability of event $A$ is equal to the complementary probability that none of the first $2^r$ bits flip.
    To calculate its lower bound in this case, we set $\chi_t$ to the lower border of the range $[d \chi_{\text{opt}}, \chi_{\max}]$ and apply Bernoulli's inequality in the second inequality of the calculation.
    Then, we derive a lower bound of the probability that at least one bit in the first $2^r$ bits flips by
    \begin{align*}
        \PrMath{A} &= 1 - (1-\chi_t)^{2^r} \notag\\
        &\geq 1 - (1 - d \chi_{\text{opt}})^{2^r}\\
        &\geq 1 - \frac{1}{d+1}\\
        &= \frac{d}{d+1}.
    \end{align*}
    The lower bound $\PrMath{A\cap \overline{B}} = \PrMath{A}\PrMath{\overline{B} \mid A} \geq \frac{d}{d+1}\left(\frac{3}{4} - \frac{\gamma}{2} \right)$ follows.

    Since the expected drift induced by event $\overline{A}$ in (\ref{eq_drift_case3_line_2}) is negative, we can use the upper bound $\PrMath{\overline{A}} = 1 - \PrMath{A} \leq \frac{1}{d+1}$.
    Again, according to the initialization uniformly at random and~\cite{jagerskupper2008blend}, we know that at least half of the bits after index $2^r$ are $1$ bits and at most the other half are $0$ bits.
    The respective conditional probabilities $\PrMath{\overline{B} \mid \overline{A}} \geq \frac{1}{2}$ and $\PrMath{B\mid \overline{A}} \leq \frac{1}{2}$ follow.

    Using the lower bounds for (\ref{eq_drift_case3_line_1}) and (\ref{eq_drift_case3_line_2}), the expected drift can be lower bounded by
    \begin{align}
        &\ExMath{\Delta_t h\mid x_t, \chi_t}\notag\\
        &\geq -\delta_1 \notag\\
        &+ c \left ( \frac{d}{d+1}\left(\frac{3}{4} - \frac{\gamma}{2} \right)(-\log_2 b)+ \frac{1}{d+1} \left(\frac{1}{2} (-\log_2 a) + \frac{1}{2} (-\log_2 b) \right) \right ). 
        \label{eq_case_3_final_inequality}
    \end{align}
    For an overall positive expected drift in Case III, it must hold that
    \begin{align*}
        &0 < \frac{d}{d+1}\left(\frac{3}{4} - \frac{\gamma}{2} \right)(-\log_2 b) + \frac{1}{d+1} \left(\frac{1}{2} (-\log_2 a) + \frac{1}{2} (-\log_2 b) \right)\\
        &\Leftrightarrow \frac{\log_2 a}{-\log_2 b} < 2d \left(\frac{3}{4} - \frac{\gamma}{2} \right) + 1,
    \end{align*}
    which is fulfilled by our assumption on $a$, $b$, $d$, and $\gamma$.
    Since it depends only on these constants, the difference of the terms in the Inequality~(\ref{eq_case_3_final_inequality}) is also a positive constant.
    Using $\delta_1 \in o(1/2^r)$ and $c\in \Theta(1/2^r)$, it results in $\ExMath{\Delta_t h\mid x_t, \chi_t} \geq \delta$ with $\delta \in \Theta(1/2^r)$, which concludes the proof of \Cref{lem_self_adj_mut_phase_analysis}.
\end{proofE}

After proving in the previous section the expected run time of singular phases of \Cref{alg_ea_self_adjust_mr}, it is now possible to show the expected run time of optimizing the first $k \in o(n)$ bits of the bit string for all $k$ simultaneously.

\begin{theoremE}[][normal]
    Let $n \in \mathbb{N}$ be the length of the bit string $x$ that gets optimized and $k \in o(n)$. 
    Consider the \EA with self-adjusting mutation rate that optimizes \BinVal.
    Let $0 < \gamma < 1 - \ln(2)$ be a constant and let $a > 0$ and $0 < b < 1$ be the parameters of the algorithm such that $\log_2 a/(-\log_2 b) > (1-\gamma)/\gamma$ (one can choose, for example,~$\gamma = 0.15$, $a=1.85$ and $b=0.898$).
    Then, the expected number of iterations until the first $k$ bits are set to $1$ is in $\OMath{k^{1+1/\log(1/\gamma)}\log k}$.
    \label{theo_runtime_self_adj_mr}
\end{theoremE}

\begin{proof}[Proof sketch]
    The intuition behind the proof of \Cref{theo_runtime_self_adj_mr} is that, after $\log_2 k$ phases, $\gamma k$ bits of value $0$ are left in the first $k$ bits, but this amount gets reduced by a factor of $(1-\gamma)$ in each following phase due to the more than doubled amount of iterations and halved optimal mutation rate in each following phase.
    To bound the number of iterations until the number of remaining $0$ bits in the first $k$ bits is zero, we first use the tail bounds of the multiplicative drift theorem in \cite{doerr2010tailorbounds} to get the number of phases it takes and then use Wald's equation to retrieve the expected number of iterations.
\end{proof}
    
\begin{proofE}
    Let $T_1 = \log_2 k$ denote the number of phases until $\gamma k$ of the first $k$ bits are set to $1$ and $T_2$ the number of phases after $T_1$ until the first $k$ bits are fully optimized.
    Let $T$ describe the number of iterations of \Cref{alg_ea_self_adjust_mr} until the first $k$ bits are set to $1$, and $Q_i$ with $\ExMath{Q_i} = 2^i \log_2 2^i$ the number of iterations in Phase $i$.
    Then,
    \begin{align}
        \ExMath{T} &= \ExMath{\sum_{i=0}^{T_1}Q_i + \sum_{i=1}^{T_2}Q_{\log_2 k + i}}\notag\\
        &= \ExMath{\sum_{i=0}^{T_1}Q_i} + \ExMath{\sum_{i=1}^{T_2}Q_{\log_2 k + i}}\notag\\
        &= \sum_{i=0}^{\log_2 k}\ExMath{Q_i} + \ExMath{\sum_{i=1}^{T_2}Q_{\log_2 k + i}}\notag\\
        &= \sum_{i=0}^{\log_2 k}2^i \log_2 2^i + \ExMath{\sum_{i=1}^{T_2}Q_{\log_2 k + i}}
        \label{eq_anytime_expected_t}
    \end{align}

    Since the largest summand dominates the sum, for the first term, it holds that
    \begin{align}
        \sum_{i=0}^{\log_2 k}2^i \log_2 2^i &\in \OMath{2^{\log_2 k}\log 2^{\log_2 k}}\notag\\
        &\in \OMath{k \log k}.
        \label{eq_anytime_bound_t1}
    \end{align}

    Using the independence of $T_2$ and $Q_i$, it is possible to apply Wald's equation to bound the second term by
    \begin{align}
        \ExMath{\sum_{i=1}^{T_2}Q_{\log_2 k + i}} &\leq \sum_{i = 1}^{\infty} \ExMath{Q_{\log_2 k + i} \mid 1_{\{T_2 \geq i\}}} \notag\\
        &= \sum_{i = 1}^{\infty}\ExMath{Q_{\log_2 k + i}} \PrMath{T_2 \geq i}.
        \label{eq_anytime_walds_equation}
    \end{align}

    In order to calculate the probabilities $\PrMath{T_2 \geq i}$, we use the tail bounds of the multiplicative drift as proven in~\cite{doerr2010tailorbounds}.
    Since we use the multiplicative drift argument to get the expected number of phases, $t \in \N$ describes in the following the number of phases and not iterations.
    For all $t \in \N$, we let $Z_t$ denote the number of~$0$~bits in the first $k$ bits in every phase $t$.
    Using \Cref{lem_self_adj_mut_phase_analysis}, we know the runtime until only $\gamma k$ bits of value $0$ in the first $k$ bits remain.
    Therefore, we start the analysis with Phase~$\log_2 k$ as $t=0$ and $\ExMath{Z_0} = \gamma k$ follows.
    Let $X_t = 2^{\log_2 k + t}$ denote the number of bits the algorithm optimizes in every Phase~$t \in \N$.
    In comparison to Phase~$t$, we optimize in Phase~$t+1$ the doubled amount of bits $X_{t+1} = 2 X_t$ according to the phase definition.
    Using \Cref{lem_self_adj_mut_phase_analysis}, it takes $\On(X_{t+1} \log X_{t+1}) = \On(2X_t \log 2X_t)$ iterations in this phase until $(1-\gamma) X_{t+1}$ of the first $X_{t+1}$  bits are set to $1$.
    The target optimal mutation rate of Phase $X_{t+1}$ is $1/X_{t+1} = 1/(2X_t)$ as proven in \cite[Theorem~3.1]{WITT_2013}.
    For the remaining bits in the first $k$ bits, it holds that the algorithm optimizes in every phase after Phase $\log_2 k$ in comparison to the last phase the doubled amount of bits in more than the doubled amount of iterations, which cancels out the halved optimal mutation rate and preserves the rate of optimizing $(1-\gamma)$ of the first $k$ bits in Phase $\log_2 k$.
    Therefore, the expected difference can be lower-bounded by
    \begin{align}
        \ExMath{Z_t - Z_{t+1} \mid Z_t} &\geq Z_t - \gamma Z_t\notag\\
        &= (1-\gamma)Z_t\notag\\ 
        &= \delta Z_t
        \label{eq_anytime_delta}
    \end{align}
    with $\delta = (1-\gamma)$.
    According to~\cite{doerr2010tailorbounds}, the tail bounds of $T_2$ can be calculated by
    \begin{align}
        \PrMath{T_2 > \frac{i + \ln \gamma k}{\delta}\mid Z_0 \leq \gamma k} &= \PrMath{T_2 > \frac{i + \ln \gamma k}{(1-\gamma)}\mid Z_0 \leq \gamma k}\notag\\
        &\leq e^{-i}.
        \label{eq_anytime_tailor_bounds}
    \end{align}

    Applying (\ref{eq_anytime_tailor_bounds}) to (\ref{eq_anytime_walds_equation}), results in
    \begin{align}
        &\sum_{i = 1}^{\infty} \ExMath{Q_{\log_2 k + i}} \PrMath{T_2 \geq i} \notag\\
        &\leq \sum_{i = 0}^{\infty} \ExMath{Q_{\log_2 k + \frac{i + \ln \gamma k}{(1-\gamma)}}} \PrMath{T_2 > \frac{i + \ln \gamma k}{(1-\gamma)}\mid Z_0 \leq \gamma k} \notag\\
        &\leq \sum_{i = 0}^{\infty} \left[ \left (2^{\log_2 k + \frac{i + \ln \gamma k}{(1-\gamma)}} \log_2 2^{\log_2 k + \frac{i + \ln \gamma k}{(1-\gamma)}} \right)  e^{-i}\right]\notag\\
        &=k \sum_{i = 0}^{\infty} \left[ 2^{\frac{i + \ln \gamma k}{(1-\gamma)}} \Brackets{\log_2 k + \frac{i + \ln \gamma k}{(1-\gamma)}}   e^{-i}\right]\notag\\
        &= k 2^{\frac{\ln \gamma k}{(1-\gamma)}} \sum_{i = 0}^{\infty} \left [ \left(2^{\frac{i}{(1-\gamma)}} \right) \Brackets{\log_2 k + \frac{i}{1-\gamma} + \frac{\ln \gamma k}{(1-\gamma)}} e^{-i}\right]\notag\\
        &= k 2^{\frac{\ln \gamma k}{(1-\gamma)}} \log_2 k \sum_{i = 0}^{\infty} \Brackets{\frac{2^{\frac{1}{(1-\gamma)}}}{e}}^i \notag\\
        &+ k 2^{\frac{\ln \gamma k}{(1-\gamma)}}\frac{1}{1-\gamma} \sum_{i = 0}^{\infty} i\Brackets{\frac{2^{\frac{1}{(1-\gamma)}}}{e}}^i \notag\\
        &+ k 2^{\frac{\ln \gamma k}{(1-\gamma)}}\frac{\ln \gamma k}{(1-\gamma)}\sum_{i = 0}^{\infty} \Brackets{\frac{2^{\frac{1}{(1-\gamma)}}}{e}}^i.
        \label{eq_anytime_calculation_t2}
    \end{align}

    For $\gamma < 1 - \ln 2$ it holds that
    \begin{align*}
        \frac{2^{\frac{1}{(1-\gamma)}}}{e} &< \frac{2^{\frac{1}{\ln 2}}}{e}\\
        &\leq \frac{1}{\ln2} \ln 2\\
        &= 1.
    \end{align*}
    Let 
    \begin{align}
        r =\frac{2^{\frac{1}{(1-\gamma)}}}{e} < 1.
        \label{eq_def_r}
    \end{align}
    Then, applying the geometric series formula results in the constants
    \begin{align}
        \sum_{i = 0}^{\infty} \left[ \left(\frac{2^{\frac{1}{(1-\gamma)}}}{e} \right)^i \right] &= \sum_{i = 0}^{\infty} r ^i\notag\\
        &= \frac{1}{1-r},
        \label{eq_anytime_geom_series_1}
    \end{align}
    and
    \begin{align}
        \sum_{i = 0}^{\infty} \left[i \left(\frac{2^{\frac{1}{(1-\gamma)}}}{e} \right)^i \right] &= \sum_{i = 0}^{\infty} i r ^i\notag\\
        &= \frac{r}{(1-r)^2}.
        \label{eq_anytime_geom_series_2}
    \end{align}

    Applying logarithmic base transformations leads to
    \begin{align}
        2^{\frac{\ln \gamma k}{(1-\gamma)}} &\in \OMath{2^{\log_{1/\gamma}k}}\notag\\
        &\in \OMath{k^{\log_{1/\gamma}2}}\notag\\
        &\in \OMath{k^{1/(\log_2 (1/\gamma)}}.
        \label{eq:anytime:log_transf}
    \end{align}

    Using (\ref{eq_def_r}), the constants (\ref{eq_anytime_geom_series_1}) and (\ref{eq_anytime_geom_series_2}), and the result of (\ref{eq:anytime:log_transf}) in (\ref{eq_anytime_calculation_t2}), leads to
    \begin{align}
        &k 2^{\frac{\ln \gamma k}{(1-\gamma)}} \log_2 k \sum_{i = 0}^{\infty} \Brackets{\frac{2^{\frac{1}{(1-\gamma)}}}{e}}^i \notag\\
        &+ k 2^{\frac{\ln \gamma k}{(1-\gamma)}}\frac{1}{1-\gamma} \sum_{i = 0}^{\infty} i\Brackets{\frac{2^{\frac{1}{(1-\gamma)}}}{e}}^i \notag\\
        &+ k 2^{\frac{\ln \gamma k}{(1-\gamma)}}\frac{\ln \gamma k}{(1-\gamma)}\sum_{i = 0}^{\infty} \Brackets{\frac{2^{\frac{1}{(1-\gamma)}}}{e}}^i\notag\\
        &=k 2^{\frac{\ln \gamma k}{(1-\gamma)}} \log_2 k \frac{1}{1-r} \notag\\
        &+ k 2^{\frac{\ln \gamma k}{(1-\gamma)}}\frac{1}{1-\gamma} \frac{r}{(1-r)^2}\notag \\
        &+ k 2^{\frac{\ln \gamma k}{(1-\gamma)}}\frac{\ln \gamma k}{(1-\gamma)}\frac{1}{1-r}\notag\\
        &\in \OMath{k^{1 + 1/(\log_2 (1/\gamma)}\log k}.
        \label{eq_anytime_bound_t2}
    \end{align}

    Combining (\ref{eq_anytime_bound_t1}) and (\ref{eq_anytime_bound_t2}) in (\ref{eq_anytime_expected_t}) concludes the proof of \Cref{theo_runtime_self_adj_mr}.
    
\end{proofE}

%% file: sect_Experiments.tex
\section{Experimental Anytime Analysis of \BinVal}
\label{sect:experiments}
In this section, we present an experimental anytime analysis of the optimization of \BinVal, complementing the theoretical analysis provided in the previous sections.
We compare the anytime performances of the analyzed standard \EA with fixed mutation rate $1/n$ with the \EA with adjusting mutation rate that selects it greedy optimal, and the \EA with self-adjusting mutation rate.
In the following experiments, we average the results over 100 independent runs.
After a search for well-performing parameters for the \EA with self-adjusting mutation rate, we came up with~$a=1.85$ and $b=0.898$.

To perform the fixed-target analysis, we compare the number of iterations the algorithms require to optimize the first $k \leq n$ bits in the bit string that gets optimized using \BinVal as fitness function.
In Figure~\ref{exp:fig:optimized-bits-n-2048}, we show the number of iterations averaged over the 100 independent runs to optimize the $k \leq n$ leading bits for $n = 2048$.
As already indicated by our theoretical analysis in Sections~\ref{sect:ea-adjusting} and~\ref{sect:ea-self-adj}, the \EA with adjusting mutation rate and the~\EA with self-adjusting mutation rate clearly outperform the standard \EA for $k \in o(n)$.
This can be explained using \cite[Theorem~3.1]{WITT_2013} and the properties of \BinVal, since the two variants with adjusting mutation rate are able, in comparison to the \EA with fixed $\chi = \frac{1}{n}$, to use a mutation rate closer to the optimal one in the iterations until the first $k$ bits are set to $1$.
However, for $k > \frac{n}{2}$, the standard \EA is more efficient to optimize the first $k$ bits.

An extended experimental comparison of the \EA and its self-adjusting variant can be found in the appendix.

\begin{figure}[htp]
    \centering
    \includegraphics[width=\columnwidth]{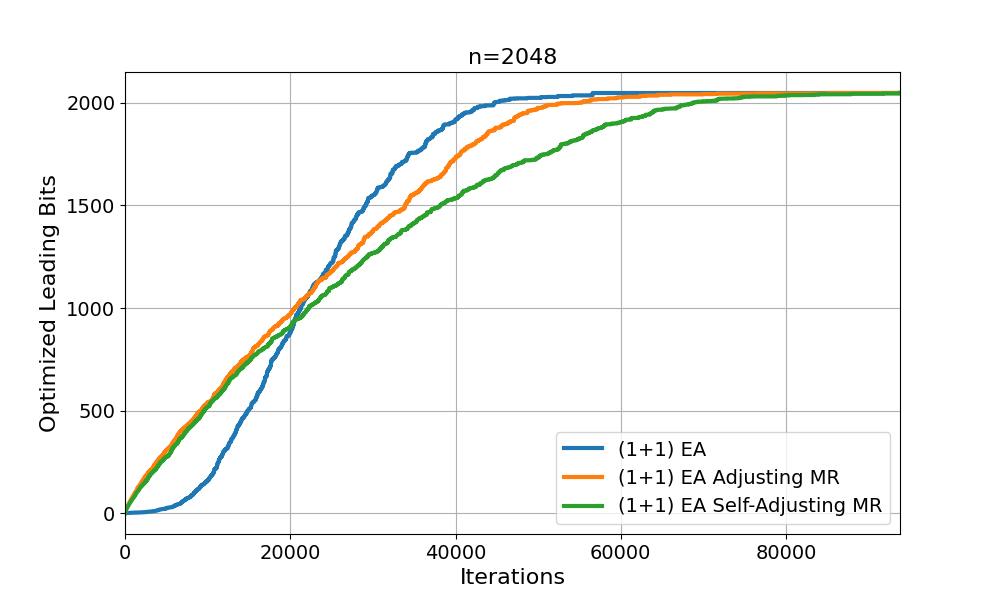}
    \caption{Comparison of the standard \EA, the \EA with adjusting mutation rate and the \EA with self-adjusting mutation rate. The number of iterations to optimize the first $k$ leading bits.}
    \label{exp:fig:optimized-bits-n-2048}
    \Description[\EA with adjusting and self-adjusting mutation rate outperforms \EA for $k\in o(n)$.]{Smaller number of iterations to optimize the first $k\in o(n)$ for the \EA with adjusting and self-adjusting mutation rate in comparison to the standard \EA.}
\end{figure}

%% file: sect_Conclusion.tex
\section{Conclusion}
\label{sect:conclusion}

Fixed-target analysis has both practical and theoretical relevance, delivering more insights regarding the anytime performance of algorithms.
We investigated this problem in this paper for \BinVal as the fitness function of several heuristic search algorithms.

With our theoretical analysis, we improved and extended the fixed-target run time bounds of the standard \EA and sig-cGA.
Furthermore, we considered a variant of the \EA with adjusting mutation rate and showed that, even for the self-adjusted handling of the mutation rate without further knowledge, the run time of the fixed-target analysis is independent of $n$ for all~$k~\in~o(n)$ simultaneously.
Our conjecture, based on the experimental investigations for realistic input sizes, is that the \EA with self-adjusting mutation rate optimizes the first $k\in o(n)$ bits in $\Theta(k\log k)$.

An open challenge for the fixed-target analysis of randomized search heuristics is to extend our results to general linear functions and compare their anytime performance.

%% file: sect_ExperimentsAppendix.tex
\section{Experimental Anytime Analysis of \BinVal}
\label{sec:appendix:expAnyAna}
To extend the experiments in Section \ref{sect:experiments}, we consider again the fixed-target run time of the standard \EA with fixed and the \EA with self-adjusting mutation rate, but execute both algorithms for different $n = 2^m$ with $m \in \CurlyBrackets{8,...12}$.
The results are again averaged over 100 independent runs.
In this setting, we first evaluate the maximum number of leading bits that the \EA with self-adjusting mutation rate optimizes with fewer iterations than the standard \EA with fixed mutation rate.
As shown in Figure~\ref{exp:fig:turning-points}, this number of leading bits is in relation to $n$ constant up to sampling variance (in the range of $[0.58, 0.61]$ for $n \in [256, 4096]$) for different values of $n$ as shown in Figure~\ref{exp:fig:turning-points}.
The results indicate that, independent of $n$, the \EA with self-adjusting mutation rate optimizes on average the first $k$ bits faster than the standard \EA for all $k \leq 0.6 n$.
Note that Figure~\ref{exp:fig:optimized-bits-n-2048} shows that for $n=2048$ the plots of the average number of iterations to optimize the first $k$ bits of the \EA and its self-adjusting variant intersect roughly at $k=0.5n$.
The difference to the average turning point at $k = 0.6 n$ as shown in Figure~\ref{exp:fig:turning-points} can be explained by the fact that the relative turning points are averaged over all 100 independent runs, while Figure~\ref{exp:fig:optimized-bits-n-2048} shows the average performance of the two algorithms.

\begin{figure}[H]
    \centering
    \includegraphics[width=\columnwidth]{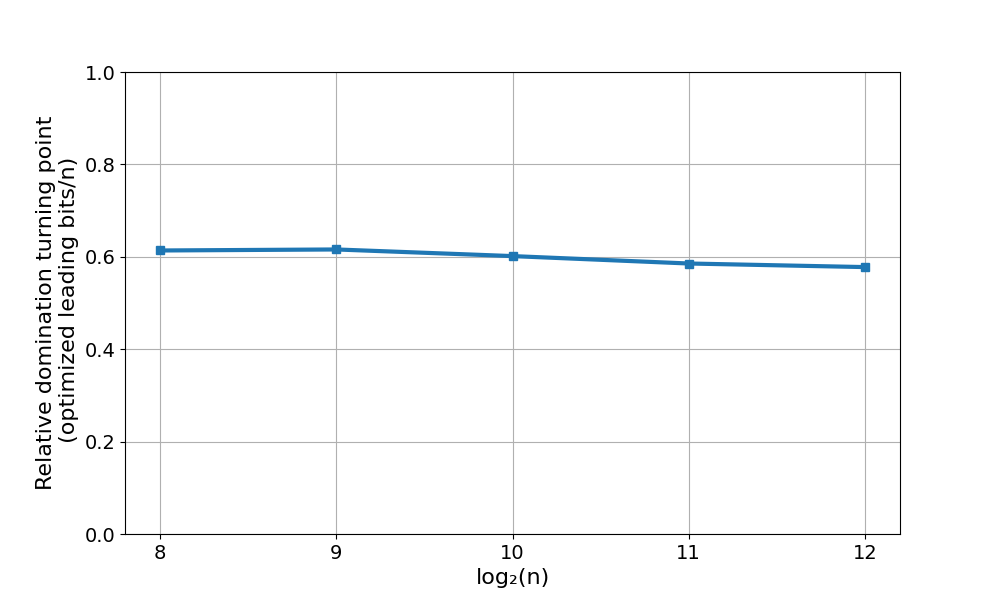}
    \caption{The number of leading bits in relation to $n$ until the \EA with self-adjusting mutation rate requires fewer iterations to optimize them than the standard \EA. Note that the $x$-axis is logarithmically scaled.}
    \label{exp:fig:turning-points}
    \Description[Constant relative turning point of \EA and \EA with self-adjusting mutation rate.]{For an increasing number of $n$, the relative number of leading bits that the \EA with self-adjusting mutation rate optimizes faster than the \EA stays constant.}
\end{figure}

In the same setting, we evaluate the average number of iterations to optimize the first $k \in \CurlyBrackets{16,32,64,128}$ bits.
Figure~\ref{exp:fig:iterations-until-k} shows the results for the standard \EA and \EA with self-adjusting mutation rate.
For the \EA with self-adjusting mutation rate, the number of iterations to optimize the first $k$ bits stays constant for a fixed~$k$ and larger values of $n$ and seems therefore independent of~$n$.
In comparison, the run time of the standard \EA significantly increases for a fixed $k$ and larger $n$. 
The strength of the \EA with self-adjusting mutation rate can be explained using the theoretical analysis in \Cref{sect:ea-self-adj}, which proves the independence of $n$ for the fixed-target analysis of $k \in o(n)$.

\begin{figure}[H]
    \centering
    \includegraphics[width=\columnwidth]{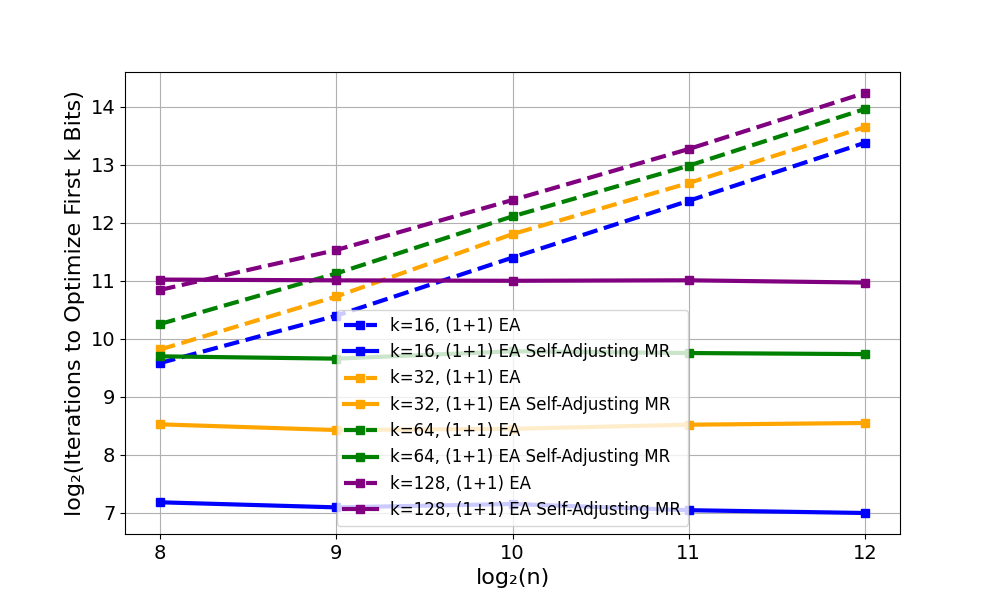}
    \caption{Comparison of the standard \EA (dashed lines) and the \EA with self-adjusting mutation rate (solid lines). For different $k$, we record the number of iterations until the first $k$ bits are optimized for different $n$. Each line represents a combination of a $k$ and an algorithm. Note that both axes are logarithmically scaled.}
    \Description[\EA with self-adjusting mutation rate optimizes first $k\in o(n)$ independent of $n$.]{Comparison for fixed values of $k \in o(n)$. \EA with self-adjusting mutation rate optimizes them independently of $n$, for \EA it clearly depends on $n$.}
    \label{exp:fig:iterations-until-k}
\end{figure}